\newtheorem{theorem}{Theorem}
\newtheorem{lemma}{Lemma}
\newtheorem{proposition}{Proposition}
\theoremstyle{definition}
\newcommand{\numreviewers}{m}
\newcommand{\numpapers}{n}
\newcommand{\numrev}{\numreviewers}
\newcommand{\numpap}{\numpapers}
\newcommand{\revset}{\mathcal{R}}
\newcommand{\papset}{\mathcal{P}}
\newcommand{\stagetwofrac}{\beta}
\newcommand{\rank}{k}
\newcommand{\adrev}{r}
\newcommand{\adpap}{p}
\newcommand{\numsamples}{K}
\newcommand{\simmat}{S}
\newcommand{\objfn}{f}
\newcommand{\sobjfn}{\overline{\objfn}}
\newcommand{\loadscale}{\mu}
\newcommand{\scaledavgval}{s^{(\loadscale)}}
\newcommand{\scaledval}{v^{(\loadscale)}}
\newcommand{\optavgval}{s^{(1)}}
\newcommand{\optval}{v^{(1)}}
\newcommand{\revload}{\ell_{rev}}
\newcommand{\papload}{\ell_{pap}}
\newcommand{\valfn}{Q}
\newcommand{\adassign}{A}
\newcommand{\optterm}{oracle optimal}
\title{Randomly Splitting Reviewers is Near-Optimal\\ in Two-Stage Paper Reviewing} 
\title{Near-Optimal Reviewer Splitting in Two-Phase Paper Reviewing \\and Conference Experiment Design} 
\author{
    Steven Jecmen \\
    Carnegie Mellon University \\
    \texttt{sjecmen@cs.cmu.edu} \\
    \and
    Hanrui Zhang \\
    Duke University \\
    \texttt{hrzhang@cs.duke.edu} \\
    \and
    Ryan Liu \\
    Carnegie Mellon University \\
    \texttt{ryanliu@andrew.cmu.edu} \\
    \and   
    Fei Fang \\
    Carnegie Mellon University \\
    \texttt{feif@cs.cmu.edu} \\
    \and
    Vincent Conitzer \\
    Duke University \\
    \texttt{conitzer@cs.duke.edu} \\
    \and
    Nihar B. Shah \\
    Carnegie Mellon University \\
    \texttt{nihars@cs.cmu.edu} \\
}
\date{} 
\date{}
\begin{document}

\maketitle

\begin{abstract}
Many scientific conferences employ a two-phase paper review process, where some papers are assigned additional reviewers after the initial reviews are submitted. 
Many conferences also design and run experiments on their paper review process, where some papers are assigned reviewers who provide reviews under an experimental condition. 
In this paper, we consider the question: how should reviewers be divided between phases or conditions in order to maximize total assignment similarity? 
We make several contributions towards answering this question. 
First, we prove that when the set of papers requiring additional review is unknown, a simplified variant of this problem is NP-hard. 
Second, we empirically show that across several datasets pertaining to real conference data, dividing reviewers between phases/conditions uniformly at random allows an assignment that is nearly as good as the \optterm{} assignment. This uniformly random choice is practical for both the two-phase and conference experiment design settings. 
Third, we provide explanations of this phenomenon by providing theoretical bounds on the suboptimality of this random strategy under certain natural conditions. From these easily-interpretable conditions, we provide actionable insights to conference program chairs about whether a random reviewer split is suitable for their conference. 
\end{abstract}

\section{Introduction} \label{sec:intro} 
Peer review is a widely-adopted method for evaluating scientific research. Careful assignment of reviewers to papers is critically important in order to ensure that reviewers have the requisite expertise and that the resulting reviews are of high quality. 
At large scientific conferences, the paper assignment is usually chosen by solving an optimization problem. 
Given a set of papers, a set of reviewers, and a similarity matrix consisting of scores representing the level of expertise each reviewer has for each paper, the standard paper assignment problem is to find an assignment of reviewers to papers that maximizes total similarity, subject to constraints on the reviewer and paper loads. This standard paper assignment problem is a simple matching problem and so can be efficiently solved (for example, through linear programming). 
Our work is motivated by two scenarios that arise in the context of paper assignment in conference peer review.  

{\bf Motivation 1: Two-phase paper assignment.} Many conferences (e.g., AAAI 2021-2022, IJCAI 2022) have adopted a two-phase review process. After the initial reviews are submitted, a subset of papers proceed to a second phase of reviews with additional reviewers assigned. There are a variety of reasons that a two-phase reviewing process can be helpful. For example, the process can be used to triage papers based on reviews in the first phase. This can allow the conference to solicit additional reviews only on papers that obtained sufficiently high ratings in the first phase and have any chance of getting accepted (as done at AAAI 2021). 
The second phase can also help focus on evaluation of the papers in the ``messy middle''---the papers at the borderline between acceptance and rejection. 
This messy middle model~\cite{price2014}, which hypothesizes that the acceptance decisions for some percentage of submitted papers are effectively random, was proposed after the NeurIPS 2014 experiment~\cite{lawrence2014} in order to explain the observed inconsistency in acceptance decisions.  
Additional reviewers could improve the evaluation of these papers to more accurately discern which should be accepted. 
Later analysis of the NeurIPS 2015 and 2016 review process found that the size of the messy middle in these conferences was $45\%$ and $30\%$ of submissions respectively~\cite{shah2017design}. 
A second phase of reviews can also be used to help compensate for reviewers who were unresponsive or minimally responsive in the first phase, who can no longer review due to problems in their personal lives, who discovered conflicts they had with a paper assigned to them and recused themselves from it, etc.

In all of these cases, the set of papers that will require additional review is unknown beforehand. 
While some venues choose to recruit new reviewers after knowing which papers proceed to phase two, the tight timeline of many conferences makes it hard to recruit new reviewers after phase one. 
In SIGMOD 2019~\cite{ailamaki2019sigmod}: ``\textit{Additional reviews were solicited manually by the chairs and this was a huge time sink, especially when some reviewers refused to take on the additional assignment. The additional review solicitation needs to be automated and reviewer expectations need to be set appropriately beforehand}.'' 
For this reason, it is best if all the reviewers are recruited at the beginning, and a key question is then how to assign reviewers to papers in the first phase such that enough review capacity is saved for the second phase. 

{\bf Motivation 2: Conference experiment design.} Reviewers also need to be split into two groups when conferences run controlled experiments on the paper review process.  Conferences often run such experiments to test changes to the review process. For example, the WSDM 2017 conference conducted an experiment to test the effects of single-blind versus double-blind reviewing~\cite{Tomkins12708}. 
As another example, the aforementioned NeurIPS 2014 experiment tested the consistency of acceptance decisions by providing some papers with a second set of reviews from a separate group of reviewers. 
In these experiments, all papers receive reviews conducted in the usual manner (the control condition), but a random subset of papers are additionally assigned reviewers who provide reviews under an experimental condition. In the NeurIPS 2014 experiment, a random $10\%$ of papers were put in the experimental condition and received a second set of reviews. In the WSDM 2017 experiment, the subset of papers was the full paper set; that is, all papers were reviewed under both single-blind and double-blind conditions. 
The key question is then how to divide the reviewers between the control and experimental conditions. 
As in the NeurIPS 2014 and WSDM 2017 experiments, this is often done randomly for statistical purposes. However, conferences still want to ensure that the resulting assignment of papers to reviewers is of high similarity.

As our results apply to both the two-phase and experiment design settings, we will use the generic terminology of ``stages'' to refer to both phases and conditions simultaneously across the two settings. 


{\bf Problem outline.} In this paper, we formally analyze the two-stage paper assignment problem, which encompasses both above motivations. 
As stated earlier, the standard paper assignment problem is to maximize the total similarity of the assignment subject to load constraints and is efficiently solvable. 
However, in the two-stage paper assignment problem, we must additionally decide how much of each reviewer's capacity should be saved to review papers in the second stage. We assume that the \emph{fraction} of papers that will need additional reviews is known and that the set of second-stage papers is chosen uniformly at random.

Because of constraints present in each setting, the maximum-similarity paper assignment across the two stages cannot be achieved. In the two-phase setting, the set of papers that will need to be reviewed in the second stage is unobserved when the first-stage assignment is made, making the problem one of stochastic optimization. In the experiment design setting, reviewers are often randomized between stages for statistical purposes. We show that a simple strategy for choosing reviewers to save for the second stage performs near-optimally in terms of assignment similarity and can be used in either setting.



\begin{figure*}[t!]
    \centering
    \begin{subfigure}[t]{0.58\textwidth}\includegraphics[width=1\textwidth]{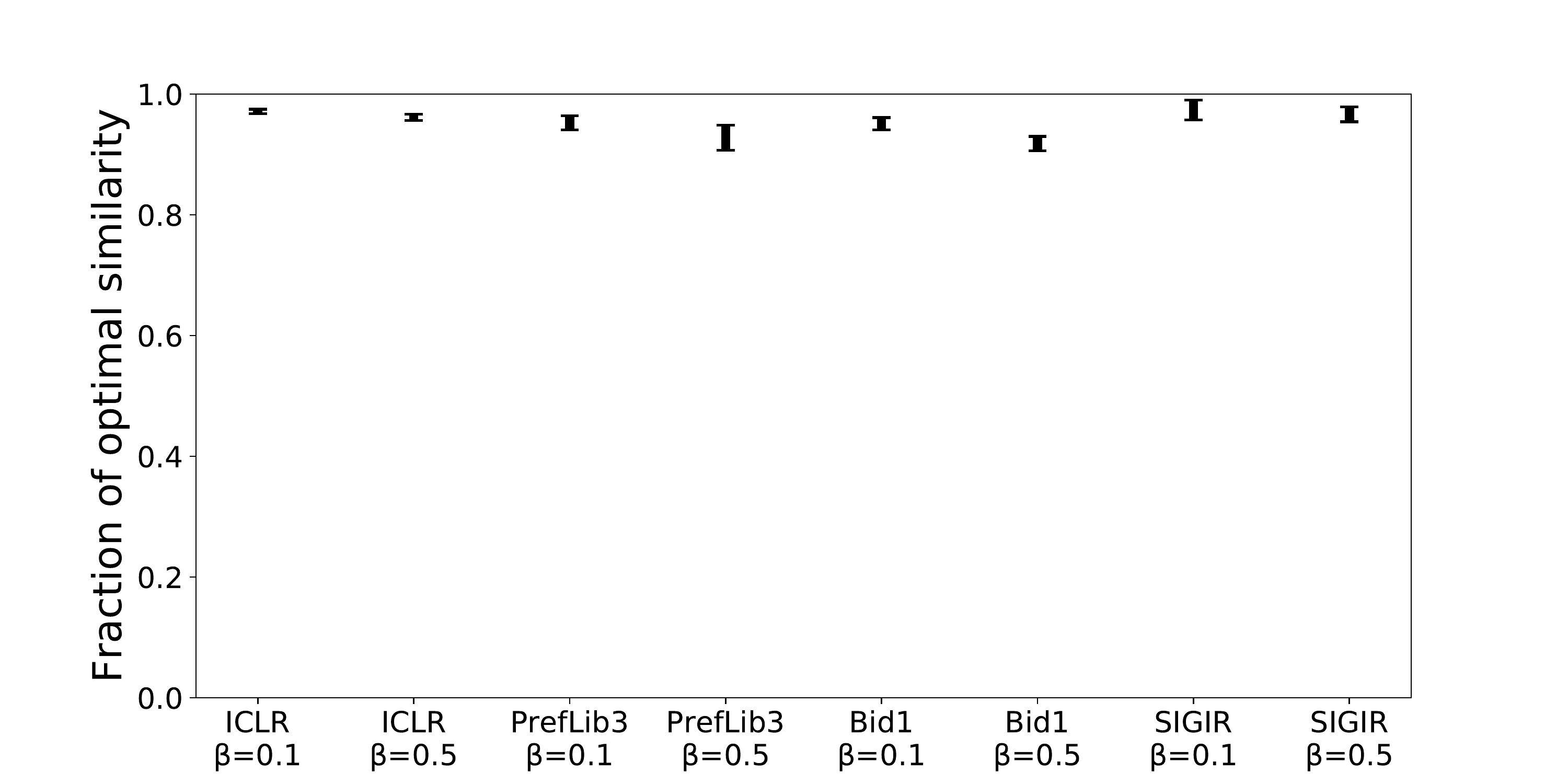}\caption{Second-stage papers drawn uniformly at random}\label{fig:real} \end{subfigure}\quad
    \begin{subfigure}[t]{0.38\textwidth}\includegraphics[width=1\textwidth]{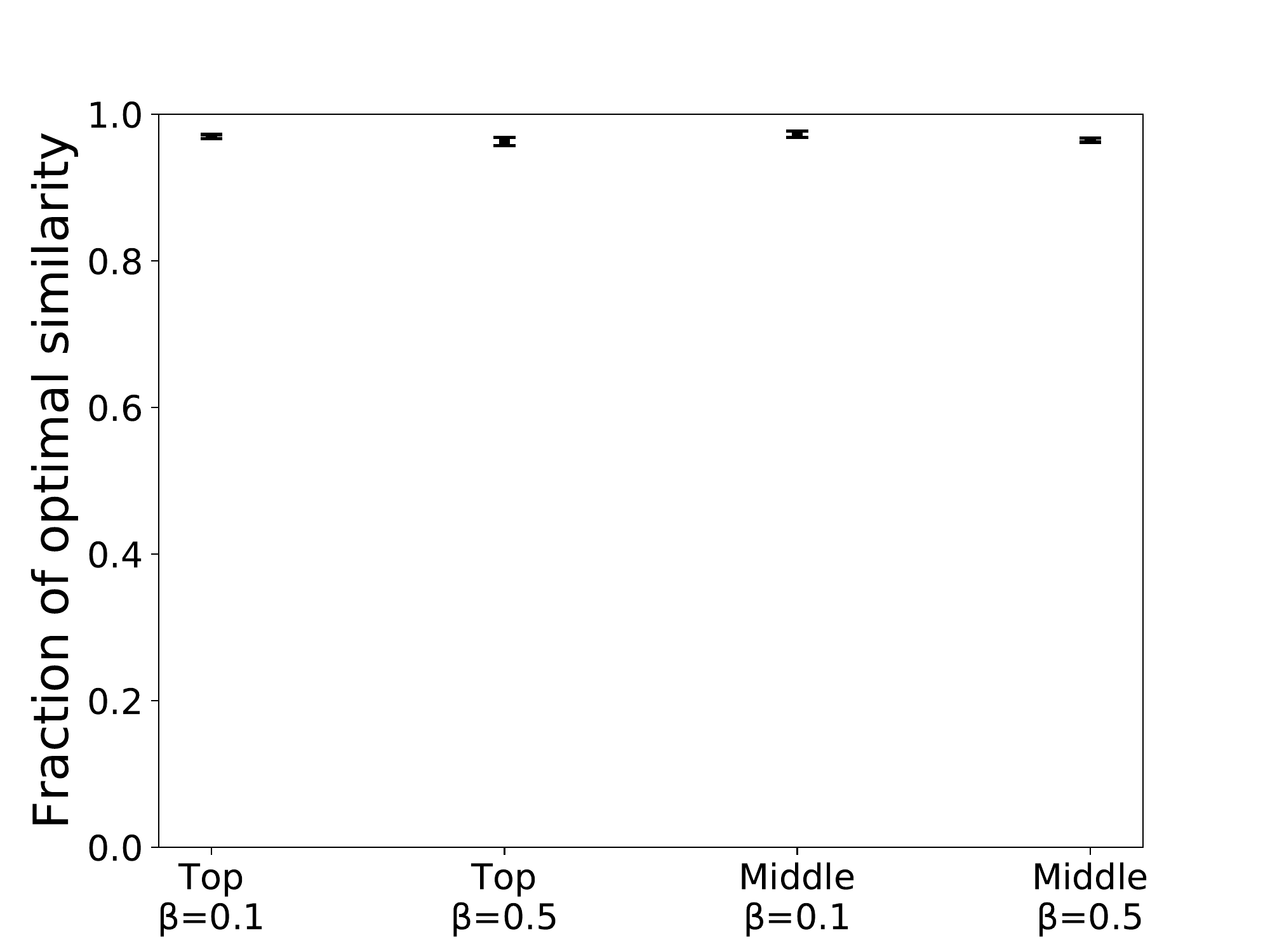} \caption{Second-stage papers chosen as the top- or middle-scoring papers from ICLR }\label{fig:scores} \end{subfigure} 
    \caption{Range of assignment similarities found over $10$ random reviewer splits on real conference data, as a fraction of the \optterm{} assignment's similarity (computed after observing the second-stage papers). $\stagetwofrac$ indicates the fraction of papers in the second stage. The
    ICLR similarities~\cite{xu2018strategyproof} (911 papers, 2435 reviewers) are constructed from text-matching between papers and reviewers' past work, PrefLib3~\cite{MaWa13a} (176 papers, 146 reviewers) and Bid1~\cite{meir2020market} (600 papers, 400 reviewers) similarities are constructed from bidding data, and SIGIR~\cite{karimzadehgan2008multi} similarities (73 papers, 189 reviewers) are constructed from reviewer and paper subject areas.
    } \label{fig:perf}
\end{figure*}

{\bf Contributions.} 
Our contributions are as follows.

    First, we identify and formulate the two-stage paper assignment problem, an issue of practical importance to modern conferences, with applications to two-phase paper assignment and conference experiment design (Section~\ref{sec:problem}). 
    
    Second, we prove that a simplified version of the problem is NP-hard, suggesting that the problem may not be efficiently solvable  (Section~\ref{sec:hard}).
    
    Third, we empirically show that a very simple ``random split'' strategy, which chooses a subset of reviewers uniformly at random to save for the second stage, gives near-optimal assignments on real conference similarity scores (Section~\ref{sec:approachexp}). 
    This result is summarized in Figure~\ref{fig:perf}, which shows the assignment similarity achieved using random split as compared to the \optterm{} assignment (which views the set of second-stage papers before optimally assigning reviewers across both stages) for several datasets. We find that all random reviewer splits achieve at least $90\%$ of the \optterm{} solution's similarity on all datasets and at least $94\%$ on all but two experiments. 
    These results hold across similarities constructed via a variety of methods used in practice (including text-matching, bidding, and subject areas), indicating that random split is robust across methods of similarity construction. They also hold both when the second-stage papers are drawn uniformly at random (as in Figure~\ref{fig:real}) and when they are selected based on the review scores of the papers (as in Figure~\ref{fig:scores}). 
    In practice, this means that program chairs planning a two-phase review process or a conference experiment can simply split reviewers across the two phases/conditions at random without concerning themselves with the potential reduction in assignment quality. 
    
    We also show that this good performance is not achieved in general: there exist similarity matrices on which random split performs very poorly (Section~\ref{sec:approachcounterex}). 
    
    Fourth, we theoretically \textit{explain} why random split performs well on our real conference similarity matrices by deriving theoretical bounds on the suboptimality of this random strategy under certain natural conditions (Sections~\ref{sec:cond1} and \ref{sec:cond2}). 
    We consider two such sufficient conditions here, which are met by our datasets: if the reviewer-paper similarity matrix is low-rank, and if the similarity matrix allows for a high-value assignment (in terms of total similarity) with a large number of reviewers assigned to each paper. 
    From these results, we give key actionable insights to conference program chairs to help them decide--well before the reviewers and/or papers are known--if random split is likely to perform well in their conference.

All of the code for our empirical results is freely available online.\footnote{\url{https://github.com/sjecmen/multistage_reviewing_bounds}}

\section{Related Work} \label{sec:relwork}
Our work assumes that the ``similarities'' between reviewers and authors are given. In practice, there are several ways in which these similarities are computed, and different program chairs often make different decisions on how this computation is done. The similarities are generally computed using one or more of the following three sources of data:
\begin{itemize}
    \item {\bf Text-matching of papers:} Natural language processing techniques~\cite{mimno07topicbased,liu14graphpropagation,rodriguez08coauthorsip,tran17expertsuggestion,charlin13tpms} are used to match the text of the submitted paper with the text of the reviewers' past papers.
     \item {\bf Subject areas:} The program chairs create a list of subject areas relevant to the conference. Each reviewer selects a subset of these subject areas that are representative of their expertise, and each submitted paper is accompanied by the authors selecting the subject areas relevant to the paper. 
     \item {\bf Bids:} A number of conferences adopt a bidding system, where reviewers are shown a list of (some of) the papers that are submitted to the conference (and which do not conflict with them) and asked to indicate the papers which they are willing to review~\cite{cabanac2013capitalizing,fiez2020super,meir2020market}. 
\end{itemize}
If more than one such source of data is used by the conference, they are combined in a manner deemed suitable by the program chairs~\cite{shah2017design}. These computed similarities are then used to assign reviewers to papers. By far the most popular method of doing this assignment is to solve an optimization problem that maximizes the sum of the similarities of the assigned reviewer-paper pairs, subject to constraints on the reviewer and paper loads~\cite{charlin13tpms,Long13gooadandfair, goldsmith07aiconf, tang10constraied,flach2010kdd,taylor2008optimal}. Given its widespread popularity, we analyze this sum-similarity objective in our paper. 

That being said, there are other objectives that are also proposed for automated assignment using the similarities, such as the max-min fairness objective~\cite{Garg2010papers,stelmakh2018forall,kobren19localfairness}. A recent work~\cite{jecmen2020manipulation} proposes assignments via optimizing the sum similarity but with some randomness in order to prevent fraud in peer review. Another line of work~\cite{alon2011sum,xu2018strategyproof,aziz2019strategyproof} proposes assigning reviewers to papers in a manner that a reviewer cannot influence the outcome of their own paper by manipulating the reviews they provide. Finally, in practice, the conference organizers may also additionally apply manual tweaks to the outputs of any such automated procedure.

At a high level, the problem we study in the two-phase setting shares several common characteristics with problems in online (stochastic) matching~\cite{karp1990optimal,feldman2009online, dickerson2012dynamic, dickerson2018allocation, brubach2016new}, often considered in the context of ride-sharing, kidney exchange, or internet advertising. 
Particularly related to our results is the line of research on two-stage stochastic matching~\cite{kong2006factor,katriel2008commitment,escoffier2010two,lee2020maximum,feng2021two}, which generally focuses on providing algorithms with tight approximation ratios that hold for any (i.e., worst-case) problem instances. 
To the best of our knowledge, the specific stochastic matching problem we consider (which arises in the context of paper assignment for peer review) has not previously been studied. 
Additionally, in contrast to this line of work, we aim to provide and justify simple and practical solutions (such as choosing reviewers uniformly at random) based on data-dependent conditions likely to hold in real-world paper assignment instances. 

The simplified version of our problem considered in Section~\ref{sec:hard} can be seen as an instance of maximizing a submodular function subject to a cardinality constraint (see Appendix~\ref{apdx:submod}). The paper~\cite{buchbinder2014submodular} gives an approximation algorithm achieving an approximation ratio of no greater than $0.5$. However, this guarantee is very weak in the paper assignment setting since it can be trivially achieved by maximizing similarity in the first stage alone. 

One motivation for our work is that of running controlled experiments in peer review. Controlled experiments pertaining to peer review are conducted in many different scientific fields~\cite{armstrong1980unintelligible,pier2017your,teplitskiyasocial,ceci1982peer,patat2019distributed}, including several controlled experiments recently conducted in computer science~\cite{lawrence2014,Tomkins12708,stelmakh2020resubmissions,stelmakh2020herding}. These experiments have also led to a relatively nascent line of work on careful design of experimental methods for peer review~\cite{stelmakh2019testing,stelmakh2020catch}, and our work sheds some light in this direction in terms of trading off assignment quality with randomization in the assignment. Some other experiments in conferences~\cite{madden2006impact,tung2006impact,manzoor2020uncovering} do not operate under controlled settings, but exploit certain changes in the conference policy such as a switch from single blind to double blind reviewing (i.e., natural experiments). 
Overall, experiments offer important insights into the peer review process; see~\cite{shah2021survey} for more discussion on challenges in peer review and some solutions. 


\section{Problem Formulation} \label{sec:problem} 
In this section, we formally define the two-stage paper assignment problem. 
Given a set of $\numpap$ papers $\papset = [\numpap]$ and a set of $\numrev$ reviewers $\revset = [\numrev]$, define $\simmat \in [0, 1]^{\numrev \times \numpap}$ as the similarity scores between each reviewer and paper. An assignment of papers to reviewers is represented as a matrix $\adassign \in \{0, 1\}^{\numrev \times \numpap}$, where $\adassign_{\adrev \adpap} = 1$ if reviewer $\adrev$ is assigned to paper $\adpap$ and $\adassign_{\adrev \adpap} = 0$ otherwise. 
In the standard paper assignment problem, the objective is to find an assignment $\adassign$ of reviewers to papers such that the total similarity $\sum_{\adrev \in \revset, \adpap \in \papset} \adassign_{\adrev\adpap} \simmat_{\adrev\adpap}$ is maximized, subject to constraints that each paper is assigned exactly a certain load of reviewers, each reviewer is assigned to at most a certain load of papers, and any reviewer-paper pairs with a conflict of interest are not assigned~\cite{charlin13tpms, charlin2012framework, goldsmith07aiconf, flach2010kdd, kobren19localfairness}. 
In this work, we accommodate conflicts of interest by assuming the corresponding similarities are set to $0$. 
This problem can be formulated as a min-cost flow problem or as a linear program, and can be efficiently solved. 

In a two-stage assignment, all papers $\papset$ require a certain number of reviewers in the first stage and a subset of papers $\papset_2 \subseteq \papset$ require additional review in the second stage. We assume that $\papset_2$ consists of a fixed fraction $\stagetwofrac$ of papers and is drawn uniformly at random from $\papset$. Specifically, for some $\stagetwofrac \in \{\frac{1}{\numpap}, \dots, \frac{\numpap}{\numpap}\}$, we assume that $\papset_2 \sim \mathcal{U}_{\stagetwofrac\numpap}(\papset)$, the uniform distribution over all subsets of size $\stagetwofrac \numpap$ of $\papset$. 
In the two-phase setting, the fraction $\stagetwofrac$ itself can be viewed as a parameter that the program chairs set based on available reviewer resources, or it can be estimated from past editions of the conference. 
Our empirical results detailed in Section~\ref{sec:approachexp} also cover the case where papers are chosen for the second phase based on their first-phase review scores.  
In the conference experiment design setting, the value of $\stagetwofrac$ and the uniform distribution of $\papset_2$ are both experiment design choices. The choice of a uniform distribution for $\papset_2$ is common, as in the NeurIPS 2014 and WSDM 2017 experiments. 
The question we analyze is: how should reviewers be assigned to papers across the two stages? 

Before continuing further, we introduce some notation. For subsets $\revset' \subseteq \revset$ and $\papset' \subseteq \papset$, desired paper load $\papload \in \mathbb{Z}_+$, and maximum reviewer load $\revload \in \mathbb{Z}_+$, define $\mathcal{M}(\revset', \papset'; \revload, \papload) \subseteq \{0, 1\}^{\numrev \times \numpap}$ as the set of valid assignment matrices on $\revset'$ and $\papset'$. Formally, $\adassign \in \mathcal{M}(\revset', \papset'; \revload, \papload)$ if and only if $\sum_{\adrev \in \revset'} \adassign_{\adrev \adpap} = \papload$ for all $\adpap \in \papset'$, $\sum_{\adpap \in \papset'} \adassign_{\adrev \adpap} \leq \revload$ for all $\adrev \in \revset'$, and $\adassign_{\adrev \adpap} = 0$ for all $(\adrev, \adpap) \not\in \revset' \times \papset'$.

The two-stage paper assignment problem is to maximize the total similarity of the paper assignment across both stages.
Without loss of generality, we instead consider the mean similarity so that later results will be easier to interpret. 
Fix a stage one paper load $\papload^{(1)}$, a stage two paper load $\papload^{(2)}$, and an overall reviewer load $\revload$ such that $\papload^{(1)} \numpap + \papload^{(2)} \stagetwofrac\numpap \leq \revload \numrev$ (i.e., the number of reviews required by papers is no greater than the number of reviews that can be supplied by reviewers). 
Given $\papset_2$, the \optterm{} assignment has mean similarity
\begin{align*}
    \valfn^*(\papset_2) = \max_{\substack{A \in \mathcal{M}(\revset, \papset; \revload, \papload^{(1)}), \\ B \in \mathcal{M}(\revset, \papset_2; \revload, \papload^{(2)})}}& \frac{1}{\papload^{(1)} \numpap + \papload^{(2)} \stagetwofrac\numpap} \left[  \sum_{\adrev \in \revset, \adpap \in \papset} A_{\adrev \adpap} \simmat_{\adrev \adpap} +  \sum_{\adrev \in \revset, \adpap \in \papset_2} B_{\adrev \adpap} \simmat_{\adrev \adpap} \right] \\
    \text{subject to } &\sum_{\adpap \in \papset} A_{\adrev \adpap} + B_{\adrev \adpap} \leq \revload \qquad \forall \adrev \in \revset.
\end{align*}
The last constraint ensures that each reviewer's assignment across both stages does not exceed the maximum reviewer load. Just like the standard paper assignment problem, the \optterm{} assignment for a given $\papset_2$ can be found efficiently. 
However, in both the two-phase and experiment design settings, this \optterm{} assignment is either unachievable or undesirable. In the two-phase setting, the set of papers $\papset_2$ requiring additional review is unknown until after the stage one assignment is chosen. Thus, the \optterm{} assignment cannot be computed beforehand. 
In the experiment design setting, the assignment of reviewers to conditions is commonly randomized in order to gain statistical power, as was done in the WSDM 2017 and NeurIPS 2014 experiments. Thus, a deterministic choice of assignment may not be desirable. Additionally, depending on the experiment setup, it may not be possible for a reviewer to review papers in both conditions. In what follows, we use this \optterm{} assignment value as an unachievable baseline for comparison. 

We instead consider simple strategies for the two-stage assignment problem that choose a subset $\revset_2 \subseteq \revset$ of reviewers to save for the second stage without observing $\papset_2$, leaving reviewers $\revset_1 = \revset \setminus \revset_2$ to be assigned to papers in the first stage. Unlike the \optterm{} assignment, such strategies are feasible to implement in both settings since they do not require knowledge of $\papset_2$, do not split reviewer loads across conditions, and allow for a random choice of $\revset_2$. The mean similarity of the optimal assignment when reviewers $\revset_2$ and papers $\papset_2$ are in the second stage is 
\begin{align*}
    \valfn(\revset_2, \papset_2) &= \frac{1}{\papload^{(1)} \numpap + \papload^{(2)}\stagetwofrac \numpap} \left[ \max_{\substack{A \in \mathcal{M}(\revset \setminus \revset_2, \papset;\\ \revload, \papload^{(1)})}} \sum_{\adrev \in \revset \setminus \revset_2, \adpap \in \papset} A_{\adrev \adpap} \simmat_{\adrev \adpap} +  \max_{\substack{B \in \mathcal{M}(\revset_2, \papset_2;\\ \revload, \papload^{(2)})}} \sum_{\adrev \in \revset_2, \adpap \in \papset_2} B_{\adrev \adpap} \simmat_{\adrev \adpap} \right].
\end{align*}
We require that $\revload |\revset_2| \geq \papload^{(2)} \stagetwofrac\numpap$ and $\revload (\numrev - |\revset_2|) \geq \papload^{(1)} \numpap$ for feasibility in both stages. 
Given $\revset_1$, $\revset_2$, and $\papset_2$, the optimal paper assignment in each stage can be efficiently computed using standard methods. Thus, the the difficulty of the problem lies entirely in choosing $\revset_2$. 

The expected mean similarity of the optimal assignment when saving reviewers $\revset_2$ for the second stage is 
\begin{align*}
    \objfn(\revset_2) = \mathbb{E}_{\papset_2 \sim \mathcal{U}_{\stagetwofrac\numpap}(\papset)}\left[\valfn(\revset_2, \papset_2)\right]. 
\end{align*}
We can also evaluate the {\it suboptimality} of $\revset_2$ as compared to the \optterm{} assignment as 
\begin{align*}
    \valfn^*(\papset_2) - \valfn(\revset_2, \papset_2), \qquad \qquad \text{where~~} \papset_2 \sim \mathcal{U}_{\stagetwofrac\numpap}(\papset).
\end{align*}
Note that $\valfn^*$ and $\valfn$ are bounded in $[0, 1]$, so that both $\objfn$ and the suboptimality are also bounded in $[0, 1]$. 

In our theoretical analysis, for simplicity, we assume that $\revload = \papload^{(1)} = \papload^{(2)} = 1$, leaving this implicit in $\objfn$, $\valfn$, and $\valfn^*$ throughout the paper. We also assume $\numrev = (1 + \stagetwofrac) \numpap$ in our analysis unless specified otherwise, so that $|\revset_2| = \stagetwofrac\numpap$. 
The intuition behind our results carries over to the cases of general loads and excess reviewers, which are covered by our empirical results in Section~\ref{sec:approachexp}. 
All asymptotic bounds are given as $\numpap$ grows.

\section{Hardness} \label{sec:hard} 
In the two-phase setting, the \optterm{} assignment is unachievable because $\revset_2$ must be chosen before observing $\papset_2$. Therefore, conferences must choose $\revset_2$ to maximize $\objfn$, the expected mean similarity of the assignment across both stages. 
In this section, we demonstrate that maximizing a variant of $\objfn$ is NP-hard, indicating that it is unlikely that $\objfn$ can be optimized efficiently. 

First, note that evaluating $\objfn(\revset_2)$ requires computing an expectation over the draw of $\papset_2$, which naively requires evaluating a sum over the optimal assignment value for ${\numpap \choose \stagetwofrac \numpap} $ possible choices of $\papset_2$. This number is exponential in the input size, so an efficient algorithm for this problem would have to either optimize $\objfn$ without evaluating it or compute this expectation without computing the optimal assignment for each possible $\papset_2$. 

Instead of attempting to optimize $\objfn$ exactly, a standard approach from two-stage stochastic optimization is to simplify the problem by sampling as follows~\cite{dai2000convergence, king1993asymptotic}. First, take some fixed number of samples $\papset_2^{(1)}, \dots, \papset_2^{(\numsamples)}$ from $\mathcal{U}_{\stagetwofrac\numpap}(\papset)$. Then, rather than optimizing an average over all $\papset_2$ in the support of $\mathcal{U}_{\stagetwofrac \numpap}(\papset)$, choose $\revset_2$ to optimize an average over only all sampled sets:
\begin{align*}
    \sobjfn(\revset_2) &= \frac{1}{\numsamples} \sum_{i=1}^\numsamples \valfn(\revset_2, \papset_2^{(i)}).
\end{align*} 
This is a natural simplification of the two-stage paper assignment problem, because the sum in the objective is now taken over only a constant $\numsamples$ subsets rather than an exponential number. 
However, this problem is still not efficiently solvable, as the following theorem shows.
\begin{theorem} \label{thm:samplednph}
It is NP-hard to find $\revset_2 \subseteq \revset$ such that $\sobjfn(\revset_2)$ is maximized, even when $\numsamples = 3$. 
\end{theorem}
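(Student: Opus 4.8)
The plan is to reduce from a known NP-hard problem whose structure matches the "choose a subset of size $\stagetwofrac\numpap$" constraint on $\revset_2$ together with the combinatorial behavior of optimal matchings. A natural candidate is \textsc{Vertex Cover} or \textsc{Independent Set}, or else \textsc{3-Dimensional Matching} / \textsc{Exact Cover by 3-Sets} (the number $3$ in the theorem statement, i.e.\ $\numsamples = 3$, is suggestive of a reduction where each of the three samples $\papset_2^{(1)}, \papset_2^{(2)}, \papset_2^{(3)}$ plays the role of one of three "coordinates" or "color classes"). First I would fix the loads $\revload = \papload^{(1)} = \papload^{(2)} = 1$ as in the analysis, so that a stage-one assignment is a perfect matching of $\revset_1$ onto $\papset$ and a stage-two assignment is a perfect matching of $\revset_2$ onto $\papset_2^{(i)}$; this means $|\revset_2|$ must equal $\stagetwofrac\numpap$ for feasibility, which pins down the cardinality of the subset we are choosing. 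The similarity matrix $\simmat$ is then the gadget: I would make most entries $0$ or $1$ and use the combinatorial constraint that a reviewer saved for stage two can only contribute value on a paper that lies in $\papset_2^{(i)}$, so the choice of $\revset_2$ must simultaneously "cover" structure across all three sampled sets.

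**Constructing the gadget.** The key steps, in order: (1) given an instance of the source problem (say a graph $G = (V,E)$ for \textsc{Vertex Cover}, or a set system), build reviewers and papers encoding vertices/elements and edges/sets; (2) design the three sample sets $\papset_2^{(1)}, \papset_2^{(2)}, \papset_2^{(3)}$ so that each corresponds to a "test" — a reviewer $r$ included in $\revset_2$ earns its full second-stage value across the three samples only if it corresponds to a chosen vertex, and earns a deficit otherwise; (3) set similarities so that the first-stage term is essentially a constant (every reviewer not in $\revset_2$ can always be perfectly matched with value $1$, or with a controlled value), so that $\sobjfn(\revset_2)$ is an affine function of the second-stage matching values on the three samples plus a correction; (4) argue that $\sobjfn(\revset_2)$ hits a target threshold $\tau$ if and only if the source instance is a YES-instance. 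A clean way to make step (3) work is to have a large pool of "generic" reviewers whose similarity is $1$ for every paper, so removing $\stagetwofrac\numpap$ of them for stage two costs nothing in stage one; the hardness is forced entirely by making certain *special* reviewers valuable only when paired with specific papers appearing in specific samples.

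**The main obstacle.** The hard part will be controlling the *interaction* between the three stage-two matchings and the stage-one matching through the shared reviewer pool: because $\objfn$ and hence $\sobjfn$ only ever sees the *optimal* matching value for each configuration, the reduction must ensure that the optimizer cannot "cheat" by using flexibility in one stage to compensate for a bad choice in $\revset_2$. Concretely, I would need the gadget to enforce that (a) the best stage-one matching value is insensitive to *which* $\stagetwofrac\numpap$ reviewers were removed, so that $\sobjfn$ decomposes cleanly, and (b) within each sample $\papset_2^{(i)}$, the best matching of $\revset_2$ onto $\papset_2^{(i)}$ exactly decodes the combinatorial object (e.g.\ equals the number of edges of $G$ covered by the chosen vertices, or detects whether the chosen set is an exact cover). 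Getting both simultaneously, with integer loads all equal to $1$ and $\papset_2^{(i)}$ sets of the prescribed size $\stagetwofrac\numpap$, is the delicate balancing act; I expect this forces padding papers/reviewers and a careful count so that $\numrev = (1+\stagetwofrac)\numpap$ and all three $\papset_2^{(i)}$ have exactly $\stagetwofrac\numpap$ papers. Once the gadget is fixed, verifying the YES/NO equivalence and the NP membership of the decision version is routine.

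**Why $\numsamples = 3$.** I would highlight at the end why three samples suffice (and presumably why $\numsamples = 1$ or $2$ might be easy or require a different argument): with three samples one can encode, for each special reviewer, a small "truth table" of three bits recording its behavior, which is exactly enough to distinguish the configurations arising in the source problem (for instance, three coordinates in \textsc{3-Dimensional Matching}, or the three literal-slots of a clause in a \textsc{3SAT}-style reduction). This matches the phrasing "even when $\numsamples = 3$," which signals that the authors view $3$ as a small constant that is already enough for hardness, and suggests the reduction's gadget has a natural three-fold symmetry.
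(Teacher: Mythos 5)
Your high-level plan matches the paper's proof: the paper reduces from 3-Dimensional Matching, uses the three samples $\papset_2^{(1)}, \papset_2^{(2)}, \papset_2^{(3)}$ as the three coordinate classes $X, Y, Z$, sets loads to $1$ with $0/1$ similarities, makes tuple-reviewers have similarity $1$ exactly with the three papers of their tuple, and decides hardness via the threshold ``is $\sobjfn(\revset_2) = 1$ achievable?''. So the reduction source, the role of $\numsamples = 3$, and the overall decoding argument are all the same.

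One concrete detail in your sketch would break the reduction if implemented as written: padding with ``a large pool of generic reviewers whose similarity is $1$ for every paper.'' Such reviewers would also achieve full value in \emph{stage two} for any sample, so the optimizer could place $\stagetwofrac\numpap$ of them in $\revset_2$ and attain $\sobjfn(\revset_2) = 1$ regardless of whether the 3DM instance is a YES-instance --- exactly the cheating you warn about in your ``main obstacle'' paragraph. The paper's gadget avoids this by padding on the \emph{paper} side instead: it adds $|T|-s$ dummy papers (with similarity $1$ to all reviewers) that appear only in stage one, so they absorb the unselected tuple-reviewers there, and it adds one ``specialty reviewer'' per element-paper whose similarity is $1$ with only that single paper. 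A specialty reviewer placed in $\revset_2$ necessarily scores $0$ on at least one of the three samples, so full second-stage value forces $\revset_2$ to consist entirely of tuple-reviewers, whose selection then decodes the 3DM solution. With that substitution (dummy papers and one-paper specialty reviewers in place of universally good reviewers), your argument goes through and coincides with the paper's.
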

\begin{proof}[Proof sketch]
We reduce from 3-Dimensional Matching~ \cite{karp1972reducibility}, which asks if there exists a way to select $k$ tuples from a set $T \subseteq X \times Y \times Z$ where $|X| = |Y| = |Z| = k$ such that all elements of $X$, $Y$, and $Z$ are selected exactly once. We construct $3$ samples of second-stage papers corresponding to $X$, $Y$, and $Z$ respectively, and construct reviewers corresponding to elements of $T$. These reviewers have $1$ similarity with the papers in their tuple, and $0$ similarity with all other papers. Thus, checking if there exists a choice of $\revset_2$ which gives full expected similarity in the second stage would require solving 3-Dimensional Matching. We add additional reviewers and papers to ensure that this choice of $\revset_2$ is optimal over both stages. 
\end{proof}
The full proof is presented in Appendix~\ref{apdx:samplednph}.

Since it is NP-hard to find the optimal $\revset_2$ even when estimating the objective by sampling three random choices of $\papset_2$, this suggests that the original objective $\objfn$ may be hard to optimize efficiently. Therefore, in the two-phase assignment setting, we instead look for efficient approximation algorithms.

\section{Our Approach: Random Split} \label{sec:approach}
Our proposed approach for finding a two-stage assignment is extremely simple: choose $\revset_2$ uniformly at random  (i.e., $\revset_2 \sim \mathcal{U}_{\stagetwofrac \numpap}(\revset)$). We refer to this as a ``random split'' of reviewers into the two review stages. 

In the two-phase setting, random split is an efficient approximation algorithm for the problem of optimizing $\objfn$, which is likely difficult (as shown in Section~\ref{sec:hard}). Because random split does not execute $\objfn$, it produces a two-stage paper assignment without needing to estimate $\objfn$ by sampling. 

In the conference experiment design setting, our proposed random-split strategy corresponds to a uniform random choice of reviewers for the experimental condition. Recall that in this setting, assigning reviewers to conditions uniformly at random is already a common experimental setup. The performance of random split on $\objfn$ therefore indicates how well this common setup performs in terms of the expected assignment similarity. 

In our theoretical results, we often refer to the {\it suboptimality of random split}, defined as the suboptimality of $\revset_2$ chosen via random split when $\papset_2$ is chosen uniformly at random:
\begin{align}
    \valfn^*(\papset_2) - \valfn(\revset_2, \papset_2), \qquad \qquad \text{where~~} \papset_2 \sim \mathcal{U}_{\stagetwofrac\numpap}(\papset), \revset_2 \sim \mathcal{U}_{\stagetwofrac\numpap}(\revset) \label{eq:suboptrs}.
\end{align}
Recall from Section~\ref{sec:problem} that $\valfn^*(\papset_2)$ is the mean similarity of the \optterm{} assignment given second-stage papers $\papset_2$ and that $\valfn(\revset_2, \papset_2)$ is the mean similarity of the optimal assignment given second-stage reviewers and papers $\revset_2, \papset_2$. Additionally, many of our results evaluate the expected mean similarity under random split:
\begin{align*}
    \mathbb{E}_{\revset_2 \sim \mathcal{U}_{\stagetwofrac\numpap}(\revset)} \left[\objfn(\revset_2)\right] = \mathbb{E}_{\revset_2 \sim \mathcal{U}_{\stagetwofrac\numpap}(\revset), \papset_2 \sim \mathcal{U}_{\stagetwofrac\numpap}(\papset)}\left[\valfn(\revset_2, \papset_2)\right]. 
\end{align*}

In the following subsections, we first elaborate on the good performance random split displays empirically before showing that there exist cases where random split performs very poorly. 

\subsection{Empirical Performance} \label{sec:approachexp}
As introduced earlier in Figure~\ref{fig:perf}, random split performs very well in practice on four real conference similarity matrices. The first is a similarity matrix recreated using text-matching on ICLR 2018 data~\cite{xu2018strategyproof}. 
The second is constructed using reviewer bid data for an AI conference (conference 3) from PrefLib dataset MD-00002~\cite{MaWa13a}. The third (denoted Bid1) is a sample of the bidding data from a major computer science conference~\cite{meir2020market}. In both of these bidding datasets, we transformed ``yes,'' ``maybe,'' and ``no response'' bids into similarities of $1$, $0.5$, and $0.25$ respectively, as is often done in practice~\cite{shah2017design}. 
The fourth similarity matrix is constructed from the subject areas of ACM SIGIR 2007 papers and the subject areas of the past work of their authors (assumed to be the reviewers)~\cite{karimzadehgan2008multi}; we set the similarity between each reviewer and paper to be equal to the number of matching subject areas out of the $25$ total, normalized so that each entry of the matrix is in $[0, 1]$. In Appendix~\ref{apdx:experiments}, we present further empirical results including additional datasets. 
In Appendix~\ref{apdx:controlledexp}, we present additional empirical results particularly relevant to the conference experiment design setting.

We run several experiments, each corresponding to a choice of dataset and $\stagetwofrac$. Each experiment consists of $10$ trials, where in each trial we sample a random reviewer split and a set of second-stage papers. We then present the range of assignment values achieved across the trials as percentages of the \optterm{} assignments for each trial. The \optterm{} assignment for a trial is found by choosing the optimal assignment of reviewers across both stages after observing $\papset_2$. 
We set paper loads of $2$ in each stage (as done in AAAI 2021), and limit reviewer loads to be at most $6$ (a realistic reviewer load~\cite{shah2017design}). Since these datasets have excess reviewers, we choose $\revset_2$ to have size $\frac{\stagetwofrac}{1 + \stagetwofrac} \numrev$ so that the proportions of reviewers and papers in the second stage are equal.


In Figure~\ref{fig:real}, $\papset_2$ is drawn uniformly at random in each trial (as in the problem formulation). We see that all trials of random split achieve at least $90\%$ of the \optterm{} solution's similarity on all datasets, with all trials on all but two experiments achieving at least $94\%$. 
We see additionally that the randomness of the reviewer choice does not cause much variance in the value of the assignment, as there is at most a $5\%$ difference between the minimum and maximum similarity (as a percentage of \optterm{}) for each experiment. Note that this is true despite the fact that the similarity matrices of the different datasets are constructed in several different ways, indicating that random split is robust across methods of similarity construction. 

In Figure~\ref{fig:scores}, $\papset_2$ is chosen as a fixed set for all trials based on the actual review scores received by the papers at ICLR 2018~\cite{he2020openreview} (as review scores were not available for other datasets). We run trials where either the top-scoring papers or the messy-middle papers are given additional reviews. Since about $37\%$ of papers were accepted, we define the messy middle as the range of $\frac{\stagetwofrac}{1+\stagetwofrac}\numrev$ papers centered on the $63$rd-percentile paper when ordered by score. These are sets of papers that a conference may potentially want to assign additional reviewers to. In all cases, random split shows consistently good performance, similar to when $\papset_2$ was drawn uniformly at random. All trials achieve at least $95\%$ of the \optterm{} similarity, with at most a $2\%$ difference between the minimum and maximum for each experiment. This suggests that the good performance of random reviewer split naturally holds in these practical cases.

\subsection{A Counterexample} \label{sec:approachcounterex}
The good results random split shows in practice may be somewhat surprising because random split does not perform well in all settings. The following theorem shows that 
for any $\stagetwofrac$, there exist instances of the two-stage paper assignment problem where the suboptimality of random split~\eqref{eq:suboptrs} is $\Omega(1)$ in expectation.
\begin{theorem} \label{thm:generallb}
For any constant $\stagetwofrac \in [0, 1]$, there exists $\numpap_0$ such that for all $\numpap \geq \numpap_0$ where $\stagetwofrac \numpap \in \mathbb{Z}_+$, there exist instances of the two-stage paper assignment problem where the suboptimality of random split is at least $\frac{\stagetwofrac^4}{(1+\stagetwofrac)^3}$ in expectation.
\end{theorem}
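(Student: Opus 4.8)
The plan is to construct an explicit $0/1$ similarity matrix with a near-perfect-matching structure. Give each of the $\numpap$ papers its own dedicated ``primary'' reviewer, with similarity $1$ with that paper and $0$ with all others; then designate $\stagetwofrac\numpap$ of the papers as ``popular'' and give each popular paper an additional dedicated ``backup'' reviewer, again with similarity $1$ with its own paper and $0$ elsewhere. This uses $\numpap + \stagetwofrac\numpap = (1+\stagetwofrac)\numpap = \numrev$ reviewers, matching the analysis setting, and all similarities lie in $[0,1]$. The idea is that the \optterm{} assignment, which sees $\papset_2$, can save exactly the backups for the second stage and keep every full-similarity match, whereas a random split instead scatters primaries and backups across the two stages, wasting full-similarity matches with constant probability.

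First I would pin down $\valfn^*(\papset_2)$. The crucial observation is that within any single stage (a reviewer set matched to a paper set of equal size, with no conflicts), any similarity-$1$ assignment edge must run from a paper's dedicated reviewer to that paper; hence the optimal similarity of that stage equals exactly the number of its papers that have at least one of their dedicated reviewers present, the remaining similarity-$0$ edges merely completing a perfect matching. Consequently the \optterm{} assignment places all primaries with their papers in stage one (full similarity everywhere) and all backups with their popular papers, so that each popular paper in $\papset_2$ is matched to its backup (similarity $1$) while each ordinary paper in $\papset_2$ contributes $0$ since the primaries are consumed in stage one. A slot-counting argument (each paper can supply at most one full-similarity slot in stage one, and only popular papers have a spare dedicated reviewer for stage two) shows this is optimal, giving $\valfn^*(\papset_2) = \frac{\numpap + |\papset_2 \cap \mathcal{P}^*|}{(1+\stagetwofrac)\numpap}$ where $\mathcal{P}^*$ is the popular set.

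Next I would bound $\valfn(\revset_2,\papset_2)$ under a random split via the same per-stage characterization, comparing paper-by-paper the full-similarity slots obtained by the \optterm{} assignment and by random split. A popular paper whose two dedicated reviewers land on the \emph{same} side of the split loses a unit whenever it is in $\papset_2$ (it can be matched to a good reviewer in only one of the two stages), and loses a unit unconditionally when both reviewers land in $\revset_2$; an ordinary paper whose primary lands in $\revset_2$ while the paper is not in $\papset_2$ also loses a unit. Since $\revset_2 \sim \mathcal{U}_{\stagetwofrac\numpap}(\revset)$ and $\papset_2 \sim \mathcal{U}_{\stagetwofrac\numpap}(\papset)$ are independent, summing these losses over all papers and taking expectations --- the relevant probabilities being hypergeometric and within $O(1/\numpap)$ of $\bigl(\tfrac{1}{1+\stagetwofrac}\bigr)^2$, $\bigl(\tfrac{\stagetwofrac}{1+\stagetwofrac}\bigr)^2$, and $\tfrac{\stagetwofrac}{1+\stagetwofrac}$ --- yields an expected suboptimality of $\tfrac{\stagetwofrac(1-\stagetwofrac+\stagetwofrac^2)}{(1+\stagetwofrac)^2} - O(1/\numpap)$. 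This is at least $\tfrac{\stagetwofrac^4}{(1+\stagetwofrac)^3}$ once $\numpap \ge \numpap_0$, because $(1-\stagetwofrac+\stagetwofrac^2)(1+\stagetwofrac) = 1+\stagetwofrac^3 \ge \stagetwofrac^3$.

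The step I expect to be the main obstacle is the per-stage matching bookkeeping: proving cleanly that the optimal within-stage similarity is \emph{exactly} the count of papers with an available dedicated reviewer (upper bound: at most one similarity-$1$ edge per paper; lower bound: pick one available dedicated reviewer per such paper --- all distinct, since each reviewer is dedicated to a unique paper --- and extend to a perfect matching along similarity-$0$ edges, which is always possible as the two sides are equal in size and no conflicts are present), and then keeping the subsequent per-paper expectation computation honest, in particular handling the mild dependence induced by sampling without replacement and confirming that the finite-$\numpap$ corrections are lower-order --- which is why the bound is only claimed for $\numpap \ge \numpap_0$.
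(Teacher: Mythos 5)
Your construction is exactly the paper's (the paper gives paper $i$ similarity $1$ with reviewer $i$, and with reviewer $\numpap+i$ when $i\le\stagetwofrac\numpap$), and your analysis — computing $\valfn^*(\papset_2)$ by a one-good-reviewer-per-paper-per-stage counting argument, then doing the same case analysis over where a popular paper's two dedicated reviewers land and whether the paper is in $\papset_2$ — is the same argument the paper uses, arriving at the same asymptotic loss $\frac{\stagetwofrac}{1+\stagetwofrac}$ per popular paper and $\frac{\stagetwofrac(1-\stagetwofrac)}{1+\stagetwofrac}$ per ordinary paper before relaxing to $\frac{\stagetwofrac^4}{(1+\stagetwofrac)^3}$. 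The proposal is correct (your final formula $\frac{\stagetwofrac(1-\stagetwofrac+\stagetwofrac^2)}{(1+\stagetwofrac)^2}$ is the right non-double-counted loss, despite the slightly ambiguous phrasing of the two loss clauses for popular papers), and it matches the paper's proof in all essentials.
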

\begin{proof}[Proof sketch]
Consider $\stagetwofrac=1$. We construct a similarity matrix where every reviewer has similarity $1$ with only $1$ paper, and all papers have similarity $1$ with only $2$ reviewers. The optimal reviewer split puts the two good reviewers for each paper in separate stages and always achieves a mean similarity of $1$. Random split puts both good reviewers in the same stage with at least constant probability for each paper, giving a constant mean similarity $< 1$.
\end{proof}
The full proof is presented in Appendix~\ref{apdx:generallb}.

Note that the above lower bound on the objective value of random split holds even in the easy case of $\stagetwofrac = 1$, where the problem could be solved simply through standard paper assignment methods. This case is particularly relevant in the conference experiment setting, where all papers are commonly reviewed under both conditions (as in the WSDM 2017 experiment). 

Although the above lower bound demonstrates that random split cannot hope to do well in general, the constructed example is unrealistic for real conferences. 
However, program chairs may understandably want some guarantee that a random reviewer split will work well for their conference before deciding to use it. Ideally, this guarantee should be given before the precise similarity matrix for the conference is known, since the similarities may not be known in full until late in the planning process. 

In the following sections, we provide such guarantees, thereby showing that the good performance of random split is not just an artifact of our specific datasets. We focus our attention on two sufficient conditions on the similarity matrix under which we show random split performs well. These conditions are natural for real similarity matrices, implying that random split will perform well for many real conferences, whether in the context of a two-phase review process or a conference experiment. Using these conditions, we provide actionable insights to program chairs based on simple properties of their conference's similarities that they may have intuition about. These insights are designed to be useful well before the full paper and reviewer sets are known.

\section{Condition 1: Low-Rank Similarity Matrix} \label{sec:cond1}
The first condition we consider is that the similarity matrix $\simmat$ has low rank $\rank$. This condition naturally arises in practice when reviewer-paper similarities are calculated from the number of subject area agreements between reviewers and papers; in such cases, the rank is no greater than the number of subject areas. For example, the SIGIR similarity matrix used in Figure~\ref{fig:perf} is constructed in this way and thus has rank no greater than 25 (the number of subject areas). In this section, we provide asymptotic upper and lower bounds on the suboptimality of random split for constant-rank similarity matrices.

\subsection{Theoretical Bounds}
We first provide an upper bound on the suboptimality of random split~\eqref{eq:suboptrs}. This shows that random reviewer splits perform well on constant-rank similarity matrices, including the SIGIR similarity matrix examined earlier. More precisely, the following theorem shows that if the similarity matrix $\simmat$ has constant rank $\rank$, the suboptimality of random split is at most $\widetilde{O}(\numpap^{-\frac{1}{2}})$ when $\rank = 1$, $\widetilde{O}(\numpap^{- \frac{1}{2} + o(1)})$ when $\rank = 2$, and $\widetilde{O}(\numpap^{- \frac{1}{\rank} + o(1)})$ when $\rank \geq 3$ with high probability.
\begin{theorem} \label{thm:lowrankub}
Consider any constants $\stagetwofrac \in [0, 1]$ and $\rank \in \mathbb{Z}_+$. There exists $\numpap_0$ and constants $C, \eta$ such that, for any $\numpap \geq \numpap_0$ where $\stagetwofrac \numpap \in \mathbb{Z}_+$ and for any similarity matrix $\simmat \in [0, 1]^{(1+\stagetwofrac)\numpap \times \numpap}$ of rank $\rank$, the suboptimality of random split is at most:
\begin{itemize}
    \item $C (\log\numpap)^{\eta} \numpap^{-\frac{1}{2}}$ if $\rank = 1$
    \item $C (\log\numpap)^{\eta} \numpap^{-\frac{1}{2} + \frac{1}{\log\log\numpap}}$ if $\rank = 2$
    \item $C (\log\numpap)^{\eta} \numpap^{-\frac{1}{\rank} + \frac{1}{\log\log\numpap}}$ if $\rank \geq 3$
\end{itemize}
with probability at least $1 - \frac{1}{\numpap}$ 
(where $\log$ indicates the base-$2$ logarithm). 
\end{theorem}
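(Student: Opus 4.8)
The plan is to recast the two-stage assignment value in the language of optimal transport and to reduce the suboptimality of random split to a sum of Wasserstein-$1$ distances between empirical measures and uniformly random subsamples of them; the three regimes ($\rank=1$, $\rank=2$, $\rank\ge 3$) in the statement are then exactly the classical rates of convergence of empirical measures in $W_1$ on $\mathbb{R}^{\rank}$. \textbf{Geometric reduction.} Since $\simmat$ has rank $\rank$, write $\simmat_{\adrev\adpap}=\langle u_\adrev,w_\adpap\rangle$ with $u_\adrev,w_\adpap\in\mathbb{R}^{\rank}$. Because $|\langle u_\adrev,w_\adpap\rangle|\le 1$ for all pairs, the symmetric convex hulls of $\{\pm u_\adrev\}$ and $\{\pm w_\adpap\}$ lie in each other's polars, so applying John's theorem to put one of them in near-isotropic position lets me assume $\|u_\adrev\|_2\le\sqrt{\rank}$ and $\|w_\adpap\|_2\le 1$. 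Hence the reward $c(u,w)=\langle u,w\rangle$ is $\sqrt{\rank}$-Lipschitz in $w$ and $1$-Lipschitz in $u$ on the relevant bounded domains, and since $\rank$ is constant these Lipschitz constants are $O(1)$.

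\textbf{Optimal-transport reformulation and reduction to Wasserstein terms.} Let $\mathcal{T}(\mu,\nu)$ be the supremum of $\mathbb{E}_{\pi}[\langle u,w\rangle]$ over couplings $\pi$ of probability measures $\mu,\nu$ on $\mathbb{R}^{\rank}$. By Birkhoff--von Neumann, optimal couplings of equal-cardinality empirical measures are matchings, so $\valfn^*(\papset_2)=\mathcal{T}(\mu_\revset,\mu_{\mathrm{slot}})$, where $\mu_\revset$ is uniform over $\{u_\adrev:\adrev\in\revset\}$ and $\mu_{\mathrm{slot}}=\tfrac{1}{1+\stagetwofrac}\mu_\papset+\tfrac{\stagetwofrac}{1+\stagetwofrac}\mu_{\papset_2}$ (each paper is a first-stage slot, and is a second-stage slot exactly when it lies in $\papset_2$); similarly the value of random split equals $\tfrac{1}{1+\stagetwofrac}\mathcal{T}(\mu_{\revset\setminus\revset_2},\mu_\papset)+\tfrac{\stagetwofrac}{1+\stagetwofrac}\mathcal{T}(\mu_{\revset_2},\mu_{\papset_2})$, with $\mu_{\revset\setminus\revset_2},\mu_{\revset_2},\mu_{\papset_2}$ the empirical measures of the random subsets. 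Kantorovich duality shows $\mathcal{T}$ inherits the cost's Lipschitz constants (the $c$-concave potentials do), so $\mathcal{T}$ is $\sqrt{\rank}$-Lipschitz in its second argument and $1$-Lipschitz in its first with respect to $W_1$. Using also $W_1(\mu_{\mathrm{slot}},\mu_\papset)\le\tfrac{\stagetwofrac}{1+\stagetwofrac}W_1(\mu_{\papset_2},\mu_\papset)$ and chaining the inequalities — comparing each $\mathcal{T}$ appearing in the split value to $\mathcal{T}(\mu_\revset,\mu_\papset)$, and that in turn to $\valfn^*(\papset_2)$ — I obtain, for every realization of the random subsets, $\valfn^*(\papset_2)-\valfn(\revset_2,\papset_2)\le 2\sqrt{\rank}\,W_1(\mu_{\papset_2},\mu_\papset)+W_1(\mu_{\revset\setminus\revset_2},\mu_\revset)+W_1(\mu_{\revset_2},\mu_\revset)$. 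Only the two-sided Lipschitz bound is used; no concavity of $\mathcal{T}$ is needed.

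\textbf{Wasserstein rates for subsamples.} Each of the three terms is $W_1$ between an $N$-point empirical measure ($N=\Theta(\numpap)$) supported in a ball of constant radius in $\mathbb{R}^{\rank}$ and the empirical measure of a uniformly random $\Theta(N)$-subset drawn without replacement. I would bound these by the standard dyadic-partition estimate $W_1(\mu_m,\mu)\lesssim\sum_{\ell\ge 0}2^{-\ell}\sum_{Q\text{ at scale }\ell}|\mu_m(Q)-\mu(Q)|$: at scale $\ell$ there are $O(2^{\rank\ell})$ nonempty cells, the discrepancies $|\mu_m(Q)-\mu(Q)|$ are controlled by hypergeometric concentration (sampling without replacement is at least as concentrated as with replacement), and a union bound over the $O(2^{\rank\ell})$ cells at each of the $O(\log\numpap)$ relevant scales is taken with failure probability $\tfrac{1}{3\numpap}$ per term. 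Summing the resulting series over $\ell$ yields $\widetilde{O}(\numpap^{-1/2})$ for $\rank=1$ (dominated by the coarsest scale), $\widetilde{O}(\numpap^{-1/2+1/\log\log\numpap})$ for $\rank=2$ (the critical case, with all scales contributing comparably), and $\widetilde{O}(\numpap^{-1/\rank+1/\log\log\numpap})$ for $\rank\ge 3$ (dominated by the finest resolved scale); a union bound over the three terms gives probability $1-1/\numpap$, and the $\rank$-dependent constants are absorbed into $C$ and $\eta$.

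\textbf{Main obstacle.} The crux is the last step: getting the sharp exponent $-1/\rank$ — a naive single-scale $\epsilon$-net argument (bucket reviewers at resolution $\epsilon$, pay $\epsilon$ per well-matched slot plus the per-bucket sampling error) only gives $-1/(\rank+2)$ — while keeping the overhead down to the stated $(\log\numpap)^{\eta}$ and $\numpap^{1/\log\log\numpap}$ factors. This requires carefully choosing the truncation depth of the dyadic hierarchy, balancing the diameter of the leaf cells (which bounds the unresolved transport) against the $\approx 2^{\rank L}$-fold union bound, and tracking the $\rank$-dependence through the concentration inequalities. A secondary technical point is the geometric reduction: verifying that a factorization with constant-norm factors always exists (the polar-body/John's-theorem argument), so that the Lipschitz constant of $\mathcal{T}$, and hence the final bound, carries no hidden $\numpap$-dependence.
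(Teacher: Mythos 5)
Your strategy is sound and would establish the stated bounds, but it is organized quite differently from the paper's proof, so a comparison is worth recording. The paper also begins from a constant-norm factorization $\simmat_{\adrev\adpap}=\langle u_\adrev,v_\adpap\rangle$ (citing Lemma~4 of Rothvoss, which gives $\rank^{1/4}$ on each side; your John's-theorem argument giving norms $1$ and $\sqrt{\rank}$ is an acceptable substitute, since only the product matters and $\rank$ is constant). From there, however, the paper works directly with assignments: it places each reviewer at the location of the paper that the \optterm{} assignment gives it, covers the ball with $L=\Theta(\log\log\numpap)$ layers of cells whose sizes follow a recursively chosen geometric schedule, matches reviewers to papers within common cells layer by layer using Hoeffding-type concentration of the hypergeometric per-cell counts, and charges each matched pair the $\ell_2$ distance to its \optterm{} paper. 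You instead route everything through the population transport cost $\mathcal{T}(\mu_\revset,\mu_\papset)$: Kantorovich duality (with the potential taken in its $c$-transform form so that it is globally Lipschitz and remains feasible under perturbation of the marginal) makes $\mathcal{T}$ Lipschitz in each marginal with respect to $W_1$, which splits the suboptimality into three empirical-$W_1$ terms for the random subsets $\revset\setminus\revset_2$, $\revset_2$, $\papset_2$, each then handled by the standard dyadic-partition estimate with the same hypergeometric concentration. The underlying multi-scale counting engine is identical---the paper itself remarks on the Wasserstein analogy---but your decomposition is more modular, plugs directly into the empirical-measure convergence literature, and, because you use a full dyadic hierarchy of $\Theta(\log\numpap)$ scales rather than the paper's $\log\log\numpap$-layer schedule, actually yields slightly sharper rates: the $\numpap^{1/\log\log\numpap}$ factors in the theorem are artifacts of the paper's layer bookkeeping and would be replaced by purely polylogarithmic overhead in your version. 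One bookkeeping point to make explicit in a full write-up: the intermediate quantities $\mathcal{T}(\mu_\revset,\mu_\papset)$ and $\mathcal{T}(\mu_\revset,\mu_{\papset_2})$ have marginals of unequal total atom counts and are therefore fractional transports rather than matchings; this is harmless because only the three endpoint quantities (the two stage assignments and the \optterm{} assignment) need to be realized integrally, which Birkhoff--von Neumann provides.
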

\begin{proof}[Proof sketch]
By Lemma 4 of~\cite{rothvoss2014direct}, a rank $\rank$ similarity matrix $\simmat \in [0, 1]^{\numrev \times \numpap}$ can be factored into vectors $u_{\adrev} \in \mathbb{R}^\rank$ for each $\adrev \in \revset$ and $v_{\adpap} \in \mathbb{R}^\rank$ for each $\adpap \in \papset$ such that $\simmat_{\adrev \adpap} = \langle u_{\adrev}, v_{\adpap} \rangle$, $||u_{\adrev}||_2 \leq \rank^{1/4}$, and $||v_{\adpap}||_2 \leq \rank^{1/4}$. We cover the $\rank$-dimensional ball containing all paper vectors with smaller cells, and consider a reviewer to be in one of these cells if the \optterm{} assignment (given $\papset_2$) assigns it to a paper in that cell. Using a concentration inequality on the number of reviewers and papers in each cell in each stage, we can upper bound the number of reviewers that we cannot match to papers within the same cell. We then increase the size of the cells and attempt to match the remaining reviewers in this way, continuing until all reviewers are matched. We upper bound the suboptimality of the resulting assignment by the L2 distance between a reviewer's assigned paper and the paper they are assigned by the \optterm{} assignment. 
\end{proof}
The constants $C$ and $\eta$ may depend on $\rank$, which is itself assumed to be constant. The full proof is presented in Appendix~\ref{apdx:lowrankub}.

For constant-rank similarity matrices, the suboptimality diminishes as $\numpap$ grows, unlike when the rank of the similarity matrix is unrestricted. 
Conceptually, our proof technique of finding a minimum-distance matching between two samples of points resembles the optimal transport problem solved when finding the Wasserstein distance between a probability distribution and its empirical measure. Thus, our upper bounds nearly match those found in the literature on the expected empirical $1$-Wasserstein distance for continuous measures (see~\cite{panaretos2019statistical} and references therein). 

We now complement the above upper bound with lower bounds on the suboptimality of random split~\eqref{eq:suboptrs} for constant rank similarity matrices. The following theorem shows that, for similarity matrices of constant rank $\rank$, the suboptimality of random split is $\Omega(\numpap^{-1/2})$ in expectation and $\Omega(\numpap^{-2/\rank})$ with high probability. 
\begin{theorem} \label{thm:lowranklb}
Suppose $\stagetwofrac = 1$. For any constant $\rank \in \mathbb{Z}_+$, there exists $\numpap_0$ and constants $C, \zeta$ such that for all $\numpap \geq \numpap_0$:
\begin{enumerate}[label=(\alph*)]
\item There exist instances of the two-stage paper assignment problem with similarity matrices $\simmat \in [0, 1]^{2\numpap \times \numpap}$ of rank $\rank$ such that the suboptimality of random split is at least $C \numpap^{-1/2}$ in expectation.
\item There exist instances of the two-stage paper assignment problem with similarity matrices $\simmat \in [0, 1]^{2\numpap \times \numpap}$ of rank $\rank$ such that the suboptimality of random split is at least $C \numpap^{-2/\rank}$ with probability $1 - \zeta e^{-\numpap/10}$. 
\end{enumerate}
\end{theorem}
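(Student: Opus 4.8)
Throughout, $\stagetwofrac = 1$, so $\papset_2 = \papset$ deterministically, $\numrev = 2\numpap$, and $\revload = \papload^{(1)} = \papload^{(2)} = 1$. The \optterm{} value is then $\tfrac{1}{2\numpap}$ times the maximum total similarity of any assignment of two distinct reviewers to every paper (a degree-$2$ matching on the paper side), while random split picks a uniformly random balanced partition $\revset = \revset_1 \sqcup \revset_2$ and its value is $\tfrac{1}{2\numpap}$ times the sum of two maximum perfect matchings, one between $\papset$ and $\revset_1$ and one between $\papset$ and $\revset_2$. So I only need to exhibit rank-$\rank$ matrices $\simmat$ on which these two quantities differ by the stated amount, using different constructions for the two parts.

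\textbf{Part (a).} The base instance has rank $1$: $\simmat_{\adrev\adpap} = \tfrac12 a_{\adrev} b_{\adpap}$ with $\numpap$ ``strong'' reviewers ($a_{\adrev}=1$), $\numpap$ ``weak'' reviewers ($a_{\adrev}=0$), $\numpap/2$ ``strong'' papers ($b_{\adpap}=1$), and $\numpap/2$ ``weak'' papers ($b_{\adpap}=0$), with $\numpap$ even. By the rearrangement inequality every maximum-weight (degree-$2$ or perfect) matching just assigns strong reviewers to strong papers as far as possible: the \optterm{} assignment puts two strong reviewers on every strong paper, for total similarity $\numpap/2$. Under random split, let $N_1 = |\revset_1 \cap \{\text{strong reviewers}\}|$ and $t = N_1 - \numpap/2$; the stage with the surplus of strong reviewers still fills only its $\numpap/2$ strong papers while the other stage runs short, so the two stages together contribute $(\numpap - |t|)/2$, i.e.\ the suboptimality equals $|t|/(2\numpap)$. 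Since $N_1$ is hypergeometric with variance $\Theta(\numpap)$ and controlled fourth moment, a Paley--Zygmund argument on $t^2$ gives $\mathbb{E}|t| = \Omega(\sqrt{\numpap})$, hence expected suboptimality $\Omega(\numpap^{-1/2})$. To obtain rank exactly $\rank$, add a rank-$(\rank-1)$ matrix with all entries of size $\numpap^{-10}$ on fresh coordinates; since every gap between distinct achievable match values in the base instance is $\Omega(1)$, this leaves all optimal matchings, and hence the suboptimality, unchanged up to $O(\numpap^{-9})$.

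\textbf{Part (b)} (the cases $\rank = 1,2$ already follow from (a)). Use a geometric instance: place the $\numpap$ papers at well-separated feature points in a low-dimensional domain (e.g.\ a regular grid), put two ``designated'' reviewers at each paper's point, and define $\simmat$ via a rank-$\rank$ kernel that equals $1$ when the two feature points coincide and whose mismatch cost $1-\simmat_{\adrev\adpap}$ is a fixed power of the feature distance, with the embedding dimension calibrated against $\rank$ so that the exponent comes out to $2/\rank$. The \optterm{} assignment gives every paper its two designated reviewers, for total similarity $\numpap$. Under random split, a constant-probability computation shows that for a linear number of papers both designated reviewers land in the same stage; in each stage such a paper must be matched to a reviewer designated for a different, far-away paper, so the extra loss equals (up to constants) the cost of a minimum-cost matching between two random linear-size subsets of the feature configuration. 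A lower bound of Ajtai--Koml\'os--Tusn\'ady / Talagrand type for such random matchings shows this cost is $\Omega(\numpap \cdot \numpap^{-2/\rank})$, giving suboptimality $\Omega(\numpap^{-2/\rank})$ in expectation; the number of ``collided'' papers and the matching cost are each functions of the split with bounded differences, so they concentrate with deviation probability $e^{-\Omega(\numpap)}$, yielding the high-probability statement.

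\textbf{Main obstacle.} Part (a) reduces to anti-concentration of a hypergeometric variable once the closed-form matching values are written down, so the real work is in part (b): (i) designing a kernel that is simultaneously of rank exactly $\rank$ and has the precise near-diagonal behavior needed to hit the exponent $2/\rank$, which forces a careful trade-off between the dimension of the feature configuration and the smoothness of the kernel; and (ii) proving a \emph{lower} bound on a random minimum-cost matching --- upper bounds on such costs follow from greedy/space-partitioning arguments, but the lower bound needs a Kantorovich-duality or ``excess mass in a region'' argument certifying that $\Omega(\numpap)$ of the substitutions are genuinely long-range, together with a concentration argument strong enough to upgrade the expectation bound to the claimed high-probability bound.
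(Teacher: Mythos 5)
Your part (a) is correct and is essentially the paper's argument: both reduce to anti-concentration of a hypergeometric count of ``good'' reviewers landing in one stage, forcing $\Omega(\sqrt{\numpap})$ reviewers to be matched at similarity $0$. (The paper pads to rank $\rank$ by adding $\rank-1$ tiny disjoint blocks of two reviewers and one paper each; your $\numpap^{-10}$-perturbation on disjoint support accomplishes the same thing, and your Paley--Zygmund step in place of the paper's pointwise anti-concentration bound is fine.)

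Part (b) has a genuine gap: you have the right configuration (a grid of $\Theta(\numpap)$ points with spacing $\Theta(\numpap^{-1/\rank})$, two designated reviewers per paper, $\Omega(\numpap)$ collisions under a random split), but you explicitly defer the two steps that constitute the proof --- constructing a rank-$\rank$ kernel whose mismatch cost behaves like squared distance, and lower-bounding the resulting reassignment cost --- and the tools you propose for them do not straightforwardly work. If you insist on a kernel with $1-\simmat_{\adrev\adpap}$ exactly a power of the feature distance (e.g.\ unit vectors, where $\langle x,y\rangle = 1-\tfrac12\|x-y\|^2$), the feature configuration is confined to a $(\rank-1)$-dimensional sphere and the grid spacing becomes $\numpap^{-1/(\rank-1)}$, giving the wrong exponent --- this is exactly the ``trade-off'' you flag as an obstacle. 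The paper's resolution is elementary and worth knowing: take $\simmat_{\adrev\adpap}=\langle u_\adrev, v_\adpap\rangle$ with all vectors on a full-dimensional grid in $[0,1/\sqrt{\rank}]^{\rank}$ (no norm constraint), and observe that the suboptimally assigned pairs decompose into cycles on which $\sum_i \langle x_i, x_i - x_{i+1}\rangle = \tfrac12\sum_i \|x_i-x_{i+1}\|_2^2$ by telescoping. This single identity simultaneously proves the diagonal assignment is \optterm{} and converts the loss into a sum of squared displacements, each at least $(\text{grid spacing})^2=\Omega(\numpap^{-2/\rank})$ because distinct grid points are separated --- so no Ajtai--Koml\'os--Tusn\'ady/Talagrand matching lower bound or Kantorovich-duality argument is needed (those address matchings between two \emph{random} samples; here the only fact used is the deterministic minimum separation of the grid). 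For the high-probability claim, the paper directly counts partitions with fewer than $\numpap/100$ collided pairs (its Lemma~\ref{lem:revconc}) rather than invoking bounded differences; your concentration route could be made to work, but as written both halves of part (b) remain unproved sketches.
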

\begin{proof}[Proof sketch]
(a) We construct $\rank$ groups of reviewers and papers, where all reviewers and papers in the same group have similarity $1$ with each other and similarity $0$ with all other reviewers/papers. The first group contains $\frac{\numpap}{2}$ papers and $\numpap$ reviewers. The optimal reviewer split puts half of each group's reviewers in each stage and assigns all reviewers to papers with similarity $1$. By an anti-concentration inequality, with constant probability, at least $\Omega(\sqrt{\numpap})$ reviewers in the first group cannot be assigned to a paper in their group under random split. 

(b) We construct a vector in $\mathbb{R}^{\rank}$ for each reviewer and each paper and set the similarity between that reviewer and that paper to be the inner product of their corresponding vectors. We place one paper vector and two reviewer vectors at each point in an evenly-spaced grid throughout the cube $[0, 1/\sqrt{\rank}]^{\rank}$. The resulting similarity matrix has rank $\rank$.  The optimal assignment assigns the two reviewers at each point to the paper at that point. With high probability, random split places $\Omega(\numpap)$ pairs of reviewer vectors into the same stage. One of each of these reviewer pairs must be assigned to a paper at a different point, which is at least $\Omega(\numpap^{-1/\rank})$ away in L2 distance. The suboptimality of the resulting assignment can be written in terms of the total squared L2 distance between each reviewer and their assigned paper, giving the stated bound.
\end{proof}
The constants $C$ and $\zeta$ may depend on $\rank$, which is itself assumed to be constant. The full proof is presented in Appendix~\ref{apdx:lowranklb}.

\subsection{Interpretation of Results}
As discussed earlier in this section, certain methods of constructing similarities (such as counting subject area agreements) may inherently lead to low-rank similarity matrices. If a conference is using such a method, the results in this section provide guarantees to the program chairs that random split will perform well, particularly if the rank of the matrix is low compared to the number of papers and reviewers. Alternatively, program chairs may be able to estimate that their reviewers and papers can be grouped into a small number of communities with little variation within them, in which case the similarity matrix may also be low rank.

\section{Condition 2: High-Value, Large-Load Assignment} \label{sec:cond2}
A natural condition on the similarity matrix to consider is that each paper has a large number $\loadscale$ of reviewers with high similarity for that paper. It turns out that this condition is insufficient for guaranteeing good performance of random split, since the same group of $\loadscale$ reviewers could have high similarity with all papers, thus satisfying this condition without changing the assignment value by much (since we can only assign these reviewers to a few papers). In this section, we consider a condition on the similarity matrix that is similar in spirit: the existence of a high-value assignment (in terms of total similarity) on the full reviewer and paper sets where each paper is assigned a large number $(1 + \stagetwofrac)\loadscale$ of reviewers. Our proposed condition handles the issue with the naive ``large number of reviewers'' condition by requiring that the high-value reviewers for each paper can all be simultaneously assigned. 

In the following subsections, we first provide theoretical guarantees about the performance of random split under this condition. We then demonstrate that this condition helps to explain the good performance of random split on the real similarity matrices presented earlier.

\subsection{Theoretical Bounds} \label{sec:cond2theory}
The first result of this section gives a lower bound on the expected value of random split in terms of the value of a single, large-load  assignment. All results in this section still hold if there are excess reviewers (i.e., if $\numrev \geq (1 + \stagetwofrac) \numpap$ and $\revset_2 \sim \mathcal{U}_{\frac{\stagetwofrac}{1+\stagetwofrac}\numrev}(\revset)$).
\begin{theorem} \label{thm:oneroundub}
Consider any $\loadscale \in [10,000]$ and $\stagetwofrac \in \left\{ \frac{1}{100}, \dots, \frac{100}{100} \right\}$ such that $\stagetwofrac \loadscale \in \mathbb{Z}_+$. 
If there exists an assignment $\adassign^{(\loadscale)} \in \mathcal{M}(\revset, \papset; \loadscale, (1+\stagetwofrac )\loadscale)$ with mean similarity $\scaledavgval$, 
choosing $\revset_2$ via random split gives that
\begin{align*} 
\mathbb{E}_{\revset_2}\left[\objfn(\revset_2)\right] \geq \scaledavgval \Bigg[  1
- \sqrt{\frac{\stagetwofrac}{2 \pi (1+\stagetwofrac )^2 \loadscale }}\left( 2 \sqrt{\frac{1}{1+\stagetwofrac}} 
+ \sqrt{1 -\stagetwofrac}  \right)
\Bigg]. 
\end{align*}
A similar bound holds when $\stagetwofrac \loadscale$ is not integral, with some additional small terms due to rounding.
\end{theorem}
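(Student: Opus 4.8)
My plan exploits the fact that, once the reviewers are split, the two stages decouple completely: for any $\revset_2$ (write $\revset_1 := \revset \setminus \revset_2$) and any $\papset_2$, the quantity $\valfn(\revset_2,\papset_2)$ is, up to the normalizer $(1+\stagetwofrac)\numpap$, the sum of two separate maximum-weight bipartite matching problems with no shared constraints --- a matching of all of $\papset$ into $\revset_1$ (value $V_1$) and a matching of $\papset_2$ into $\revset_2$ (value $V_2$). Hence $\mathbb{E}_{\revset_2}[\objfn(\revset_2)] = \mathbb{E}_{\revset_2,\papset_2}[\valfn(\revset_2,\papset_2)] = \tfrac{1}{(1+\stagetwofrac)\numpap}\big(\mathbb{E}[V_1]+\mathbb{E}[V_2]\big)$, and I would lower-bound $\mathbb{E}[V_1]$ and $\mathbb{E}[V_2]$ separately, using the given high-value assignment $\adassign^{(\loadscale)}$ as a common template.

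\textbf{Decomposing the template.}
Since $\adassign^{(\loadscale)}\in\mathcal{M}(\revset,\papset;\loadscale,(1+\stagetwofrac)\loadscale)$ is a $\{0,1\}$ matrix in which each paper has degree exactly $d:=(1+\stagetwofrac)\loadscale$ and each reviewer has degree at most $\loadscale\le d$, the bipartite graph it defines has maximum degree $d$, so K\"onig's edge-coloring theorem gives $\adassign^{(\loadscale)}=M_1+\dots+M_d$ where each $M_j$ matches every paper to a distinct reviewer and $\sum_j\operatorname{sim}(M_j)=d\,\numpap\,\scaledavgval$. For a given stage, with reviewer pool $R$ ($R=\revset_1$ of size $\numpap$ for stage $1$, $R=\revset_2$ of size $\stagetwofrac\numpap$ for stage $2$) and paper set $P$ ($P=\papset$, resp.\ $P=\papset_2$), restrict each $M_j$ to edges with reviewer in $R$ and paper in $P$; this gives $d$ partial matchings of $P$ into $R$, from which I would build an explicit feasible stage assignment --- for instance, process the $d$ layers in a uniformly random order and give each paper of $P$, in turn, the reviewer of its first layer that still lies in $R$ and is not yet used --- and then lower-bound $V_1,V_2$ by the (random) value of that matching.

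\textbf{Bounding the loss.}
Writing $\scaledavgval_p:=\tfrac1d\sum_r\adassign^{(\loadscale)}_{rp}\simmat_{rp}$ for the template average of paper $p$ (so $\sum_p\scaledavgval_p=\numpap\scaledavgval$), the loss of this matching comes from (i) papers of $P$ left with no reviewer and (ii) the shortfall between a matched paper's assigned similarity and its $\scaledavgval_p$. Each of a paper's $d$ template reviewers falls in $R$ essentially by an independent coin of bias $\tfrac1{1+\stagetwofrac}$ (stage $1$) or $\tfrac{\stagetwofrac}{1+\stagetwofrac}$ (stage $2$), so its number of available template reviewers is a near-binomial with mean $\loadscale$, resp.\ $\stagetwofrac\loadscale$, and the random layer order makes the reviewer it finally receives behave like a near-uniform draw from those available ones, whose expected similarity is exactly $\scaledavgval_p$. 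The loss is thus driven by downward fluctuations of the available-reviewer count around its mean and, for stage $2$, by the fluctuation of which papers land in $\papset_2$; each such fluctuation is a mean-absolute-deviation of the form $\sqrt{\tfrac2\pi\operatorname{Var}}$, which produces the common factor $\sqrt{\tfrac{\stagetwofrac}{2\pi(1+\stagetwofrac)^2\loadscale}}$. The two summands $2\sqrt{\tfrac1{1+\stagetwofrac}}$ and $\sqrt{1-\stagetwofrac}$ should then be, respectively, the reviewer-split contribution (present in both stages, hence the factor $2$) and the extra stage-$2$ contribution from the randomness of $\papset_2$, which vanishes at $\stagetwofrac=1$, where $\papset_2=\papset$ is deterministic and the two stages become symmetric.

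\textbf{The hard part.}
I expect the technical core to be making step (ii) rigorous with the exact constant. The value loss of the greedy matching must be tied carefully to a defect-Hall (equivalently min-cut) bound on random subgraphs; the worst-case adversarial placement of large similarities inside a paper's template neighborhood must be controlled so that no constant is lost in passing from the unweighted count of available reviewers to weighted similarity; and the heuristic ``near-uniform draw from the available template reviewers'' must be replaced by an honest estimate that accounts for the correlations created by contention and by sampling without replacement. Obtaining the $\tfrac1{\sqrt{2\pi}}$ requires a Berry--Esseen-type inequality with explicit non-asymptotic error terms, which is presumably why the statement is restricted to $\loadscale\le 10{,}000$ and $\stagetwofrac$ a multiple of $\tfrac1{100}$ --- ranges in which the Gaussian-approximation error is dominated by the stated expression. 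The hypergeometric dependence of the split and the case $\stagetwofrac\loadscale\notin\mathbb{Z}_+$ are handled by standard negative-association/coupling arguments and produce the ``additional small terms due to rounding.''
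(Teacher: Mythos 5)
Your high-level intuitions are right — decoupling the two stages, using $\adassign^{(\loadscale)}$ as a template, and tracing the $\sqrt{2\pi}$ constants to Gaussian-type fluctuations of binomial counts around their means — but the central step of your argument is exactly where you flag "the hard part," and it does not go through as proposed. Your greedy layer-by-layer matching (K\"onig decomposition, random layer order, first available unused reviewer) suffers from cross-layer contention: a reviewer consumed by one paper at an early layer may be another paper's only surviving template reviewer at a later layer, so the claim that each paper receives a "near-uniform draw from its available template reviewers, whose expected similarity is exactly $\scaledavgval_p$" is unjustified and is precisely the point at which constants would be lost. The paper sidesteps contention entirely: it first builds a load-constrained sub-assignment of $\adassign^{(\loadscale)}$ by \emph{randomly dropping} excess pairs — only paper-side dropping is needed in stage one (reviewer loads are automatically at most $\loadscale$), while in stage two it forms two assignments $B^{(2a)}, B^{(2b)}$ by dropping on the paper side and reviewer side separately and takes their intersection, lower-bounding its value by an inclusion--exclusion step — and then invokes a generalized Birkhoff--von Neumann decomposition to extract a genuine load-$1$ matching carrying at least a $1/\loadscale$ (resp.\ $1/\lceil\stagetwofrac\loadscale\rceil$) fraction of the value. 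This replaces your defect-Hall/greedy analysis with a clean convex-combination argument and reduces everything to per-paper and per-reviewer expectations of the form $\mathbb{E}[\min(X,\ell)]$ for binomial $X$.

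Two further gaps: first, the hypergeometric-to-binomial reduction is not a routine negative-association remark in this setting, because the reviewer split couples all papers through the shared pool; the paper proves a dedicated lemma (Lemma~\ref{lem:indep}) showing via submodularity of the matching value that independent sampling of $\revset_2$ and $\papset_2$ only \emph{lowers} the expected objective, which licenses the independent-coin model you assume. Second, the exact constant is not obtained by a Berry--Esseen bound as you conjecture: the paper writes the normal approximation to $\mathbb{E}[\min(X/\ell,1)]$ in closed form and then \emph{verifies computationally} that it is a valid lower bound on the exact binomial expression for every $\loadscale \in [10^4]$ and $\stagetwofrac \in \{\tfrac{1}{100},\dots,\tfrac{100}{100}\}$ — which is the real reason the theorem is restricted to those finite parameter ranges. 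So while your sketch identifies the right sources of loss and even the right functional form of the error term, it is missing the three ingredients (random dropping plus Birkhoff--von Neumann, the submodularity lemma, and the finite verification) that make the bound rigorous with the stated constants.
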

\begin{proof}[Proof sketch]
We construct assignments with paper and reviewer loads of at most $\loadscale$ in stage one and at most $\stagetwofrac \loadscale$ in stage two using the reviewer-paper pairs assigned by $\adassign^{(\loadscale)}$. We drop any extra assignments at random so that no reviewers and papers are overloaded, and assume any pairs that must be assigned from outside of $\adassign^{(\loadscale)}$ have similarity $0$. From within each of these larger assignments, we can find an assignment with paper and reviewer loads of $1$ with at least the same mean similarity. The expected mean similarity of these assignments can be written as the expectation of a function of binomial random variables. Approximating these by normal random variables and checking via simulation that this is in fact a lower bound for the stated values of $\stagetwofrac$ and $\loadscale$, we get the stated bound.
\end{proof}
The more general version of the bound and the full proof are stated in Appendix~\ref{apdx:oneroundub}.

The above bound works well when the reviewer-paper pairs in the large-load assignment are all nearly equally valuable. However, it cannot take advantage of the fact that certain reviewers may be extremely valuable for a certain paper and can be prioritized for assignment to that paper when possible. The next result uses additional information about the value of an assignment with smaller loads, along with a large-load assignment disjoint from the small assignment, to make use of these highly valuable reviewer-paper pairs in the case where $\stagetwofrac = 1$. Recall from Section~\ref{sec:approachcounterex} that $\stagetwofrac=1$ is still not an easy case for random split in general and is particularly relevant for the conference experiment setting.

\begin{theorem} \label{thm:tworoundub}
Suppose $\stagetwofrac=1$, and consider any $\loadscale \in [10,000]$ such that $\frac{\loadscale}{4} \in \mathbb{Z}_+$.  Suppose there exists an assignment $\adassign^{(1)} \in \mathcal{M}(\revset, \papset; 1, 2)$ with mean similarity $\optavgval$. 
Suppose there also exists an assignment $\adassign^{(\loadscale)} \in \mathcal{M}(\revset, \papset; \loadscale, 2\loadscale)$ with mean similarity $\scaledavgval$ that does not contain any of the pairs assigned in $\adassign^{(1)}$. 
Then, choosing $\revset_2$ via random split gives that
\begin{align*}
\mathbb{E}_{\revset_2}\left[\objfn(\revset_2)\right] \geq \frac{3}{4} \optavgval + \left(1 - \frac{1.44}{\sqrt{\loadscale}} 
\right) \frac{1}{4} \scaledavgval.
\end{align*}
A similar bound holds when $\frac{\loadscale}{4}$ is not integral, with some additional small terms due to rounding. 
\end{theorem}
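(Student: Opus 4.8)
The plan is to combine the two given assignments by randomly designating, for each paper, whether its two ``small-assignment'' reviewers (from $\adassign^{(1)}$) go to stage one or to stage two, and to argue that, no matter how random split chooses $\revset_2$, we can still recover most of the value of $\adassign^{(1)}$ in \emph{both} stages while simultaneously using $\adassign^{(\loadscale)}$ to handle the remaining capacity. Concretely, with $\stagetwofrac=1$ we have $|\revset_1| = |\revset_2| = \numpap$ and each stage needs a paper load of $1$, i.e.\ a perfect matching between $\numpap$ reviewers and $\numpap$ papers in each stage. The idea is to lower bound $\valfn(\revset_2, \papset_2)$ by the value of a particular (sub-optimal) feasible assignment constructed as follows: first try to match each paper to one of its two $\adassign^{(1)}$-reviewers that happens to be on the correct side of the split and unused; for the papers (and reviewers) where this fails, fall back to a matching drawn from $\adassign^{(\loadscale)}$. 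Since $\adassign^{(\loadscale)}$ is disjoint from $\adassign^{(1)}$, these two pieces don't interfere at the reviewer level, and since $\adassign^{(\loadscale)}$ has per-paper load $2\loadscale$, it provides enough ``slack'' reviewers to fill the fallback part of both matchings with high probability.

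The key steps, in order, are: \textbf{(1)} Fix $\papset_2$ and $\revset_2$ and expose the randomness. For each paper $\adpap$, let $a_\adpap, b_\adpap$ be its two reviewers in $\adassign^{(1)}$. Under random split, the pair $(a_\adpap, b_\adpap)$ falls into one of the three patterns ``both in $\revset_1$'', ``both in $\revset_2$'', ``split''; only in the ``split'' case can $\adpap$ potentially be served from $\adassign^{(1)}$ in the stage it appears in. Compute the probability of each pattern. \textbf{(2)} Build a feasible stage-one assignment: greedily assign each stage-one paper whose $\adassign^{(1)}$-pair is ``split'' or ``both in $\revset_1$'' to one of its available $\adassign^{(1)}$-reviewers (resolving contention — a reviewer can be claimed by at most $\revload = 1$ paper — in $\adassign^{(1)}$ contention is already bounded since it's a valid matrix with reviewer load $1$... actually here reviewer load is $1$ in $\adassign^{(1)}$ so each reviewer is the partner of at most one paper, so there is no contention). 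For the remaining stage-one papers, assign them via $\adassign^{(\loadscale)}$, choosing for each such paper one of its $2\loadscale$ reviewers in $\adassign^{(\loadscale)}$ that landed in $\revset_1$ and is not yet used; a union/counting bound shows that the probability any such paper runs out of available $\adassign^{(\loadscale)}$-reviewers is tiny once $\loadscale$ is moderately large. \textbf{(3)} Do the symmetric construction for stage two using $\revset_2$ and $\papset_2$. \textbf{(4)} Lower bound the total similarity: the $\adassign^{(1)}$-served papers contribute their $\adassign^{(1)}$-similarities (and each paper with a ``split'' pair is served this way in whichever stage it appears, while a paper with ``both-in-$\revset_i$'' pair is served this way in stage $i$); the fallback papers contribute at least their $\adassign^{(\loadscale)}$-similarities. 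Taking expectations over $\revset_2$ and $\papset_2$ turns per-paper indicator probabilities into the coefficients $\tfrac34$ and $\tfrac14$, and the ``ran out of reviewers'' failure events contribute only the $\tfrac{1.44}{\sqrt\loadscale}$ loss. \textbf{(5)} Collect terms, normalize by $\papload^{(1)}\numpap + \papload^{(2)}\stagetwofrac\numpap = 2\numpap$, and match constants.

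The fraction $\tfrac34$ should emerge because, for a fixed paper, the probability that \emph{at least one} of its two $\adassign^{(1)}$-reviewers is in the same stage as the paper is $1 - \Pr[\text{both reviewers in the other stage}]$, and in the $\stagetwofrac=1$ regime that ``both in the other half'' probability is roughly $\tfrac14$ (each reviewer independently-ish has probability $\tfrac12$ of being in a given half). The residual $\tfrac14$ of papers get the large-load value $\scaledavgval$ instead. I expect the \textbf{main obstacle} to be step (2)/(3): controlling the fallback matching rigorously. The events ``reviewer $\adrev \in \revset_1$'' for $\adrev$ ranging over one paper's $\adassign^{(\loadscale)}$-neighborhood are negatively associated (sampling without replacement), so a reviewer runs out only if an unusually large fraction of its $2\loadscale$ candidates went to the other stage \emph{and} the candidates that did come over were already grabbed by other fallback papers; bounding the latter requires either a Hall-type argument on a random bipartite graph or a careful sequential-revelation / deferred-decisions analysis, and getting the clean constant $1.44$ (presumably $\approx \sqrt{2/\pi}$ times something, via a normal approximation to a binomial tail as in Theorem~\ref{thm:oneroundub}) out of it is where the real work lies. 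A secondary subtlety is that $\adassign^{(1)}$ and $\adassign^{(\loadscale)}$ being disjoint in \emph{pairs} must be leveraged so that a reviewer used in the $\adassign^{(1)}$-part of a stage is never also demanded by the $\adassign^{(\loadscale)}$-part of that same stage — this is exactly why disjointness is hypothesized, and the construction must be phrased to respect it.
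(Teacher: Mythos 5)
Your round-one analysis is essentially the paper's: match from $\adassign^{(1)}$ wherever a partner reviewer landed in the right stage, observe there is no contention because $\adassign^{(1)}$ has reviewer load $1$, and compute an expected per-paper contribution of $\tfrac{3}{4}$ of its $\adassign^{(1)}$-value (split pair serves the paper in both stages, a same-stage pair serves it in one stage with a random tie-break). That part is fine. The gap is in the fallback step, which you yourself flag as the ``main obstacle'' and then do not resolve. The paper does not run a greedy sequential matching with a Hall-type or union-bound argument at all. Instead it builds, within each stage, an \emph{over-loaded} assignment from the $\adassign^{(\loadscale)}$ pairs whose reviewer and paper are both unmatched and in that stage, randomly drops entries until every reviewer and paper has load at most $\theta=\lceil\loadscale/4\rceil$ (the random dropping is what guarantees each paper retains its \emph{average} $\adassign^{(\loadscale)}$-similarity, a point your step (4) leaves vague---a fallback paper has $2\loadscale$ different similarities and which one it receives matters), and then invokes a generalized Birkhoff--von Neumann decomposition to extract a load-$1$ assignment worth at least a $\tfrac{1}{\theta}$ fraction of that value. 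This sidesteps contention entirely: feasibility is a load-counting statement about a fractional assignment, not a matching-existence statement about a random bipartite graph. The $1.44/\sqrt{\loadscale}$ loss then falls out as $\frac{\sqrt{7}+\sqrt{6}}{2\sqrt{\pi\loadscale}}$ from normal approximations to $\mathrm{Binom}(2\loadscale,1/8)$ (surviving reviewers per paper) and $\mathrm{Binom}(\loadscale,1/4)$ (surviving papers per reviewer), verified by simulation to be a valid lower bound for $\loadscale\le 10^4$; your proposed union bound over ``ran out of reviewers'' events would not produce this form and, as you note, controlling the interaction between papers that share $\adassign^{(\loadscale)}$-reviewers is exactly what your sketch leaves open.

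A second, smaller but genuine gap: all of these per-paper and per-reviewer probability computations ($\tfrac34$, the binomial counts) treat reviewer stage-memberships as independent, but under $\revset_2\sim\mathcal{U}_{\stagetwofrac\numpap}(\revset)$ they are not. Saying they are ``independently-ish'' or negatively associated is not enough to make the binomial calculations literal. The paper handles this with Lemma~\ref{lem:indep}, which uses submodularity of the (underload-permitting) assignment value to show that the expected value under independent Bernoulli sampling of $\revset_2$ and $\papset_2$ lower-bounds the expected value under uniform sampling; your proof would need this reduction (or an explicit negative-association argument for every quantity you bound) before any of the stated probabilities are valid.
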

\begin{proof}[Proof sketch]
We first attempt to assign as many pairs as possible from within $\adassign^{(1)}$; in expectation we can assign $\frac{3}{4}$ of them. Among the remaining reviewers and papers, we attempt to construct assignments with paper and reviewer loads of $\frac{\loadscale}{4}$ in both stages from within the reviewer-paper pairs assigned by $\adassign^{(\loadscale)}$. This is done in a similar way as in Theorem~\ref{thm:oneroundub}. 
\end{proof} 
The more general version of the bound and the full proof are stated in Appendix~\ref{apdx:tworoundub}.

If we consider $\adassign^{(1)}$ as the optimal assignment and assume that $\loadscale$ is divisible by $4$, we get an approximation ratio (between the random split assignment and \optterm{} assignment's similarities) of $\frac{3}{4} + \frac{\gamma_{\loadscale}}{4} \left(1 - \frac{1.44}{\sqrt{\loadscale}} \right)$ where $\gamma_{\loadscale} = \frac{\scaledavgval}{\optavgval}$. With $\loadscale = 8$, we achieve an approximation ratio of at least $\frac{3}{4} + \frac{\gamma_8}{8}$. Additionally, if $\gamma_{\loadscale} \to 1$ as $\numpap$ grows for any $\loadscale = \omega(1)$, the suboptimality of random split~\eqref{eq:suboptrs} approaches $0$. For example, this means that the suboptimality of random split approaches $0$ as $\numpap$ grows if the mean similarity of an assignment with paper loads of $\log \numpap$ improves faster than the mean similarity of the optimal assignment. 

\subsection{Empirical Evaluation} \label{sec:cond2exp}
We now show the performance of these bounds on our real conference datasets in order to evaluate the extent to which they explain the good performance of random split. We use three of the conference datasets introduced earlier with $\stagetwofrac=1$. In Appendix~\ref{apdx:experiments}, we evaluate the bounds on additional datasets (including the SIGIR dataset). 
On PrefLib3 and Bid1, the problem is infeasible with paper and reviewer load constraints of $1$ since $\numrev < 2 \numpap$, so we modify the datasets by splitting each reviewer into $3$ copies as follows. For each paper, we arbitrarily give one of the copies the same similarity as the original reviewer and give the other copies similarity $0$. In this way, the similarity of the optimal assignment on this modified dataset is no greater than the similarity of the optimal assignment on the original dataset.

In Figure~\ref{fig:bounds}, we vary the value of the parameter $\loadscale$ (indicating the loads of the  assignment $\adassign^{(\loadscale)}$) and show the bounds of Theorem~\ref{thm:oneroundub} and Theorem~\ref{thm:tworoundub} as compared to the estimated expected value of random split. The estimated expected value is averaged over $10$ trials with the standard error of the mean shaded, although it is sometimes not visible because it is small. We see that on these datasets, the bound of Theorem~\ref{thm:oneroundub} performs best for low values of $\loadscale$ and not very well for higher values, likely due to the presence of a few ``star'' reviewers for each paper which hold a lot of the value. By making use of extra information about the values of these reviewers, the bound of Theorem~\ref{thm:tworoundub} achieves a high fraction of the actual random split value. Although this bound is maximized at large values of $\loadscale$ on these datasets, it is close to its maximum even with reasonably low values of $\loadscale$. For example, on ICLR, the lower bound achieves $86\%$ of the estimated expected value of random split with $\loadscale=8$.  This indicates the good performance of random split is explained well by the presence of just a few good reviewers per paper that can be simultaneously assigned. 

\begin{figure*}[t!] 
    \centering
    \begin{subfigure}{0.65\textwidth}\includegraphics[width=1\textwidth]{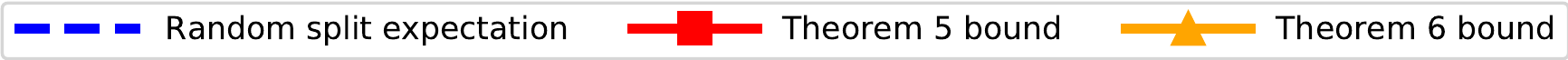} \end{subfigure} \\
    \begin{subfigure}{0.32\textwidth}\includegraphics[width=1\textwidth]{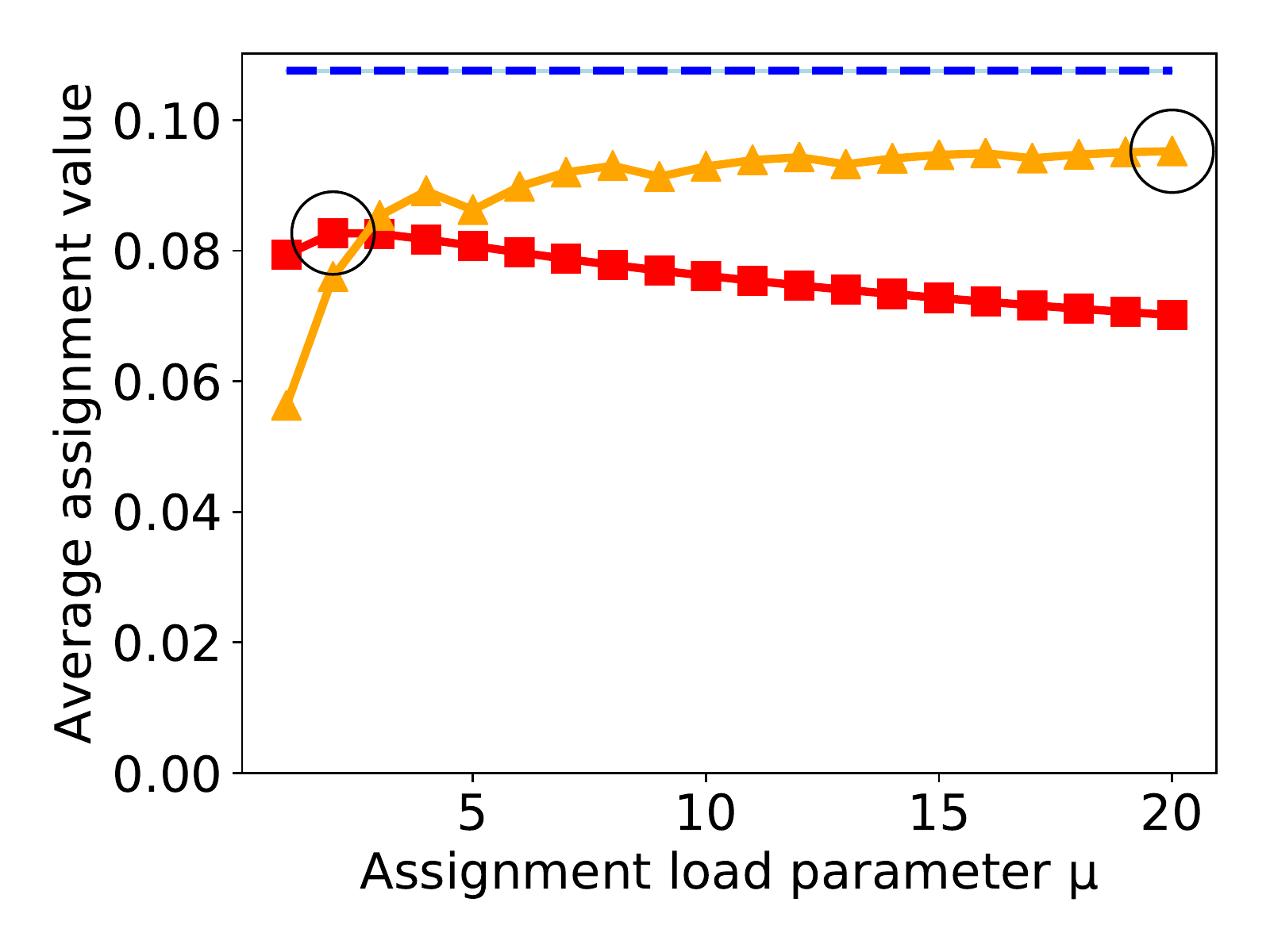}\caption{ICLR}\label{fig:bound1} \end{subfigure}
    \begin{subfigure}{0.32\textwidth}\includegraphics[width=1\textwidth]{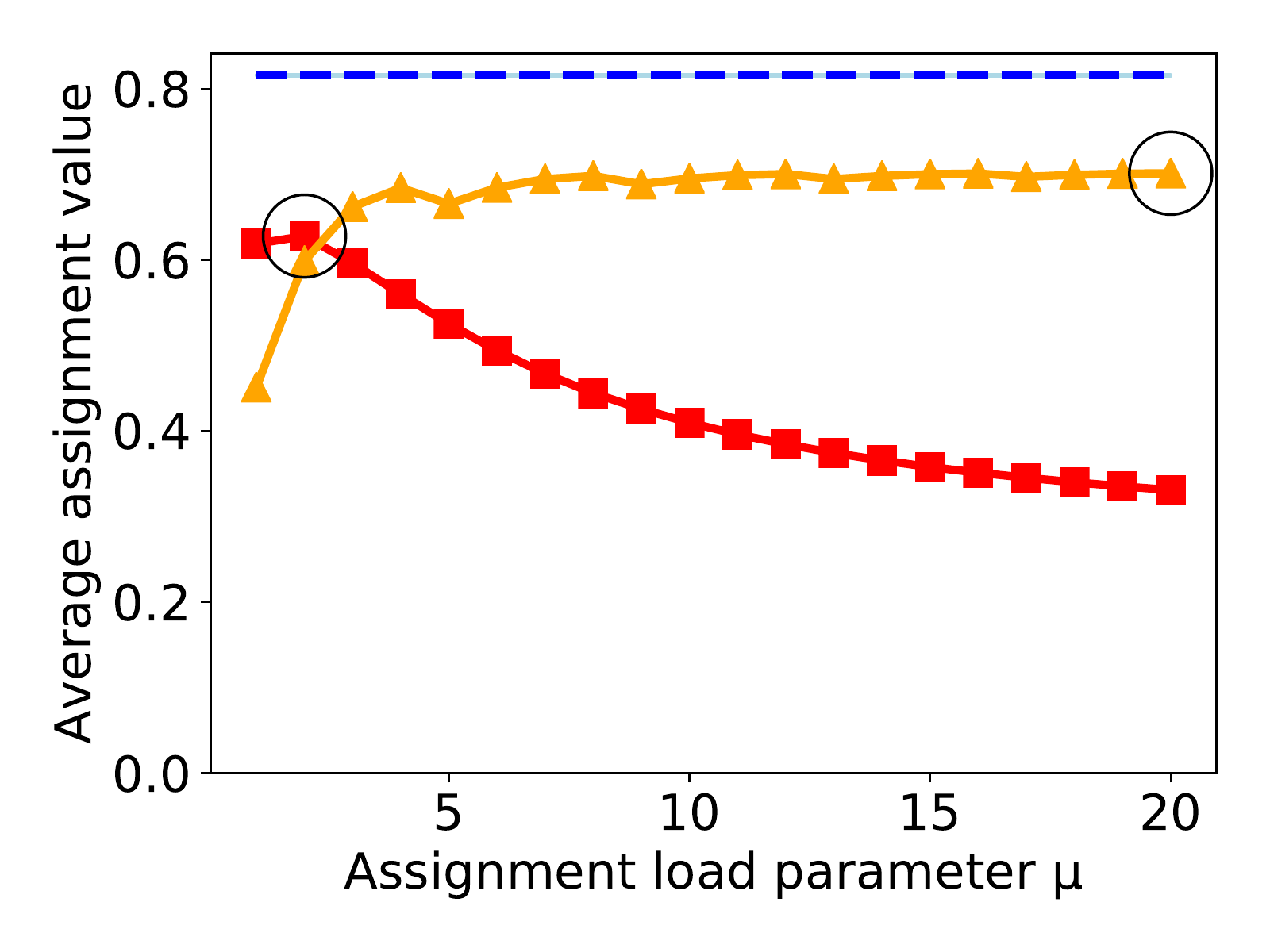}\caption{PrefLib3}\label{fig:bound2} \end{subfigure} 
    \begin{subfigure}{0.32\textwidth}\includegraphics[width=1\textwidth]{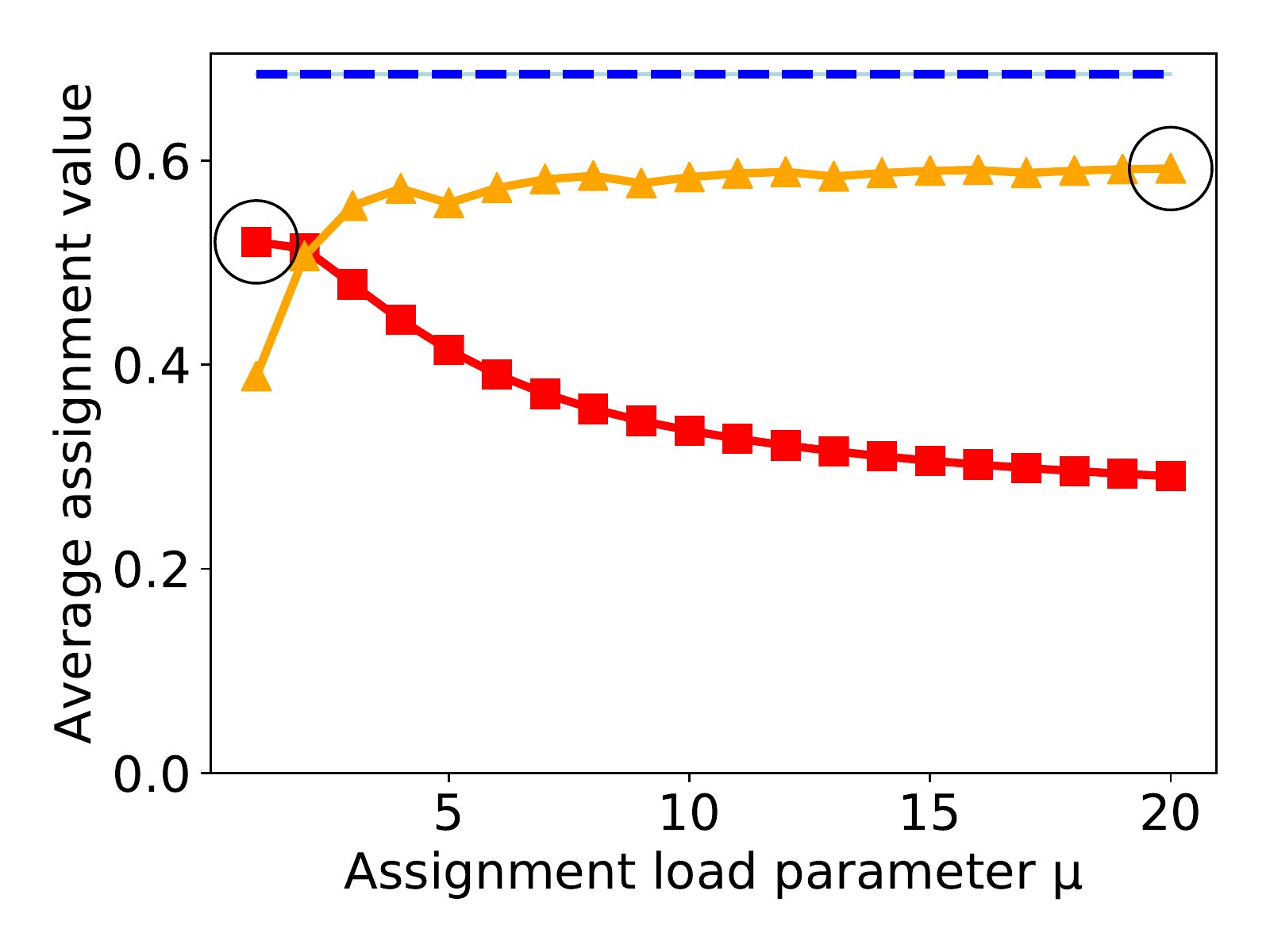}\caption{Bid1}\label{fig:bound3} \end{subfigure}
    \caption{Performance of the ``high-value large-load'' bounds on real conference datasets, $\stagetwofrac=1$. On the x-axis we vary the parameter $\loadscale$, which determines the loads of the  assignment $\adassign^{(\loadscale)}$ used in the bound. The best setting of $\loadscale$ for each bound is circled.} \label{fig:bounds}
\end{figure*}

\subsection{Interpretation of Results}
Although our results in this section are stated in terms of the precise values of high-load assignments, they can be interpreted by program chairs in a simple and practical way. Roughly, our results indicate that if several good reviewers can be \emph{simultaneously} assigned to each paper (as was the case for the three conference similarity matrices in Figure~\ref{fig:bounds}), random split will perform well. When considering the potential performance of randomly splitting reviewers, program chairs should consider the reviewer and paper pools they expect to have at their conference and make a judgement about how many good-quality reviewers they think could be assigned to each paper (if the reviewer loads are scaled up proportionately). For example, the program chairs of a large AI conference might be confident that the top several reviewers for each paper are about equally valuable (due to the depth of the reviewer pool) and could be assigned to each paper with only a modest loss in average review quality; this would imply that random split would perform very well for this conference.

\section{Conclusion} \label{sec:conc}
We showed that randomly splitting reviewers between two reviewing phases or two reviewing conditions produces near-optimal assignments on realistic conference similarity matrices. Our analysis of this phenomenon can help future program chairs make decisions about whether random split will work well for their conference's two-phase review process, based on their assessment of whether a few simple conditions are applicable to their case. 
In the setting of conference experiment design, our analysis allows program chairs to understand if running an experiment on their review process will significantly impact their assignment quality.

In addition, our results can potentially be further generalized to related reviewing models such as those of academic journals (which accept submissions on a rolling basis), or to other multi-stage resource allocation problems that involve matching resources based on similarities. 
For example, datacenters receiving a large batch of jobs may have to select some to run on various servers immediately and some to run later when additional servers have been freed, or hospitals may want to assign nurses to shifts based on expertise but without knowledge of which expertise will be most applicable in later shifts.

One limitation of our work is that while our empirical results demonstrate the effectiveness of the random-split strategy with real conference data, our theoretical results make the simplifying assumption that paper and reviewer loads are $1$, which is unrealistic for real conferences. However, we believe that incorporating this detail would not change our explanations for the good performance of random split. Another limitation is that we assume the set of papers requiring reviews in the second stage is drawn uniformly at random. Although this is a reasonable belief without further information in the two-phase setting, one direction for future work is to consider non-uniform distributions of second-stage papers and analyze if a form of random split still performs well there. 

Our work could potentially produce negative outcomes in the form of worse paper assignments if program chairs decide to use random split on an incorrect belief that their conference will fit our conditions. However, program chairs are required to make such decisions about how to perform the paper assignment anyway, so this is not a significant increase in risk. The use of random reviewer splits, as opposed to some alternate strategy where reviewers can self-select their stage, could also negatively impact reviewers with strong preferences over which stage they review in (e.g., due to schedule constraints). These preferences should ideally be taken into account along with the similarity of the resulting assignment when choosing the reviewer split; we leave this as an interesting direction for future work.


\section*{Acknowledgments}
This work was supported by NSF CAREER awards 1942124 and 2046640, NSF grant CIF-1763734, NSF grants IIS-1850477 and IIS-1814056, and a Google Research Scholar Award.

\bibliographystyle{unsrt}
\bibliography{bibtex}


~\\~\\

\appendix
\noindent{\LARGE \bf Appendices}

\begin{figure*}[h]
    \centering
    \begin{subfigure}{0.7\textwidth}\includegraphics[width=1\textwidth]{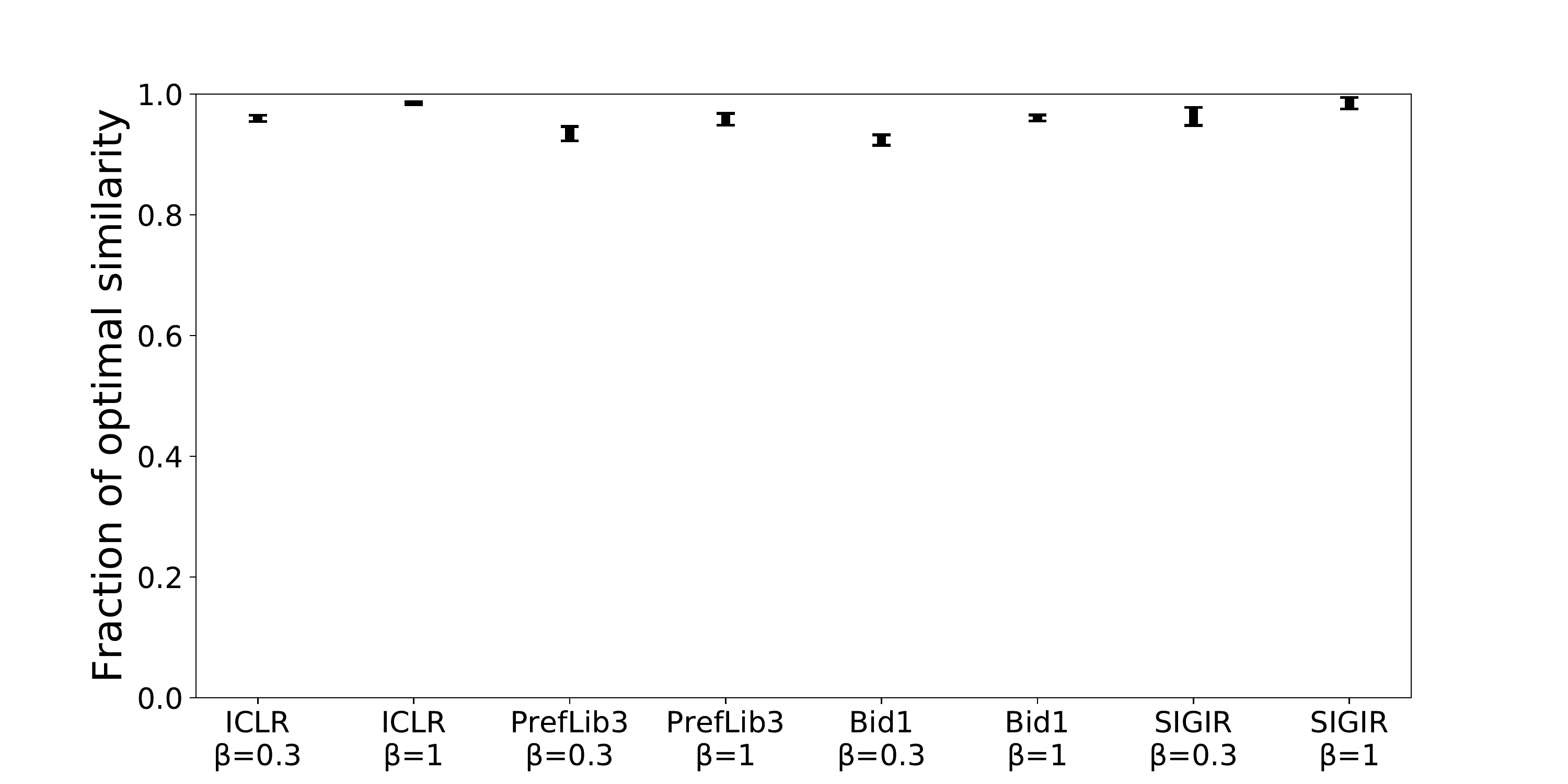}\caption{Additional values of $\stagetwofrac$}\label{fig:suppbeta} \end{subfigure} \\
    \begin{subfigure}{1\textwidth}
    \includegraphics[width=1\textwidth]{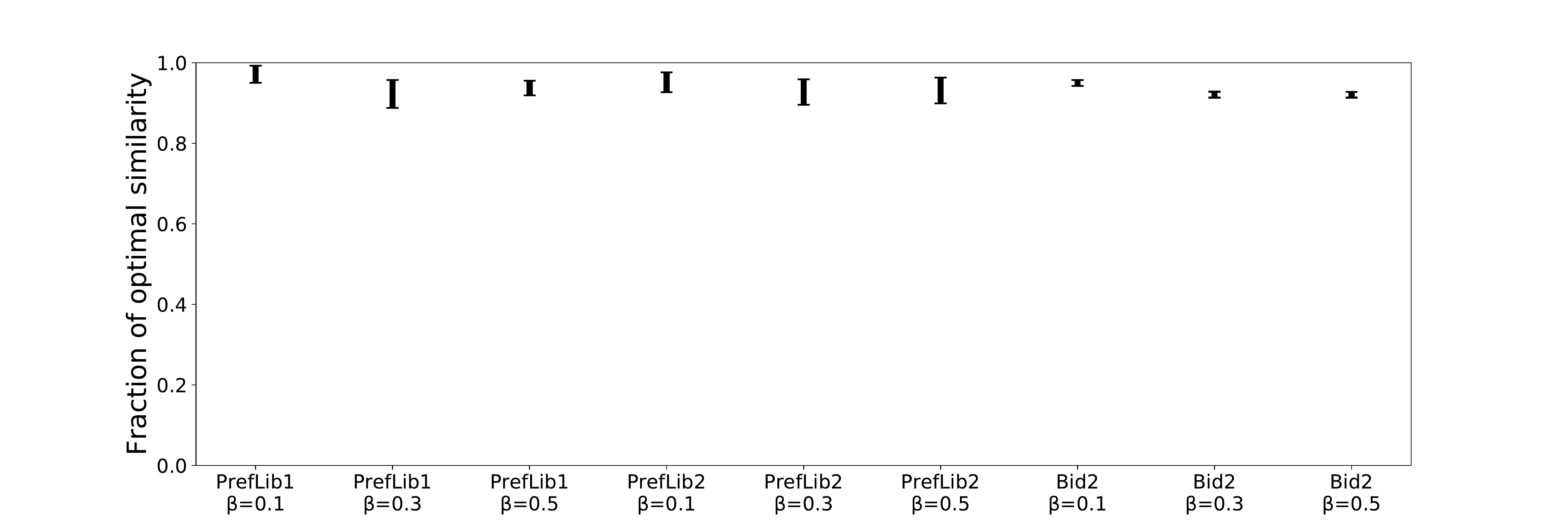}\caption{Additional similarity matrices}\label{fig:suppdata} \end{subfigure}
    \caption{Additional results showing ranges of values found over $10$ random reviewer splits.} \label{fig:supp}
\end{figure*}

\section{Additional Empirical Results} \label{apdx:experiments}

We first present empirical evaluations showing the performance of random split on additional values of $\stagetwofrac$ for the similarity matrices used in Section~$\ref{sec:approachexp}$ (Figure~\ref{fig:suppbeta}), as well as on additional similarity matrices constructed from bidding data (Figure~\ref{fig:suppdata}). Two of these additional similarity matrices were constructed using bid data for the two other AI conferences (conferences 1 and 2) from PrefLib dataset MD-00002~\cite{MaWa13a} with sizes $(\numpap=54, \numrev=31)$ and $(\numpap=52, \numrev=24)$ respectively. Another additional similarity matrix  (marked Bid2) was constructed from another sample of the bidding data from a major computer science conference~\cite{meir2020market} with size $(\numpap=1200, \numrev=300)$. As in the bidding datasets shown earlier, we transformed ``yes,'' ``maybe,'' and ``no response'' bids into similarities of $1$, $0.5$, and $0.25$ respectively.

We run several experiments, each corresponding to a choice of dataset and $\stagetwofrac$. Each experiment consists of $10$ trials, where in each trial we sample a random reviewer split and a set of second-stage papers, and report the range of assignment values found as percentages of the \optterm{} assignments for each trial. We set paper loads of $2$ in each stage, and limit reviewer loads to be at most $6$ for all datasets except PrefLib2 and Bid2, which limit reviewer loads to be at most $12$ (for feasibility). As in Section~\ref{sec:approachexp}, we draw $\revset_2$ uniformly at random with size $\frac{\stagetwofrac}{1+\stagetwofrac}\numrev$ and draw $\papset_2$ uniformly at random with size $\stagetwofrac\numpap$. In general, we see that random split performs very well on these datasets as well. We see that all trials of random split achieve at least $88\%$ of the \optterm{} solution's similarity on all datasets, with all trials on all but three experiments achieving at least $94\%$. The range of values on each experiment is generally small (at most $7\%$), with the largest ranges occuring on the small PrefLib datasets.


\begin{figure*}[t!] 
    \centering
    \begin{subfigure}{0.65\textwidth}\includegraphics[width=1\textwidth]{images/legend.pdf} \end{subfigure} \\
    \begin{subfigure}{0.45\textwidth}\includegraphics[width=1\textwidth]{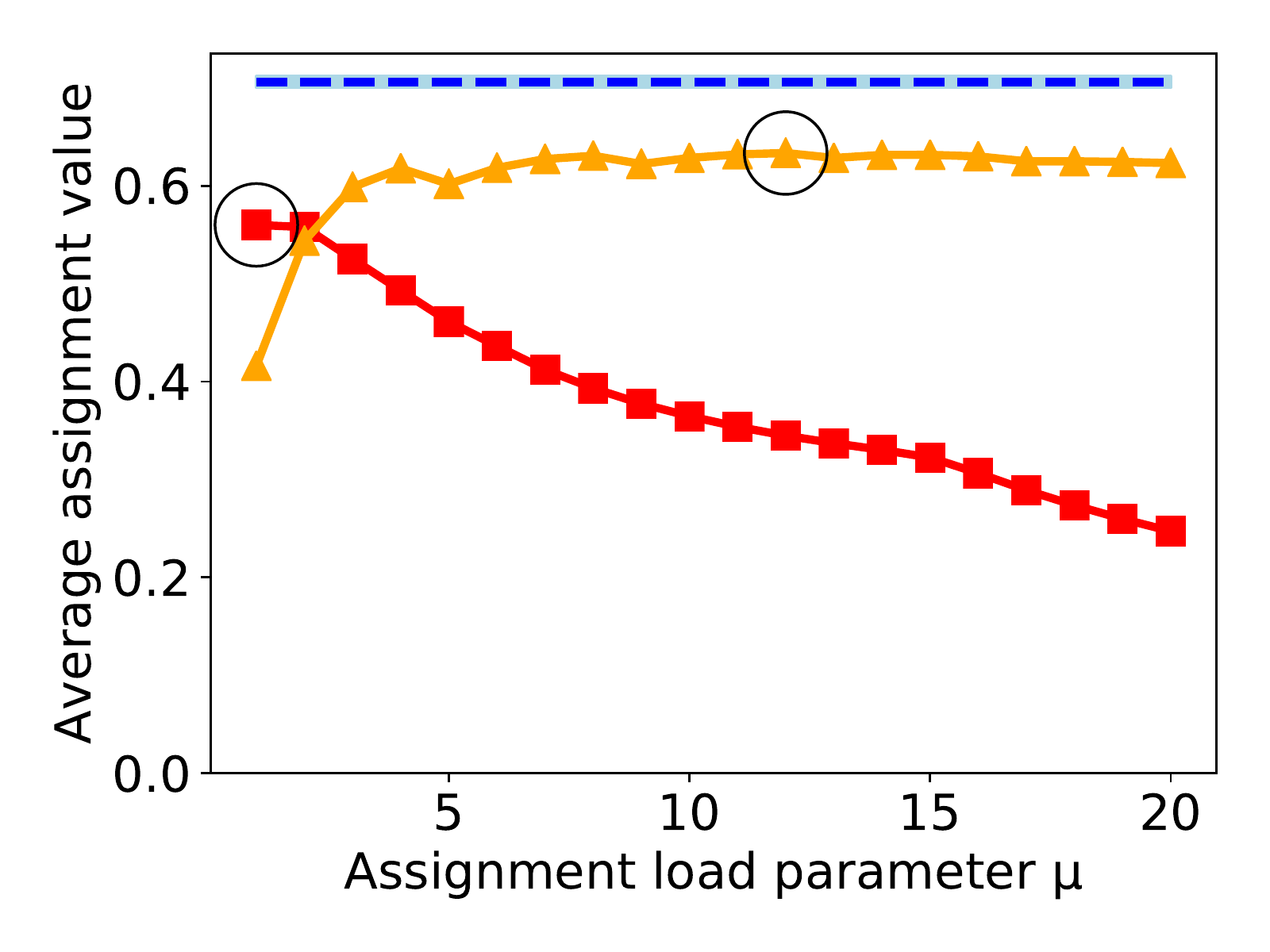}\caption{PrefLib1}\label{fig:suppbounds1} \end{subfigure} \quad
    \begin{subfigure}{0.45\textwidth}\includegraphics[width=1\textwidth]{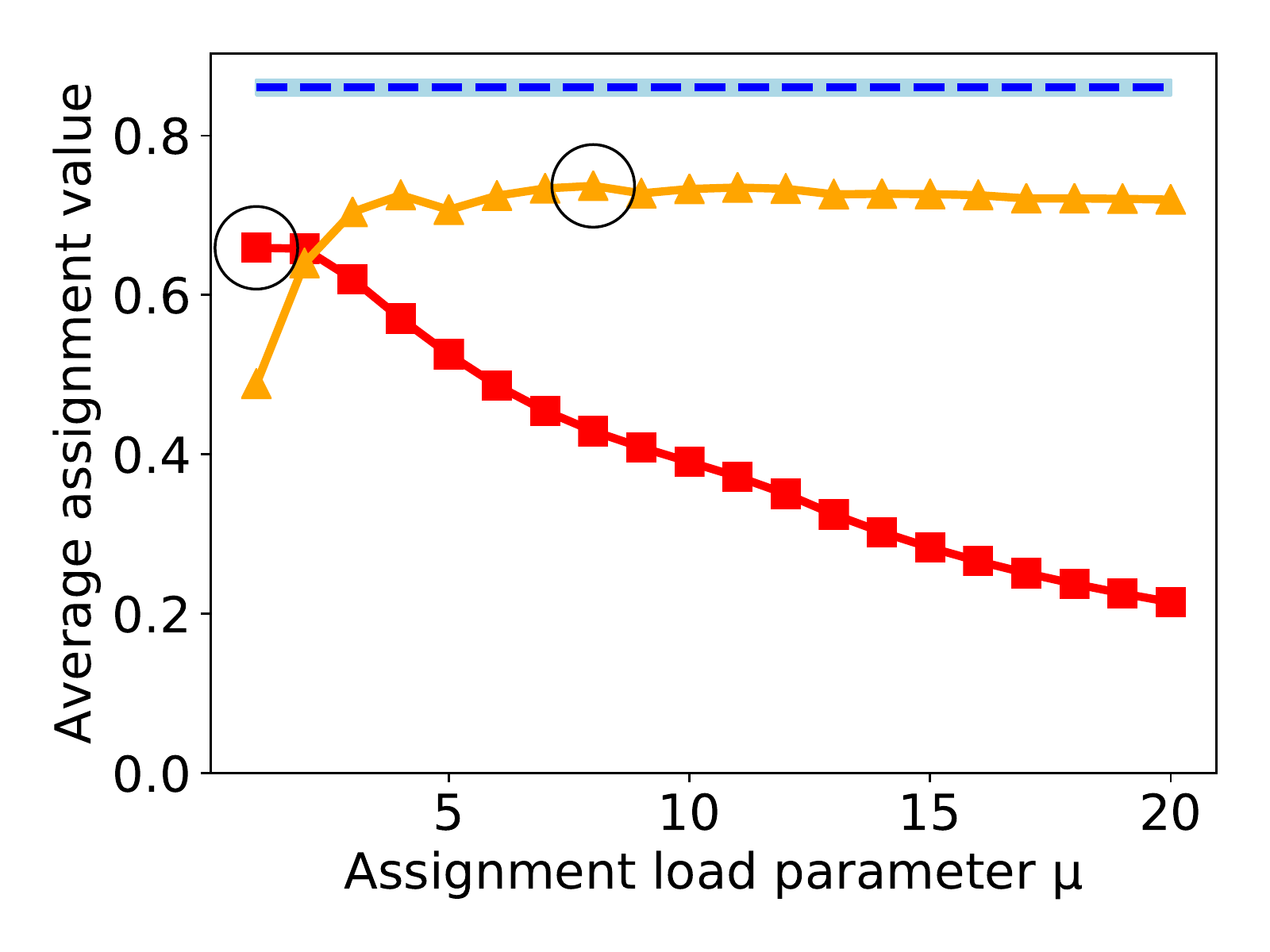}\caption{PrefLib2}\label{fig:suppbounds2} \end{subfigure} \\
    \begin{subfigure}{0.45\textwidth}\includegraphics[width=1\textwidth]{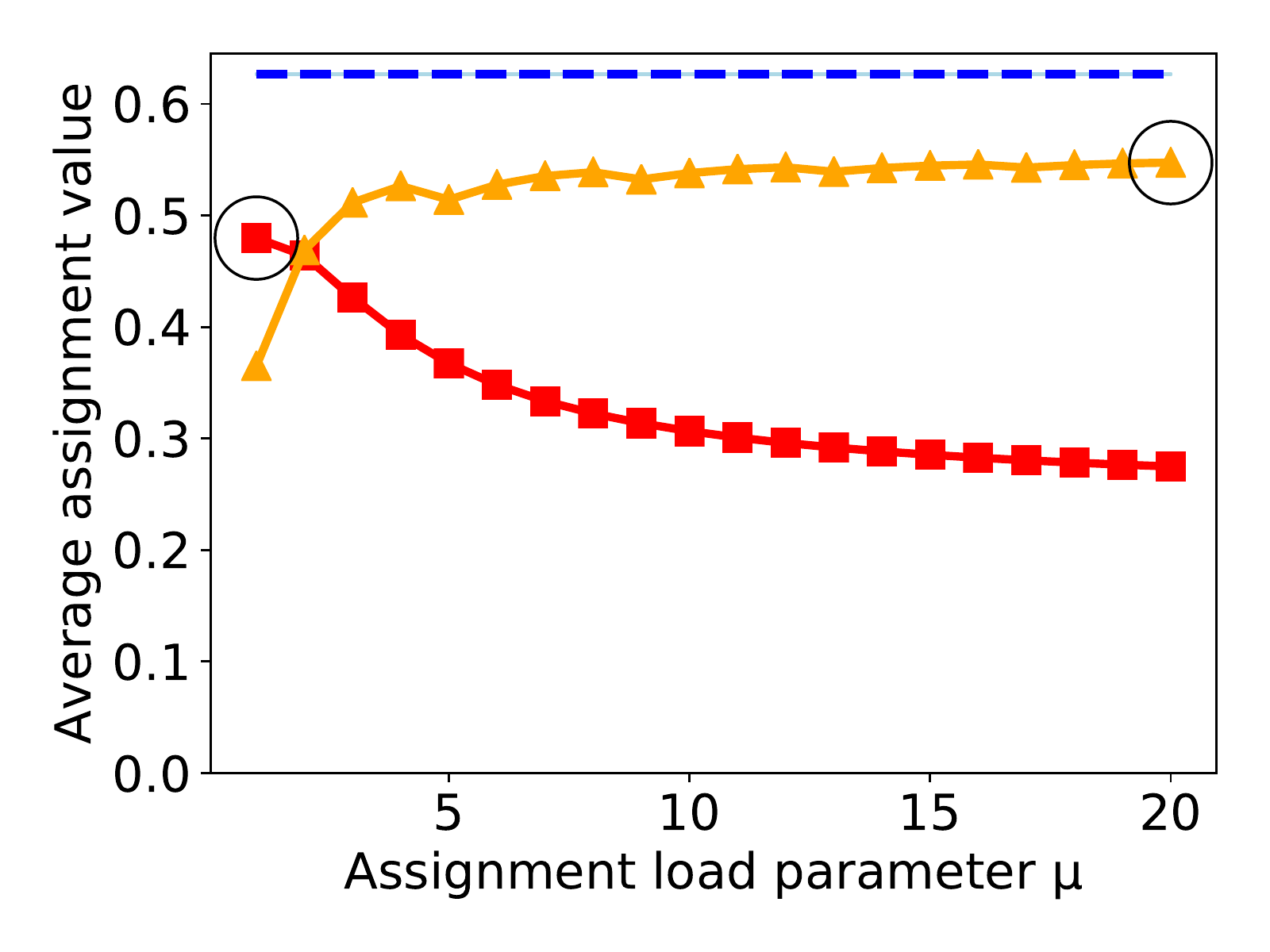}\caption{Bid2}\label{fig:suppbounds3} \end{subfigure} \quad   \begin{subfigure}{0.45\textwidth}\includegraphics[width=1\textwidth]{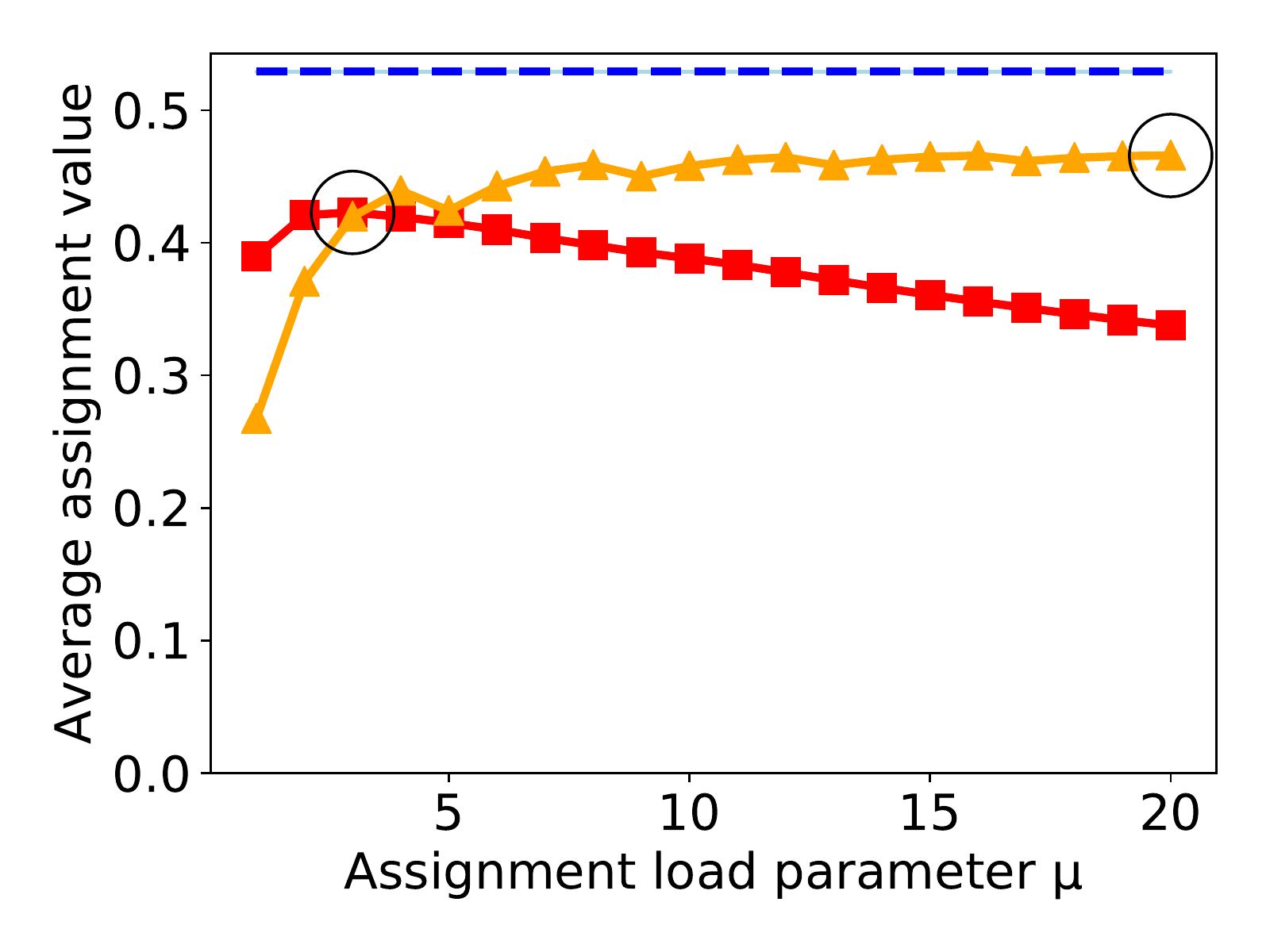}\caption{SIGIR}\label{fig:suppbounds4} \end{subfigure}
    \caption{Performance of Theorem~\ref{thm:oneroundub} and \ref{thm:tworoundub} bounds on additional real conference datasets, $\stagetwofrac=1$. The best setting of $\loadscale$ shown for each bound is circled.} \label{fig:suppbounds}
\end{figure*}

We additionally test the bounds of Section~\ref{sec:cond2} on these datasets as well as the SIGIR dataset to evaluate how well they explain the performance of random split. On PrefLib1, Preflib2, and Bid2, we scale up the number of reviewers by $4$, $5$, and $8$ respectively for feasibility, as described in Section~\ref{sec:cond2exp}. On the x-axis we vary the parameter $\loadscale$, which determines the loads of the  assignment $\adassign^{(\loadscale)}$ used in the bound. 

In Figure~\ref{fig:suppbounds}, we see similar results to those shown earlier. The Theorem~\ref{thm:oneroundub} bound performs best at low values of $\loadscale$. The Theorem~\ref{thm:tworoundub} bound performs better at higher values of $\loadscale$, although for some datasets a more moderate value of $\loadscale$ is better since the assignment value $\scaledavgval$ drops too quickly at higher $\loadscale$. From the Theorem~\ref{thm:tworoundub} bound, we see that the good performance of random split on these datasets is generally explained fairly well by the large-load, high-similarity assignment.

All empirical evaluations in this paper were run on a computer with $8$ cores and $16$ GB of RAM, running Ubuntu 18.04 and solving the LPs with Gurobi 9.0.2~\cite{gurobi}.

\section{Empirical Results for Paper-Split Variant} \label{apdx:controlledexp} 
In this section, we provide some additional empirical results that are particularly relevant to the conference experiment design setting.
Sometimes, conferences may not have the reviewing resources to provide a significant number of papers with two sets of reviews as part of an experiment. Instead, they may want to provide reviews to each paper under only one of the conditions. 
If the papers and reviewers are both split between conditions uniformly at random, this can be seen as a variant of our standard two-stage paper assignment problem where only papers in $\papset_1 = \papset \setminus \papset_2$ are assigned reviewers in stage one.

To test whether such experiments will still give high-similarity assignments in practice, we conduct additional empirical evaluations. The results of these experiments are shown in Figure~\ref{fig:papersplit}. For each dataset of those introduced in Section~\ref{sec:approachexp}, we take $10$ samples of random reviewer and paper splits where $|\revset_2| = \numrev / 2$ and $|\papset_2| = \numpap / 2$ so that half of the reviewers and papers are in each stage (i.e., each condition). We then find assignments in each stage with paper loads of $3$ and reviewer loads of at most $6$ (standard conference loads), and display the range of assignment values found as a fraction of the \optterm{} assignment's value. On all datasets, all trials of random reviewer split achieve over $75\%$ of the \optterm{} assignment's total similarity with low variation (at most $4\%$). On ICLR and SIGIR, all trials achieve over $90\%$ of the \optterm{} similarity. 
Overall, the average assignment quality is slightly worse than in the standard model (where all papers are in stage one). This is likely because it is more difficult for reviewers to be assigned to their optimal papers when each paper is in only one of the two stages. 

\begin{figure*}
    \centering
    \includegraphics[width=0.45\textwidth]{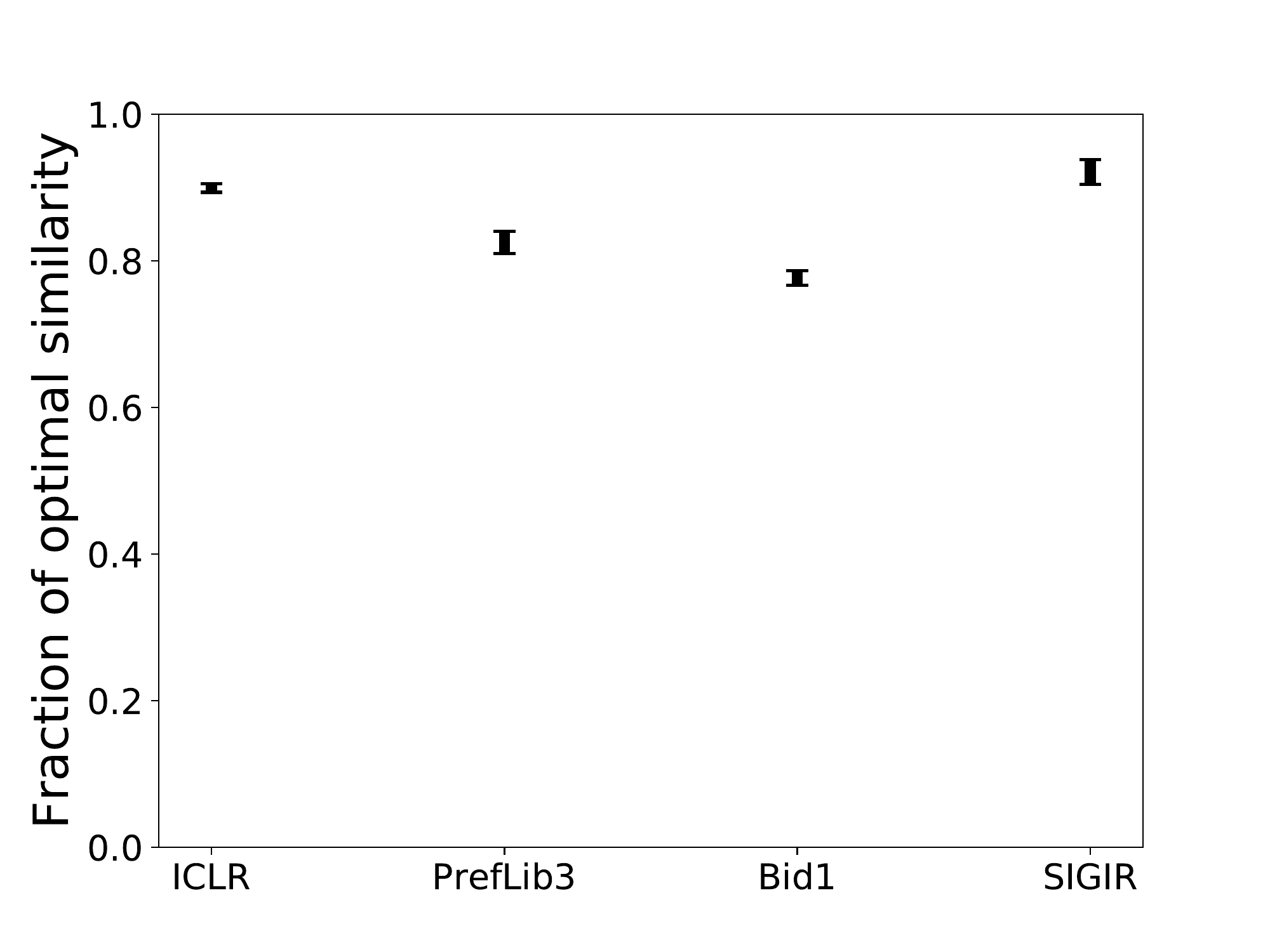}
    \caption{Range of values found over $10$ random reviewer splits when papers split between stages.} \label{fig:papersplit} 
\end{figure*}

\section{Submodularity of $\objfn$} \label{apdx:submod}
In this section, we show that the problem of optimizing $\objfn$ (or $\sobjfn$) is actually an instance of submodular optimization. For simplicity, we consider the case where $\numrev = (1 + \stagetwofrac)\numpap$ and $\revload = \papload^{(1)} = \papload^{(2)} = 1$.

For some set $\mathcal{N}$, a function $g: 2^\mathcal{N} \to \mathbb{R}$ is submodular if $g(A \cup \{u\}) - g(A) \geq g(B \cup \{u\}) - g(B)$ for all $A \subseteq B \subseteq \mathcal{N}$ and all $u \in \mathcal{N} \setminus B$. Since $\objfn$ and $\sobjfn$ are defined only for $\revset_2$ where $|\revset_2| = \stagetwofrac\numpap$, we modify them to be defined over $2^{\revset}$.

Recall that for subsets $\revset' \subseteq \revset$ and $\papset' \subseteq \papset$, desired paper load $\papload$, and maximum reviewer load $\revload$, $\mathcal{M}(\revset', \papset'; \revload, \papload)$ is the set of assignment matrices that assign a load of exactly $\papload$ to all papers in $\papset'$ (and a load of at most $\revload$ to all reviewers in $\revset'$). Define $\mathcal{M}'(\revset', \papset'; \revload, \papload)$ as the set of assignment matrices that assign a load of at most $\papload$ to all papers in $\papset'$ (and a load of at most $\revload$ to all reviewers in $\revset'$). Formally, $\adassign \in \mathcal{M}'(\revset', \papset'; \revload, \papload)$ if and only if $\sum_{\adrev \in \revset'} \adassign_{\adrev \adpap} \leq \papload$ for all $\adpap \in \papset'$, $\sum_{\adpap \in \papset'} \adassign_{\adrev \adpap} \leq \revload$ for all $\adrev \in \revset'$, and $\adassign_{\adrev \adpap} = 0$ for all $(\adrev, \adpap) \not\in \revset' \times \papset'$. 

Consider the modified version of $\valfn$
\begin{align*}
    \valfn'(\revset_2, \papset_2) &= \frac{1}{(1+\stagetwofrac) \numpap} \left[  \max_{A \in \mathcal{M}'(\revset \setminus \revset_2, \papset; 1, 1)} \sum_{\adrev \in \revset \setminus \revset_2, \adpap \in \papset} A_{\adrev \adpap} \simmat_{\adrev \adpap} +  \max_{B \in \mathcal{M}'(\revset_2, \papset_2; 1, 1)} \sum_{\adrev \in \revset_2, \adpap \in \papset_2} B_{\adrev \adpap} \simmat_{\adrev \adpap} \right]
\end{align*} 
which allows papers to be underloaded and so is defined for all $\revset_2$ and $\papset_2$. 
Define $\objfn_{sub}(\revset_2) = \mathbb{E}_{\papset_2}\left[\valfn'(\revset_2, \papset_2)\right]$ and $\sobjfn_{sub}(\revset_2) = \frac{1}{\numsamples} \sum_{i=1}^\numsamples \valfn'(\revset_2, \papset_2^{(i)})$ as modifications of $\objfn$ and $\sobjfn$.
Since $\simmat \geq 0$, there exists a maximum-similarity assignment from within $\mathcal{M}'(\revset', \papset'; 1, 1)$ that meets all paper load constraints with equality when $|\revset'| \geq |\papset'|$ and thus is contained in $\mathcal{M}(\revset', \papset'; 1, 1)$.  Also, $\mathcal{M}(\revset', \papset'; 1, 1) \subseteq \mathcal{M}'(\revset', \papset'; 1, 1)$. 
Thus, when $|\revset_2| = \stagetwofrac\numpap$, $\valfn(\revset_2, \papset_2) = \valfn'(\revset_2, \papset_2)$. Therefore, subject to the constraint $|\revset_2| = \stagetwofrac\numpap$, maximizing $\objfn_{sub}$ (or $\sobjfn_{sub}$) is equivalent to maximizing $\objfn$ (or $\sobjfn$).

\begin{proposition} \label{prop:submod} 
$\objfn_{sub}$ and $\sobjfn_{sub}$ are submodular in $\revset_2$. 
\end{proposition}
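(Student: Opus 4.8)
The plan is to show submodularity of $\objfn_{sub}$ by establishing it for the inner function $\valfn'(\cdot, \papset_2)$ for every fixed $\papset_2$, and then using the fact that a nonnegative combination (here, an expectation, or a finite average for $\sobjfn_{sub}$) of submodular functions is submodular. So the whole argument reduces to: for fixed $\papset_2$, the map $\revset_2 \mapsto \valfn'(\revset_2, \papset_2)$ is submodular in $\revset_2$.

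To analyze $\valfn'(\revset_2, \papset_2)$, I would split it into its two additive pieces (dropping the harmless $\frac{1}{(1+\stagetwofrac)\numpap}$ normalization, which preserves submodularity). The second piece, $h_2(\revset_2) = \max_{B \in \mathcal{M}'(\revset_2, \papset_2; 1,1)} \sum B_{\adrev\adpap}\simmat_{\adrev\adpap}$, is the value of a maximum-weight bipartite matching where one side is the available reviewer set $\revset_2$; this is a classic example of a monotone submodular function (it is the rank-plus-weight function of a transversal/partition-matroid-like structure — more precisely, the optimal value of a matching as a function of the allowed vertices on one side has decreasing marginal gains, since adding a reviewer can only help by an amount bounded by its best augmenting contribution, and that contribution shrinks as the set grows). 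The first piece, $h_1(\revset_2) = \max_{A \in \mathcal{M}'(\revset\setminus\revset_2, \papset; 1,1)}\sum A_{\adrev\adpap}\simmat_{\adrev\adpap}$, depends on the \emph{complement} $\revset_1 = \revset\setminus\revset_2$; it is submodular and monotone \emph{increasing} in $\revset_1$, hence $h_1$ as a function of $\revset_2$ is what one gets by complementing a monotone submodular function. The key point is that complementation turns a submodular function into a \emph{supermodular} one in general — but monotone submodular complemented gives a monotone \emph{decreasing} supermodular function, which is not what we want. So I need to be careful: the right statement is that $g(\revset_2) := -h_1(\revset\setminus\revset_2)$ picks up the wrong sign. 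Let me reconsider.

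The correct route: a matching-value function $f(\mathcal{S}) = $ max-weight matching using allowed left-vertices $\mathcal{S}$ is submodular AND monotone. For $h_1$, write $f_1(\revset_1) = \max_{A \in \mathcal{M}'(\revset_1, \papset; 1,1)} \sum A_{\adrev\adpap}\simmat_{\adrev\adpap}$, which is submodular in $\revset_1$. Then $h_1(\revset_2) = f_1(\revset \setminus \revset_2)$. For $A \subseteq B \subseteq \revset$ and $u \notin B$, the marginal $h_1(A\cup\{u\}) - h_1(A) = f_1(\revset\setminus A \setminus \{u\}) - f_1(\revset \setminus A)$, i.e. the \emph{negative} marginal of removing $u$ from the larger set $\revset\setminus A$; and $h_1(B\cup\{u\}) - h_1(B)$ is the negative marginal of removing $u$ from the smaller set $\revset\setminus B$. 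Submodularity of $f_1$ says marginals of \emph{adding} $u$ are larger for smaller sets; equivalently marginals of \emph{removing} $u$ are larger (more positive) for larger sets, so negated they are smaller (more negative) for larger sets. Since $\revset\setminus A \supseteq \revset\setminus B$, the removal-marginal at $\revset\setminus A$ dominates that at $\revset\setminus B$, hence the negated version at $\revset\setminus A$ is $\le$ the negated version at $\revset \setminus B$: that gives $h_1(A\cup\{u\})-h_1(A) \le h_1(B\cup\{u\})-h_1(B)$, exactly submodularity of $h_1$ in $\revset_2$. So both $h_1$ and $h_2$ are submodular in $\revset_2$, their sum is submodular, the normalization and expectation/average preserve submodularity, done.

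The main obstacle — and the step I would write out most carefully — is the first one: proving that the max-weight bipartite matching value is submodular (and the bookkeeping of the complement for the $h_1$ term, with its sign flips). The cleanest way is to invoke the standard fact that for any bipartite graph with edge weights, the function sending a subset $\mathcal{S}$ of one side to the maximum weight of a matching saturating no vertex outside $\mathcal{S}$ is monotone submodular; one can cite this or give the short exchange/augmenting-path argument. I would then need a one-line lemma: if $f$ is submodular on $2^{\revset}$ then $\revset_2 \mapsto f(\revset \setminus \revset_2)$ is also submodular (this is the ``submodularity is preserved under complementation of the ground element labeling'' fact — note this is different from the $\revset_2 \mapsto f(\revset) - f(\revset\setminus\revset_2)$ operation, which gives supermodularity; the distinction is precisely what the sign analysis above resolves). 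Everything after that — nonnegative linear combinations preserve submodularity, hence both the expectation defining $\objfn_{sub}$ and the finite average defining $\sobjfn_{sub}$ are submodular — is immediate.
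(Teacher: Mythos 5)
Your proposal is correct and follows essentially the same route as the paper: decompose $\valfn'$ into the two matching-value terms, invoke the standard fact that the max-weight bipartite matching value is submodular in the allowed reviewer set (the paper cites this rather than proving it), observe that replacing $\revset_2$ by its complement $\revset\setminus\revset_2$ preserves submodularity, and conclude via closure of submodularity under nonnegative combinations. The only difference is that you spell out the complementation sign analysis explicitly, which the paper dispatches in one sentence; your resolution of it is correct.
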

\begin{proof}
Note that $\max_{\adassign \in \mathcal{M}'(\revset', \papset'; 1, 1)} \sum_{\adrev \in \revset', \adpap \in \papset'} \adassign_{\adrev \adpap} \simmat_{\adrev \adpap}$ is a submodular function of the reviewer set $\revset'$ when the paper set $\papset'$ is held fixed~\cite{kulik2019generalized}. 
Submodularity in $\revset_2$ is equivalent to submodularity in $\revset_1 = \revset \setminus \revset_2$, so $\valfn'(\revset_2, \papset_2)$ is submodular in $\revset_2$. As sums over terms submodular in $\revset_2$, $\objfn_{sub}$ and $\sobjfn_{sub}$ are submodular in $\revset_2$.
\end{proof} 
Therefore, the two-stage paper assignment problem is an instance of maximizing a non-monotone submodular function subject to a cardinality constraint $|\revset_2| = \stagetwofrac\numpap$. 
However, no value oracle for the function $\objfn$ is available due to the expectation over $\papset_2$. Since a polynomial-time value oracle is available for $\sobjfn$, the paper~\cite{buchbinder2014submodular} gives an approximation algorithm achieving an approximation ratio of no greater than $0.5$ (depending on $\stagetwofrac$). This guarantee does not imply much about the assignment quality, since it can be trivially achieved by maximizing assignment similarity in the first stage only. Furthermore, it is known that achieving an approximation ratio of greater than $0.5$ requires an exponential number of queries to the value oracle; this holds true even without the cardinality constraint~\cite{gharan2011submodular, feige2011maximizing}. Thus, generic algorithms for submodular maximization are not helpful for our problem.

\section{Proofs}

\subsection{Proof of Theorem~\ref{thm:samplednph}} \label{apdx:samplednph}
We will show that it is NP-hard to determine if there exists a choice of $\revset_2$ with value $\sobjfn(\revset_2) = 1$ when $\numsamples=3$, for some instance of $\papset_2^{(1)}, \papset_2^{(2)}, \papset_2^{(3)}$. If such a choice exists, it would be the optimal solution. Therefore, any algorithm to optimize $\sobjfn$ would be able to determine if there exists a solution with value $1$, solving an NP-hard problem. This implies the NP-hardness of optimizing $\sobjfn$.

We reduce from 3-Dimensional Matching, an NP-complete problem \cite{karp1972reducibility}. An instance of 3-Dimensional Matching consists of three sets $X, Y, Z$ of size $s$, and a collection of tuples $T \subseteq X \times Y \times Z$. It asks whether there exists a selection of $s$ tuples from $T$ that includes each element of $X, Y$, and $Z$ exactly once. 

Given such an instance of 3-Dimensional Matching, we construct an instance of the two-stage paper assignment problem with $\numpap = |T| + 2s$, $\numrev = |T| + 3s$, and $\stagetwofrac = \frac{\numrev}{\numpap} - 1$ ($\revload = \papload^{(1)} = \papload^{(2)} = 1$). $\stagetwofrac\numpap=s$ papers and reviewers will be in stage two. The first $s$ papers correspond to elements of $X$, the next $s$ to elements of $Y$, and the next $s$ to elements of $Z$; the remaining $|T|-s$ papers are ``dummy papers'' that all reviewers can review. The first $3s$ reviewers are ``specialty reviewers'' corresponding to each of the first $3s$ papers, and the remaining $|T|$ reviewers correspond to each of the elements of $T$. We construct the $\numsamples=3$ sampled subsets $\papset_2^{(1)} = \{1, \dots, s\}, \papset_2^{(2)} = \{s+1, \dots, 2s\}, \papset_2^{(3)} = \{2s+1, \dots, 3s\}$, where the elements of these sets correspond to the elements of $X$, $Y$, and $Z$ respectively. We then construct $\simmat$ as follows. For $i \in [3s]$, set $\simmat_{i i} = 1$ and $\simmat_{ij} = 0$ for all $j \in [3s], j \neq i$. For the remaining reviewers $i \in \{3s+1, \dots, 3s+|T|\}$ and for papers $j \in [3s]$, set $\simmat_{ij} = 1$ if the element corresponding to $j$ in $X \cup Y \cup Z$ is included in the tuple corresponding to $i$ in $T$. Finally, for the remaining papers $j \in \{3s+1, \dots, |T|+2s\}$, set $\simmat_{ij} = 1$ for all reviewers $i$.

Suppose we have a ``yes'' instance of 3-Dimensional Matching, so there exists a choice of $s$ tuples from $T$ that cover each element of $X$, $Y$, and $Z$. Choose the corresponding $s$ reviewers as $\revset_2$ and the remaining reviewers as $\revset_1$. In stage one, we can assign each specialty reviewer to each of their corresponding papers and each of the remaining $|T|-s$ reviewers in $\revset_1$ to dummy papers. In stage two, for each of the three possible samples, there exists one reviewer that has similarity $1$ with each paper since the corresponding choice of tuples from $T$ cover $X$, $Y$, and $Z$. Therefore, this partition achieves $\sobjfn(\revset_2) = 1$.

Suppose we have a ``no'' instance of 3-Dimensional Matching, so no choice of $s$ tuples from $T$ covers each element of $X$, $Y$, and $Z$. We claim that no choice of $\revset_2$ will achieve $s$ total similarity in the second stage. First, suppose we include a speciality reviewer in $\revset_2$. This reviewer has similarity $1$ with only one paper, so there exists a sample of stage two papers $\papset_2^{(i)}$ such that this reviewer must be assigned to a paper with which it has similarity $0$. Therefore, $\sobjfn(\revset_2)$ cannot be $1$ when a specialty reviewer is in $\revset_2$ and so $\revset_2$ must be chosen from the reviewers corresponding to elements of $T$. However, no choice of $s$ tuples covers each element of $X$, $Y$, and $Z$. Therefore, for every choice of $\revset_2$, some reviewer must be assigned to a paper with which they have similarity $0$ for at least one of the sampled sets of stage two papers. This means that $\sobjfn(\revset_2) = 1$ is unachievable.

\subsection{Proof of Theorem~\ref{thm:generallb}} \label{apdx:generallb} 
For any $\stagetwofrac \in [0, 1]$, choose any $\numpap$ such that $\stagetwofrac \numpap \in \mathbb{Z}_+$. We construct the following similarity matrix.  Paper $i$ has similarity $1$ with reviewer $i$, and also with reviewer $\numpap + i$ if $i \leq \stagetwofrac \numpap$. All other similarities are $0$. 

On this example, the \optterm{} assignment for any $\papset_2$ is to assign reviewers $\{1, \dots, \numpap\}$ to papers in the first stage, since this maximizes the similarity across both stages. This choice gives a total similarity of $\numpap$ in stage one and an expected similarity of $\stagetwofrac^2 \numpap$ in stage two (since each reviewer's matching paper is in stage two with probability $\stagetwofrac$), for a total similarity of $\numpap(1 +  \stagetwofrac^2)$. Since there are $(1 + \stagetwofrac) \numpap$ total assignments, the expected mean similarity is $\frac{1 +  \stagetwofrac^2}{1 + \stagetwofrac}$.

Now consider the assignment after randomly splitting reviewers. Any paper $\adpap \leq \stagetwofrac\numpap$ has two reviewers $a, b$ with similarity $1$. For sufficiently large $\numpap \geq \frac{1 + 4 \stagetwofrac}{1 + \stagetwofrac}$, the expected value of this paper's assignment is 
\begin{align*}
    &(P[a \in \revset_1 \land b \in \revset_2] + P[b \in \revset_1 \land a \in \revset_2]) (1 + P[\adpap \in \papset_2]) \\
    &\qquad + P[a, b \in \revset_1] + P[a, b \in \revset_2] P[\adpap\in\papset_2] \\
    &=\left(2 \frac{\numpap}{(1 + \stagetwofrac)\numpap} \frac{\stagetwofrac \numpap}{(1 + \stagetwofrac) \numpap - 1}\right) (1 + \stagetwofrac) + \frac{\numpap}{(1 + \stagetwofrac)\numpap} \frac{\numpap -1}{(1 + \stagetwofrac)\numpap-1} + \frac{\stagetwofrac \numpap}{(1 + \stagetwofrac)\numpap} \frac{\stagetwofrac \numpap -1}{(1 + \stagetwofrac)\numpap-1} \stagetwofrac \\
    &\leq 2\frac{\stagetwofrac}{(1 + \stagetwofrac) - \frac{1}{n}}  + \frac{1}{(1 + \stagetwofrac)^2} + \frac{\stagetwofrac^3}{(1 + \stagetwofrac)^2} \\
    &\leq \frac{1 + 4 \stagetwofrac}{2(1 + \stagetwofrac)}  + \frac{1}{(1 + \stagetwofrac)^2} + \frac{\stagetwofrac^3}{(1 + \stagetwofrac)^2}.
\end{align*} 
There are $\stagetwofrac \numpap$ of these papers.

Any of the remaining papers $\adpap > \stagetwofrac\numpap$ has only one reviewer $a$ with similarity $1$. The expected value of this paper's assignment is 
\begin{align*}
    &P[a \in \revset_1] + P[a \in \revset_2] P[\adpap \in \papset_2] \\
    &= \frac{1 + \stagetwofrac^2}{1 + \stagetwofrac}.
\end{align*}
There are $(1 - \stagetwofrac)\numpap$ of these papers.

Totalling over all papers and dividing by the total number of assignments, the mean expected similarity of random split is at most
\begin{align*}
    &\left(\frac{1 + 4 \stagetwofrac}{2(1 + \stagetwofrac)}  + \frac{1}{(1 + \stagetwofrac)^2} + \frac{\stagetwofrac^3}{(1 + \stagetwofrac)^2}\right) \frac{\stagetwofrac}{1+\stagetwofrac}  + \frac{(1 + \stagetwofrac^2)(1 - \stagetwofrac)}{(1 + \stagetwofrac)^2}.
\end{align*}

The suboptimality is therefore at least
\begin{align*}
    &\frac{1 +  \stagetwofrac^2}{1 + \stagetwofrac} - \left(\frac{1 + 4 \stagetwofrac}{2(1 + \stagetwofrac)} + \frac{1}{(1 + \stagetwofrac)^2} + \frac{\stagetwofrac^3}{(1 + \stagetwofrac)^2}\right) \frac{\stagetwofrac}{1+\stagetwofrac} - \frac{(1 + \stagetwofrac^2)(1 - \stagetwofrac)}{(1 + \stagetwofrac)^2} \\
    &= \frac{(1 +  \stagetwofrac^2)2 \stagetwofrac}{(1 + \stagetwofrac)^2} - \left(\frac{1 + 4 \stagetwofrac}{2(1 + \stagetwofrac)} + \frac{1}{(1 + \stagetwofrac)^2} + \frac{\stagetwofrac^3}{(1 + \stagetwofrac)^2}\right) \frac{\stagetwofrac}{1+\stagetwofrac} \\
    &= \left(2 (1 +  \stagetwofrac^2)(1 + \stagetwofrac) - \frac{1}{2}(1 + 4 \stagetwofrac)(1 + \stagetwofrac) - 1 - \stagetwofrac^3\right) \frac{\stagetwofrac}{(1+\stagetwofrac)^3} \\
    &= \left(\frac{1}{2} - \frac{1}{2}\stagetwofrac +  \stagetwofrac^3 \right) \frac{\stagetwofrac}{(1+\stagetwofrac)^3} \\
    &\geq \frac{\stagetwofrac^4}{(1+\stagetwofrac)^3} .
\end{align*}

\subsection{Proof of Theorem~\ref{thm:lowrankub}} \label{apdx:lowrankub}
By Lemma~4 of \cite{rothvoss2014direct}, a rank $\rank$ similarity matrix $\simmat \in [0, 1]^{(1+\stagetwofrac)\numpap \times \numpap}$ can be factored into vectors $u_{\adrev} \in \mathbb{R}^\rank$ for each reviewer $\adrev$ and $v_{\adpap} \in \mathbb{R}^\rank$ for each paper $\adpap$ such that $\simmat_{\adrev \adpap} = \langle u_{\adrev}, v_{\adpap} \rangle$, $||u_{\adrev}||_2 \leq \rank^{1/4}$, and $||v_{\adpap}||_2 \leq \rank^{1/4}$. 

Consider the ball of radius $\rank^{1/4}$ in $\mathbb{R}^\rank$ in which the paper vectors $v_{\adpap}$ lie. We cover this ball with smaller ``cells'' by dividing the containing $\rank$-dimensional hypercube with side length $2 \rank^{1/4}$ along each dimension to create some number of smaller hypercubes. If we divide the containing hypercube into $t$ equal-sized segments along each dimension, there are $t^\rank$ cells in total and the maximum L2 distance between two points in a cell is $\frac{2 \rank^{3/4}}{t}$. 

We construct $L$ layers of cells in this way, where the cells increase in size between layers. Denote by $t_i$ the number of divisions along each dimension at layer $i$. We choose $t_i = 2^{Z_i}$ for some integer $Z_i$ for all layers $i$ so that each cell at layer $i$ is fully contained within a single cell at each higher layer. Denote by $s_i$ the desired maximum within-cell distance at layer $i$. This distance is achieved if $t_i$ is at least $\frac{2 \rank^{3/4}}{s_i}$, so the minimum such $t_i$ that is also a power of two is at most $\frac{4 \rank^{3/4}}{s_i}$. This gives that there are at most $z_i = \left(\frac{4 \rank^{3/4}}{s_i}\right)^{\rank}$ cells in layer $i$. 

In what follows, we say that a paper $\adpap$ is in some cell if its vector $v_{\adpap}$ is in the cell. (Papers on the border of multiple cells at layer $1$ are considered to be in an arbitrary one of the bordering cells so that each paper is in exactly one cell. At higher layers, such papers are considered to be in the cell containing their layer $1$ cell.) We say that a reviewer is in a cell if it is assigned to a paper in that cell by the \optterm{} paper assignment (given $\papset_2$).


Given $\papset_2$ and $\revset_2$ produced by random split, we proceed through layers from $1$ to $L$ in order to match reviewers to papers in the same stage. We match as many reviewers as possible to papers that are within the same cell at each layer $i$, and then continue to layer $i+1$. Define $\numpap_i$ as an upper bound on the number of reviewers unmatched before matching within layer $i$; $\numpap_1 = (1 + \stagetwofrac)\numpap$.  The difference in value between the assignment $\adassign$ produced in this way and the \optterm{} assignment $\adassign^*$ (which we call the ``value gap'') is 
\begin{align*}
    \sum_{\adrev \in \revset, \adpap \in \papset} (\adassign^*_{\adrev \adpap} - \adassign_{\adrev \adpap}) \langle u_{\adrev}, v_{\adpap} \rangle &= \sum_{\adrev \in \revset, \adpap \in \papset, \adpap^* \in \papset} \adassign_{\adrev \adpap} \adassign^*_{\adrev \adpap^*} \langle u_{\adrev}, v_{\adpap^*} - v_{\adpap} \rangle \\
    &\leq  \sum_{\adrev \in \revset, \adpap \in \papset, \adpap^* \in \papset} \adassign_{\adrev \adpap} \adassign^*_{\adrev \adpap^*}   ||u_{\adrev}||_2 ||v_{\adpap^*} - v_{\adpap} ||_2 \\
    &\leq \rank^{1/4} \sum_{\adrev \in \revset, \adpap \in \papset, \adpap^* \in \papset} \adassign_{\adrev \adpap} \adassign^*_{\adrev \adpap^*}  ||v_{\adpap^*} - v_{\adpap} ||_2.
\end{align*}

Consider some cell containing $x$ papers. All $x$ of these papers are in stage one. Define $Hyp(N, K, M)$ as the hypergeometric distribution where $N$ is the population size, $M$ is the number of draws, and $K$ is the number of successes in the population; by symmetry $Hyp(N, K, M)$ is equivalent to $Hyp(N, M, K)$. The number of stage two papers has distribution $Hyp(\numpap, x, \stagetwofrac\numpap)$. With probability $1 - 2\delta$, by Hoeffding's inequality~\cite{hoeffding1994probability} and using the symmetry property, within $\stagetwofrac x \pm \sqrt{\frac{x}{2} \ln(1/ \delta)}$ of the papers in this cell are also in stage two. (In this section, $\ln$ indicates the logarithm with base $e$ and $\log$ indicates the logarithm with base $2$.)
Call $y$ the total number of reviewers in the cell. There are exactly the same number of reviewers as total stage one and two papers in this cell, so $y$ is within $(1 + \stagetwofrac)x \pm \sqrt{\frac{x}{2} \ln(1/ \delta)}$ and is at most $2x$.
Since $\revset_2$ is produced by random split, the number of reviewers in this cell in stage one has distribution $Hyp((1+\stagetwofrac)\numpap, y, \numpap)$ and the number of reviewers in this cell in stage two has distribution $Hyp((1+\stagetwofrac)\numpap, y, \stagetwofrac\numpap)$. By Hoeffding's inequality and again using symmetry, the number of reviewers in the cell in stage one is at most $\frac{y}{1+\stagetwofrac} + \sqrt{\frac{y}{2} \ln(1/\delta)} \leq x + \sqrt{\frac{x}{2} \ln(1/ \delta)} + \sqrt{x \ln(1/\delta)} \leq x + \sqrt{3x \ln(1/\delta)}$ with probability $1 - 2 \delta$ (conditioned on the earlier event concerning the number of stage-two papers). By this argument, with probability $1 - 4 \delta$ (again conditioned on the earlier event), there are within $x \pm \sqrt{3x \ln(1 / \delta)}$ reviewers in stage one in the cell and within $\stagetwofrac x \pm \sqrt{3x \ln(1/ \delta)}$ reviewers in stage two in the cell. In total, there are at most $\numpap L$ cells with a non-zero number of papers across all layers and so the total probability of error in any of the bounds is at most $6 \delta L \numpap$. 

Assume that this high probability event occurs. In layer $i$, in any cell $j$ with $x_j$ papers (all of which are in stage one), the number of stage one reviewers is within $x_j \pm \sqrt{3x_j \ln(1/ \delta)}$. Any reviewers in this cell matched at earlier layers must have been matched to papers also in this cell. Therefore, the number of unmatched stage one reviewers after matching within this cell is at most $\sqrt{3x_j \ln(1/ \delta)}$. The number of stage two reviewers is within $\stagetwofrac x_j \pm \sqrt{3x_j \ln(1/ \delta)}$ and the number of stage two papers is within $\stagetwofrac x_j \pm \sqrt{\frac{x_j}{2} \ln(1/ \delta)}$. Therefore, the number of unmatched stage two reviewers after matching within this cell is at most $\sqrt{6x_j \ln(1/ \delta)}$. In total over both stages, the total number of unmatched reviewers after matching in layer $i$ is at most $\numpap_{i+1} = \sum_{j=1}^{z_i} \sqrt{18 x_j \ln(1/ \delta)} \leq \sqrt{18 z_i \numpap \ln(1/ \delta)}$. All of the reviewers matched at layer $i$ are matched to papers at most $s_i$ away from their optimal paper assignment. Across all layers, the value gap is therefore bounded by $\rank^{1/4} \left(\sum_{i=1}^{L-1} \numpap_i s_i + 2 \numpap_L \rank^{1/4}\right)$, since everything at layer $L$ is matched to whatever remains regardless of $s_L$.

We now determine how to set $s_i$ for all layers $i$. We choose $s_1 = s$ and set other $s_i$ such that $\numpap_i s_i = (1+\stagetwofrac )\numpap s$ for all $i$.  
This leads to the recursively-defined values of $s_i = \frac{(1+\stagetwofrac )\numpap s}{\numpap_i}$, $z_i = (4\rank^{3/4})^{\rank} s_i^{-\rank}$, $\numpap_{i+1} = \sqrt{z_i \numpap 18 \ln(1/ \delta)}$ with initial values $\numpap_1 = (1+\stagetwofrac )\numpap$ and $s_1 = s$. Unrolling the iteration, we see that 
\begin{align*} 
    s_i &= s_{i-1}^{\frac{\rank}{2}} \numpap^{\frac{1}{2}} s (4\rank^{3/4})^{-\frac{\rank}{2}} (18 \ln(1/\delta))^{-\frac{1}{2}} (1 + \stagetwofrac) \\
    &=  \numpap^{ \frac{1}{2} \sum_{j=0}^{i-2} \left(\frac{\rank}{2}\right)^{j} } 
    s^{ \sum_{j=0}^{i-1} \left( \frac{\rank}{2} \right)^{j} }
    (4\rank^{3/4})^{ -\frac{\rank}{2} \sum_{j=0}^{i-2} \left(\frac{\rank}{2}\right)^{j} }
    (18 \ln(1/ \delta))^{ -\frac{1}{2} \sum_{j=0}^{i-2} \left(\frac{\rank}{2}\right)^{j} } (1+\stagetwofrac )^{ \sum_{j=0}^{i-2} \left(\frac{\rank}{2}\right)^{j} } 
\end{align*}
for $i \geq 2$. 
Defining $\epsilon$ such that $s = \left(\frac{(1+\stagetwofrac )^2 \numpap}{18 \ln(1/ \delta)} \right)^\epsilon$,  
\begin{align*}
    s_i &= \left(\frac{(1+\stagetwofrac )^2 \numpap}{18 \ln(1/ \delta)}\right)^{ \frac{1}{2} \sum_{j=0}^{i-2} \left(\frac{\rank}{2}\right)^{j} + \epsilon \sum_{j=0}^{i-1} \left( \frac{\rank}{2} \right)^{j} }
    (4\rank^{3/4})^{ -\frac{\rank}{2} \sum_{j=0}^{i-2} \left(\frac{\rank}{2}\right)^{j} }
\end{align*} 
for $i \geq 2$.
This gives a value gap of at most $\rank^{1/4} (1+\stagetwofrac )^{1 + 2\epsilon} \numpap^{1 + \epsilon} (18 \ln(1/ \delta))^{-\epsilon} \left((L-1) + 2 \rank^{1/4} s_L^{-1} \right)$. We now continue in cases on the value of $\rank$. 

\textbf{Case $\rank = 1$.} Note that $\sum_{j=0}^{i-1} \left(\frac{\rank}{2}\right)^{j} = 2\left(1 - \frac{1}{2^i}\right)$, so  
\begin{align*}
    s_i &= \left(\frac{(1+\stagetwofrac )^2 \numpap}{18 \ln(1/ \delta)}\right)^{ 1 - \frac{1}{2^{i-1}} + \epsilon 2\left(1 - \frac{1}{2^i}\right) }
    (4\rank^{3/4})^{ -1 + \frac{1}{2^{i-1}} }.
\end{align*} 
Choose $\epsilon = - \frac{1}{2} + \frac{1}{2(2^L - 1)} $ so that $s_L = (4\rank^{3/4})^{ -1 + \frac{1}{2^{L-1}} }$. Setting $\delta = (2\numpap)^{-3}$ and $L = \log \log \numpap$, for sufficiently large $\numpap$, the value gap is bounded by 
\begin{align*}
    &\rank^{1/4} (1+\stagetwofrac )^{\frac{1}{2^L - 1}} \numpap^{\frac{1}{2} + \frac{1}{2(2^L - 1)}} (18 \ln(1/ \delta))^{\frac{1}{2} -\frac{1}{2(2^L - 1)} } \left((L-1) + 2 \rank^{1/4} (4\rank^{3/4})^{ 1 - \frac{1}{2^{L-1}} } \right) \\
    &\leq (1+\stagetwofrac ) \numpap^{\frac{1}{2}} 2^{\log(\numpap) \frac{1}{2(\log \numpap - 1)}} (54 \ln(2\numpap))^{\frac{1}{2}} \left((\log\log \numpap-1) + 8\right) \\
    &\leq 2 (1+\stagetwofrac ) \numpap^{\frac{1}{2}} (54 \ln(2\numpap))^{\frac{1}{2}} \left((\log\log \numpap -1) + 8\right) \\
    &\leq C (\log \numpap)^{\eta} \numpap^{\frac{1}{2}}
\end{align*}
for some constants $C, \eta$ with probability $1 - \frac{6 \log \log \numpap}{8 \numpap^2} \geq 1 - \frac{1}{\numpap}$.

\textbf{Case $\rank = 2$.} Note that $\sum_{j=0}^{i-1} \left(\frac{\rank}{2}\right)^{j} = i$, so 
\begin{align*}
    s_i &= \left(\frac{(1+\stagetwofrac )^2 \numpap}{18 \ln(1/ \delta)}\right)^{ \frac{1}{2} (i-1) + \epsilon i}
    (4\rank^{3/4})^{ -i+1 }.
\end{align*} 
Choose $\epsilon = - \frac{1}{2} + \frac{1}{2L} $ so that $s_L = (4\rank^{3/4})^{ -L+1 }$. Setting $\delta = (2\numpap)^{-3}$ and $L = \log \log \numpap$, for sufficiently large $\numpap$, the value gap is bounded by
\begin{align*}
    &\rank^{1/4} (1+\stagetwofrac )^{\frac{1}{L}} \numpap^{\frac{1}{2} + \frac{1}{2L}} (18 \ln(1/ \delta))^{\frac{1}{2} - \frac{1}{2L}} \left((L-1) + 2 \rank^{1/4} (4\rank^{3/4})^{ L-1 } \right) \\
    &\leq \rank^{1/4} (1+\stagetwofrac ) \numpap^{\frac{1}{2} + \frac{1}{2 \log \log \numpap}} (54 \ln(2\numpap))^{\frac{1}{2}} \left((\log\log \numpap - 1) + 2 \rank^{1/4} (\log \numpap)^{ \log (4\rank^{3/4})  } \right) \\
    &\leq C (\log \numpap)^{\eta} \numpap^{\frac{1}{2} + \frac{1}{\log \log \numpap}} 
\end{align*} 
for some constants $C, \eta$ with probability $1 - \frac{6 \log \log \numpap}{8\numpap^2} \geq 1 - \frac{1}{\numpap}$. 

\textbf{Case $\rank \geq 3$.} Note that $\sum_{j=0}^{i-1} \left(\frac{\rank}{2}\right)^{j} = \frac{\left(\frac{\rank}{2}\right)^i - 1}{\frac{\rank}{2} - 1}$, so 
\begin{align*}
    s_i &= \left(\frac{(1+\stagetwofrac )^2 \numpap}{18 \ln(1/ \delta)}\right)^{ \left(\frac{1}{2} + \epsilon \right) \left(\frac{(\rank/2)^i - 1}{(\rank/2) - 1}\right) - \frac{1}{2} \left(\frac{\rank}{2}\right)^{i-1} }
    (4\rank^{3/4})^{ -\frac{\rank}{2} \left( \frac{(\rank/2)^i - 1}{(\rank/2) - 1} \right) + \left( \frac{\rank}{2} \right)^i }.
\end{align*} 
Choose $\epsilon = - \frac{1}{\rank} +  \frac{\left( \frac{1}{2} - \frac{1}{\rank} \right)}{(\rank/2)^L - 1}$ so that $s_L = (4\rank^{3/4})^{- \frac{\rank}{2} \left( \frac{(\rank/2)^L - 1}{(\rank/2) - 1} \right) + \left( \frac{\rank}{2} \right)^L }$. Setting $\delta = (2\numpap)^{-3}$ and $L = \frac{\log\log \log \numpap}{\log (\rank/2)}$, for sufficiently large $\numpap$,  the value gap is bounded by
\begin{align*}
    &\rank^{1/4} (1+\stagetwofrac )^{1 - \frac{2}{\rank} +  \frac{2\left( \frac{1}{2} - \frac{1}{\rank} \right)}{(\rank/2)^L - 1}} \numpap^{1 - \frac{1}{\rank} +  \frac{\left( \frac{1}{2} - \frac{1}{\rank} \right)}{(\rank/2)^L - 1}} (18 \ln(1/ \delta))^{\frac{1}{\rank} -  \frac{\left( \frac{1}{2} - \frac{1}{\rank} \right)}{(\rank/2)^L - 1}} \left((L-1) + 2 \rank^{1/4} (4\rank^{3/4})^{ \frac{\rank}{2} \left( \frac{(\rank/2)^L - 1}{(\rank/2) - 1} \right) - \left( \frac{\rank}{2} \right)^L } \right) \\
    &\leq \rank^{1/4} (1+\stagetwofrac ) \numpap^{1 - \frac{1}{\rank} +  \frac{\left( \frac{1}{2} - \frac{1}{\rank} \right)}{\log\log \numpap - 1}} (54 \ln(2\numpap))^{\frac{1}{\rank}} \left(\frac{\log\log \log \numpap}{\log (\rank/2)} -1 + 2 \rank^{1/4} (4\rank^{3/4})^{2 \log\log \numpap} \right) \\
    &\leq \rank^{1/4} (1+\stagetwofrac ) \numpap^{1 - \frac{1}{\rank} +  \frac{\left( \frac{1}{2} - \frac{1}{\rank} \right)}{\log\log \numpap - 1}} (54 \ln(2\numpap))^{\frac{1}{\rank}} \left(\frac{\log\log \log \numpap}{\log (\rank/2)} -1 + 2 \rank^{1/4} (\log \numpap)^{ 2\log(4\rank^{3/4})} \right) \\
    &\leq C (\log \numpap)^{\eta} \numpap^{1 - \frac{1}{\rank} +  \frac{1}{\log\log \numpap }}
\end{align*}
for some constants $C, \eta$ with probability $1 - \frac{6 \log\log\log \numpap}{8 \log (\rank/2) \numpap^2} \geq 1 - \frac{1}{\numpap}$. 

To get the suboptimality, divide the value gap by $(1+\stagetwofrac )\numpap \leq 2 \numpap$.

\subsection{Proof of Theorem~\ref{thm:lowranklb}} \label{apdx:lowranklb}
\paragraph{(a)}
Choose $\numpap$ large enough such that $\rank \leq \frac{\numpap}{2}$. We define $\rank$ groups of reviewers and papers such that all papers have similarity $1$ with all reviewers within the same group and similarity $0$ with all other reviewers. Group $1$ contains  all papers $\adpap \leq \lceil \frac{\numpap}{2} \rceil$ and all reviewers $\adrev \leq 2 \lceil \frac{\numpap}{2} \rceil$. Each other group $2, \dots, \rank$ contains $2$ reviewers and $1$ paper. All papers and reviewers not in any group have all similarities $0$. This similarity matrix has rank $\rank$. 

The \optterm{} assignment for any $\papset_2$ will split the reviewers in each group evenly between stages, so all papers in any group can be assigned a similarity-$1$ reviewer in both stages. This gives a total similarity of at least $\numpap + 2(\rank - 1)$.

Define $X$ as the random variable representing the number of reviewers from group 1 selected to be in $\revset_2$. $X \sim Hyp(2\numpap, 2 \lceil \numpap / 2 \rceil, \numpap)$, the hypergeometric distribution corresponding to the number of successes when $\numpap$ items are sampled without replacement from a population of $2\numpap$ items where $2 \lceil \numpap / 2 \rceil$ of them are successes. By Lemma~2.1 of~\cite{blais2010lower}, $P[X = t] \leq \frac{C}{\sigma}$ for any $t \geq 0$ where $\sigma^2 = \frac{\lceil \numpap / 2 \rceil}{2} \left(1 - \frac{\lceil \numpap / 2 \rceil}{\numpap} \right)$ and $C$ is an absolute constant. Since $\sigma^2 \geq \frac{\numpap}{4} \left(1 - \frac{1}{2} - \frac{1}{\numpap} \right) \geq \frac{\numpap}{16}$ for $\numpap \geq 4$, $P[X = t] \leq \frac{4C}{\sqrt{\numpap}}$ for sufficiently large $\numpap$. Therefore, $P\left[ \frac{\numpap}{2} - \frac{\sqrt{\numpap}}{16C} + 1 \leq X \leq \frac{\numpap}{2} + \frac{\sqrt{\numpap}}{16C} \right] \leq \frac{1}{2}$. With probability at least $\frac{1}{2}$, at least $\frac{\sqrt{\numpap}}{16C}$ of the reviewers in group $1$ cannot be matched to an optimal paper in their stage. Therefore, the total expected similarity is no greater than $\numpap - \frac{\sqrt{\numpap}}{32C} + 2(\rank - 1)$ and the expected difference in value from the \optterm{} assignment is at least $\frac{\sqrt{\numpap}}{32C}$. Since there are $2\numpap$ assignments, the suboptimality is at least $\frac{1}{64C\sqrt{n}}$.

\paragraph{(b)}
We construct a similarity matrix by creating a vector in $\mathbb{R}^{\rank}$ for each reviewer and each paper, and setting the similarity between that reviewer and that paper to be the inner product of their corresponding vectors. 
Consider the cube in $\mathbb{R}^\rank$ contained in $[0, 1 /\sqrt{\rank}]^\rank$. 
We construct a grid of points within this cube by evenly spacing $z = \lceil \numpap^{1 / \rank} \rceil$ along each axis and filling in the remaining points so that there are $z^k \geq \numpap$ grid points in total.
Place the $\numpap$ paper vectors at arbitrary (unique) points on this grid, so that each vector is at least $\frac{1}{\sqrt{\rank} \lceil \numpap^{1/\rank} \rceil} \geq \frac{1}{2 \sqrt{\rank} \numpap^{1/\rank}}$ away from any other paper vector. Place the $2\numpap$ reviewer vectors such that $2$ are at each grid point with a paper vector. The inner product of any two vectors is in $[0, 1]$, so this is a valid similarity matrix. The $2\numpap \times \rank$ and $\numpap \times \rank$ matrices where the rows are the reviewer and paper vectors respectively have linearly independent columns and so have rank $\rank$; thus, the similarity matrix has rank $\rank$. 

We claim that the \optterm{} matching across both stages chooses one reviewer from each grid point and matches it to the paper at the same point. Suppose we have a matching where this is not the case. There must exist a cycle of matched reviewer and paper pairs where the corresponding vectors are not paired with themselves and are instead paired $(x_1, x_2), (x_2, x_3), \dots, (x_K, x_1)$. This cycle has a total similarity of (using $x_{K+1}$ to refer to $x_1$)
\begin{align*}
    \sum_{i=1}^K \langle x_i, x_{i+1} \rangle &\leq \sum_{i=1}^K ||x_i||_2 ||x_{i+1}||_2 \\
    &\leq \sum_{i=1}^K ||x_i||_2^2 \\
    &= \sum_{i=1}^K \langle x_i, x_i \rangle
\end{align*}
so the matching value can be improved by changing the cycle so that reviewers and papers at the same grid point are matched. The second inequality is because $2ab \leq a^2 + b^2$ for any $a, b \in \mathbb{R}$. Therefore, the claimed matching is indeed optimal.

Now, consider the sample of $\numpap$ reviewers in stage one produced by a random split of reviewers. The following lemma shows that with probability $1 - O(e^{-\numpap/10})$, $\Theta(\numpap)$ grid points have both reviewers present in stage one under random split. 
\begin{lemma} \label{lem:revconc}
There exists $\numpap_0$ and a constant $\zeta$ such that for all $\numpap \geq \numpap_0$, the probability that less than $\numpap/100$ grid points have both reviewers in stage one after a random reviewer split is at most $\zeta e^{- \numpap/10}$. 
\end{lemma}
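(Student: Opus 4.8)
The plan is to prove a lower-tail concentration bound for $W$, the number of grid points whose two reviewer vectors are both placed in stage one, by coupling the sampling-without-replacement split with independent coin flips. First I would record the combinatorial setup implied by the construction: with $\stagetwofrac = 1$ the stage-one reviewer set $\revset_1$ is uniform over the $\numpap$-element subsets of the $2\numpap$ reviewers, which occur in $\numpap$ disjoint pairs (one pair at each grid point holding a paper vector), and $W$ is exactly the number of these pairs contained in $\revset_1$. Thus it suffices to show $\Pr[W < \numpap/100] \le \zeta e^{-\numpap/10}$ for all large $\numpap$.

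The key step is the coupling. Let each of the $2\numpap$ reviewers be placed independently into a ``candidate set'' with probability $1/2$; write $N'$ for its size and $W'$ for the number of pairs fully contained in it. Since under this process all subsets of a given size are equally likely, conditioning on $N' = \numpap$ makes the candidate set uniform over $\numpap$-subsets, so $(W' \mid N' = \numpap)$ has the same distribution as $W$. By independence, $W' \sim \mathrm{Bin}(\numpap, 1/4)$ and $N' \sim \mathrm{Bin}(2\numpap, 1/2)$, and therefore
\[
\Pr[W < \numpap/100] \;=\; \Pr[W' < \numpap/100 \mid N' = \numpap] \;\le\; \frac{\Pr[W' < \numpap/100]}{\Pr[N' = \numpap]}.
\]

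It then remains to bound the two factors. For the numerator, a multiplicative Chernoff bound on the lower tail of $\mathrm{Bin}(\numpap, 1/4)$ (mean $\numpap/4$, deviation below the mean by the factor $1 - 4/100$) yields $\Pr[W' < \numpap/100] \le e^{-c\numpap}$ for an explicit constant $c$ strictly larger than $1/10$. For the denominator, the standard estimate $\binom{2\numpap}{\numpap} \ge 4^{\numpap}/(2\sqrt{\numpap})$ gives $\Pr[N' = \numpap] \ge 1/(2\sqrt{\numpap})$. Combining the two bounds gives $\Pr[W < \numpap/100] \le 2\sqrt{\numpap}\, e^{-c\numpap}$; since $c > 1/10$, the slack in the exponent absorbs the $\sqrt{\numpap}$ factor for all $\numpap$ beyond some threshold $\numpap_0$, so the probability is at most $e^{-\numpap/10}$ there, and one picks $\zeta$ large enough to also cover $\numpap < \numpap_0$. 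I do not expect a genuine obstacle: the only real care is numerical, namely verifying that the Chernoff exponent comfortably exceeds $1/10$ so that it dominates the normalization $\Pr[N' = \numpap]^{-1} = \Theta(\sqrt{\numpap})$. (One could alternatively apply a bounded-differences inequality for uniform random permutations directly to $W$, using that any transposition changes $W$ by at most $1$, but that route gives weaker constants, so I would prefer the coin-flip coupling.)
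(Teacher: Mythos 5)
Your proof is correct, but it takes a genuinely different route from the paper's. The paper argues by direct counting: it upper-bounds the number of reviewer-to-stage assignments having exactly $a$ fully-stage-one pairs by $\binom{\numpap}{a}3^{\numpap-a}$, observes this is increasing in $a$ for $a \leq (\numpap+1)/4$ so the bad event contributes at most $(\numpap/100)\binom{\numpap}{\numpap/100}3^{\numpap-\numpap/100} \leq \exp(1.15\numpap + \ln(0.01\numpap))$ assignments, and divides by the Stirling lower bound $\binom{2\numpap}{\numpap} \geq \frac{2\sqrt{\pi}}{e^2}\exp(1.35\numpap - 0.5\ln\numpap)$ to get $\frac{e^2}{2\sqrt{\pi}}e^{-0.1\numpap}$. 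Your approach instead transfers the problem to independent coin flips: the coupling identity $\Pr[W < \numpap/100] = \Pr[W' < \numpap/100 \mid N' = \numpap] \leq \Pr[W' < \numpap/100]/\Pr[N'=\numpap]$ is valid (the i.i.d.\ placement conditioned on size $\numpap$ is indeed uniform over $\numpap$-subsets), the Chernoff exponent works out to $\delta^2\mu/2 = (0.96)^2(\numpap/4)/2 = 0.1152\numpap$, which exceeds $\numpap/10$ with enough room to absorb the $2\sqrt{\numpap}$ normalization, and the central binomial estimate $\Pr[N'=\numpap] \geq 1/(2\sqrt{\numpap})$ is standard. The two arguments land on the same $e^{-\Omega(\numpap)}$ bound with essentially the same constant; yours is the more modular and reusable one (the Poissonization-plus-conditioning trick applies verbatim to any pair statistic of a uniform split), while the paper's is fully self-contained elementary combinatorics with explicit numerical constants and no appeal to a named concentration inequality. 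Only a cosmetic quibble: the phrase ``deviation below the mean by the factor $1-4/100$'' should read that the threshold is $(1-\delta)\mu$ with $1-\delta = 4/100$, i.e.\ $\delta = 0.96$; your subsequent computation makes clear this is what you meant.
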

\begin{proof}
There are at most ${\numpap \choose a} 3^{\numpap-a}$ ways to assign reviewers to stages such that $a$ pairs of reviewers at the same grid point are in stage one.  For all $\numpap$ and $a$ such that $\numpap+1 \geq 4a$, ${\numpap \choose a} 3^{\numpap-a} = {\numpap \choose a-1} \frac{\numpap+1-a}{a} 3^{\numpap-a} \geq {\numpap \choose a-1} 3^{\numpap-a+1}$.  Setting $a = \numpap/100$, ${\numpap \choose \numpap/100} \leq (100e)^{\numpap/100} \leq \exp(0.06\numpap)$ and $3^{\numpap-(\numpap/100)} \leq \exp(1.09 \numpap)$. Therefore, the number of ways to assign reviewers to stages such that less than $\numpap/100$ pairs are in stage one is at most $\sum_{b=0}^{(\numpap/100) - 1} {\numpap \choose b} 3^{\numpap-b} \leq (\numpap/100) {\numpap \choose \numpap/100} 3^{\numpap-(\numpap/100)} \leq \exp(1.15 \numpap + \ln(0.01 \numpap) )$. Using Sterling inequalities~\cite{robbins1955remark}, the total number of ways to assign reviewers to stages is ${2\numpap \choose \numpap} \geq \frac{2\sqrt{\pi}}{e^2 \sqrt{\numpap}} 2^{2\numpap} \geq \frac{2\sqrt{\pi}}{e^2} \exp(1.35\numpap - 0.5\ln(\numpap))$. Therefore, the probability that less than $\numpap/100$ grid points have both reviewers in stage one is at most $\frac{e^2}{2\sqrt{\pi}} \exp( -0.2 \numpap + \ln(0.01 \numpap) + 0.5\ln(\numpap) ) \leq \frac{e^2}{2\sqrt{\pi}} \exp( -0.1 \numpap) \exp(-0.1\numpap + \ln(0.01 \numpap) + 0.5\ln(\numpap) ) \leq \frac{e^2}{2\sqrt{\pi}} \exp( -0.1 \numpap)$ for sufficiently large $\numpap$.
\end{proof}
Therefore, with high probability, at least $\numpap/100$ reviewers must be assigned to a paper at a different grid point.

Consider the assignments produced  in each stage after random split, and consider the reviewers not assigned to their optimal papers by these assignments. From the set of vectors corresponding to these suboptimally-assigned reviewers, we can construct some number $K$ of disjoint cycles $C_j = \{x_1^{(j)}, \dots, x_{K_j}^{(j)}\}$, where a reviewer with vector $x_i^{(j)}$ is assigned to the paper with vector $x_{i+1}^{(j)}$ when the optimal assignment would assign them to the paper with vector $x_i^{(j)}$. By Lemma~\ref{lem:revconc}, $\sum_{j=1}^{K} |C_j| \geq \frac{\numpap}{100}$. The difference in value between the random-split assignments and the optimal assignment is
\begin{align*}
    \sum_{j=1}^{K} \sum_{i=1}^{K_j} \langle x_i^{(j)}, x_i^{(j)} - x_{i+1}^{(j)} \rangle
    &= \frac{1}{2} \sum_{j=1}^{K} \sum_{i=1}^{K_j} || x_i^{(j)} - x_{i+1}^{(j)} ||_2^2 \\
    &\geq \frac{1}{8\rank} \numpap^{-2/\rank} \sum_{j=1}^{K} |C_j| \\
    &\geq \frac{1}{8\rank} \numpap^{-2/\rank} \left( \frac{\numpap}{100} \right) 
\end{align*}
with probability at least $1 - \zeta e^{-\numpap/10}$ for sufficiently large $\numpap$. Dividing by $2\numpap$, the suboptimality is at least $\frac{1}{1600 \rank} \numpap^{-2/\rank}$.

\subsection{Proof of Theorem~\ref{thm:oneroundub}} \label{apdx:oneroundub}
In this section, we state and prove a more general version of the bound in Theorem~\ref{thm:oneroundub} that does not require that $\stagetwofrac \loadscale$ be integral. This result immediately implies the result of  Theorem~\ref{thm:oneroundub}.

In the proof, we use the following lemma. We prove this lemma following the proof of the main theorem. For some set $\mathcal{N}$ and some constant $p \in [0, 1]$, define distribution $\mathcal{I}_{p}(\mathcal{N})$ as the distribution over all subsets $A \subseteq \mathcal{N}$ induced by choosing each item $x \in \mathcal{N}$ to be in $A$ independently with probability $p$. 
Recall from Appendix~\ref{apdx:submod} the definition of $\valfn'$, a modified version of $\valfn$ that allows papers to be underloaded (i.e., assigned fewer reviewers than their load). 
\begin{lemma} \label{lem:indep}
Consider the modified version of $\objfn$: 
$\objfn'(\revset_2) = \mathbb{E}_{\papset_2 \sim \mathcal{I}_{\stagetwofrac}(\papset)}\left[\valfn'(\revset_2, \papset_2)\right]$.
$\objfn'$ draws $\papset_2 \sim \mathcal{I}_{\stagetwofrac}(\papset)$ rather than $\papset_2 \sim \mathcal{U}_{\stagetwofrac \numpap}(\papset)$ and allows papers to be underloaded. Then,
\begin{align*}
    \mathbb{E}_{\revset_2 \sim \mathcal{I}_{\stagetwofrac / (1 + \stagetwofrac)}(\revset)}\left[ \objfn'(\revset_2) \right] \leq \mathbb{E}_{\revset_2 \sim \mathcal{U}_{(\stagetwofrac / (1 + \stagetwofrac)) \numrev}(\revset)}[\objfn(\revset_2)].
\end{align*}
\end{lemma}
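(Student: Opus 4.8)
The plan is to ``de-Poissonize'' the two independent samplings one at a time: first replace $\papset_2 \sim \mathcal{I}_{\stagetwofrac}(\papset)$ by $\papset_2 \sim \mathcal{U}_{\stagetwofrac\numpap}(\papset)$, and then replace $\revset_2 \sim \mathcal{I}_{p}(\revset)$ (with $p = \frac{\stagetwofrac}{1+\stagetwofrac}$) by $\revset_2 \sim \mathcal{U}_{p\numrev}(\revset)$. Note that under $\mathcal{I}_{\stagetwofrac}(\papset)$ we have $|\papset_2| \sim \mathrm{Bin}(\numpap,\stagetwofrac)$ with mean $\stagetwofrac\numpap$, and under $\mathcal{I}_{p}(\revset)$ we have $|\revset_2| \sim \mathrm{Bin}(\numrev,p)$ with mean $p\numrev = \frac{\stagetwofrac}{1+\stagetwofrac}\numrev$; both means are integers in the setting of the lemma. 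The engine for each step is the elementary fact that for any submodular $h : 2^{\mathcal{N}} \to \mathbb{R}$, the map $g(k) = \mathbb{E}_{T \sim \mathcal{U}_k(\mathcal{N})}[h(T)]$ is concave on $\{0,1,\dots,|\mathcal{N}|\}$: drawing $A \sim \mathcal{U}_{k-1}(\mathcal{N})$ and an ordered pair $(e,e')$ of distinct uniform elements of $\mathcal{N} \setminus A$, one has $A \cup \{e\} \sim \mathcal{U}_k(\mathcal{N})$ and, conditional on it, $e'$ is uniform in its complement, so the second difference of $g$ at $k$ equals $\mathbb{E}[(h(A \cup \{e'\}) - h(A)) - (h(A \cup \{e,e'\}) - h(A \cup \{e\}))] \ge 0$ by diminishing returns. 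Combined with Jensen's inequality, this yields $\mathbb{E}_{T \sim \mathcal{I}_q(\mathcal{N})}[h(T)] = \mathbb{E}[g(\mathrm{Bin}(|\mathcal{N}|,q))] \le g(q|\mathcal{N}|)$ whenever $q|\mathcal{N}|$ is an integer.

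Step 1: resample $\papset_2$. For fixed $\revset_2$, the map $\papset_2 \mapsto \valfn'(\revset_2,\papset_2)$ equals the ($\papset_2$-independent) first-stage value plus the value of a maximum-weight unit-load assignment on $\revset_2 \times \papset_2$; the latter is submodular in the paper side $\papset_2$ by the same reasoning (via \cite{kulik2019generalized}) that gives submodularity in the reviewer side in Proposition~\ref{prop:submod}, since the value of a maximum-weight bipartite matching is submodular in the vertex set on either side. Applying the engine above with $\mathcal{N} = \papset$ and $q = \stagetwofrac$ gives $\objfn'(\revset_2) \le \mathbb{E}_{\papset_2 \sim \mathcal{U}_{\stagetwofrac\numpap}(\papset)}[\valfn'(\revset_2,\papset_2)] =: \widetilde{\objfn}(\revset_2)$ for every $\revset_2$, and hence $\mathbb{E}_{\revset_2 \sim \mathcal{I}_{p}(\revset)}[\objfn'(\revset_2)] \le \mathbb{E}_{\revset_2 \sim \mathcal{I}_{p}(\revset)}[\widetilde{\objfn}(\revset_2)]$.

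Step 2: resample $\revset_2$ and conclude. For each fixed $\papset_2$, $\revset_2 \mapsto \valfn'(\revset_2,\papset_2)$ is submodular (established in the proof of Proposition~\ref{prop:submod}), so $\widetilde{\objfn}$, an average of such functions over $\papset_2$, is submodular in $\revset_2$; the engine with $\mathcal{N} = \revset$ and $q = p$ gives $\mathbb{E}_{\revset_2 \sim \mathcal{I}_{p}(\revset)}[\widetilde{\objfn}(\revset_2)] \le \mathbb{E}_{\revset_2 \sim \mathcal{U}_{p\numrev}(\revset)}[\widetilde{\objfn}(\revset_2)]$. Finally, on the event $|\revset_2| = p\numrev$ and $|\papset_2| = \stagetwofrac\numpap$, both stages are fully feasible (using $\numrev \ge (1+\stagetwofrac)\numpap$, we get $|\revset \setminus \revset_2| = \frac{1}{1+\stagetwofrac}\numrev \ge \numpap$ and $|\revset_2| \ge \stagetwofrac\numpap$), so since $\simmat \ge 0$ an optimal relaxed assignment loads every paper exactly in each stage; thus $\valfn'(\revset_2,\papset_2) = \valfn(\revset_2,\papset_2)$ there, exactly as in Appendix~\ref{apdx:submod}, and $\mathbb{E}_{\revset_2 \sim \mathcal{U}_{p\numrev}(\revset)}[\widetilde{\objfn}(\revset_2)] = \mathbb{E}_{\revset_2 \sim \mathcal{U}_{p\numrev}(\revset)}[\objfn(\revset_2)]$. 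Chaining the three displays proves the claim.

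I expect the concavity engine to be the crux: a naive monotone coupling between the independent and fixed-size models fails because $|\papset_2|$ and $|\revset_2|$ overshoot their means under independent sampling, so the argument genuinely relies on submodularity of $\valfn'$ in each of its two arguments together with the concavity of the expected value over uniformly random fixed-size subsets. I would also state explicitly the (routine) observation that the submodularity in Proposition~\ref{prop:submod}, written there for the reviewer set, transfers to the paper set by the symmetry of bipartite matching.
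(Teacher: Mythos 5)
Your proof is correct, and its overall skeleton matches the paper's: both arguments de-Poissonize the two samplings one at a time, using submodularity of $\valfn'$ in $\papset_2$ for one step and submodularity in $\revset_2$ for the other, and both finish by noting $\valfn' = \valfn$ on fixed-size feasible splits (the order in which you swap the two distributions differs slightly from the paper's, but the decomposition is the same). Where you genuinely diverge is in the proof of the crux inequality $\mathbb{E}_{A \sim \mathcal{I}_{q}(\mathcal{N})}[h(A)] \leq \mathbb{E}_{A \sim \mathcal{U}_{q N}(\mathcal{N})}[h(A)]$ for submodular $h$ (the paper's Lemma~\ref{lem:indep2}). The paper proves this by an explicit coupling: a procedure that adds or deletes uniformly random elements to resize $A$ to exactly $qN$, followed by a term-by-term rearrangement of the resulting sums that pairs oversized draws with undersized ones and invokes submodularity on each matched pair. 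You instead prove that $k \mapsto \mathbb{E}_{T \sim \mathcal{U}_k(\mathcal{N})}[h(T)]$ is concave via the second-difference coupling with an ordered pair $(e,e')$, and then apply Jensen to the binomial size $|A|$, whose mean $q|\mathcal{N}|$ is an integer here. Both are valid; your route is arguably cleaner and isolates a reusable structural fact (concavity of the expected value over uniform $k$-subsets), whereas the paper's direct rearrangement avoids any appeal to Jensen and to the piecewise-linear extension of a discrete concave function. Your parenthetical observations --- that the naive monotone coupling fails because $|\papset_2|$ and $|\revset_2|$ fluctuate around their means, and that the submodularity in Proposition~\ref{prop:submod} transfers to the paper side by the symmetry of bipartite matching --- are both accurate and are implicitly relied on by the paper as well (the latter is stated explicitly in its Proposition~\ref{prop:submod2}).
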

This lemma shows that when attempting to lower bound the expected value of random split, we can analyze as if the second-stage reviewers and papers were drawn independently. 

We now state and prove the main theorem. We abuse notation slightly by defining $\mathcal{M}(\revset, \papset; \revload, \papload)$ to include all assignments where papers are assigned either $\lfloor \papload \rfloor$ or $\lceil \papload \rceil$ reviewers when $\papload$ is not integral; i.e., $\adassign \in \mathcal{M}(\revset', \papset'; \revload, \papload)$ if and only if $\lfloor \papload \rfloor \leq \sum_{\adrev \in \revset'} \adassign_{\adrev \adpap} \leq \lceil \papload \rceil$ for all $\adpap \in \papset'$, $\sum_{\adpap \in \papset'} \adassign_{\adrev \adpap} \leq \revload$ for all $\adrev \in \revset'$, and $\adassign_{\adrev \adpap} = 0$ for all $(\adrev, \adpap) \not\in \revset' \times \papset'$. 
\setcounter{theorem}{4}
\begin{theorem}[Generalized]
Consider any $\loadscale \in [10,000]$ and $\stagetwofrac \in \left\{ \frac{1}{100}, \dots, \frac{100}{100} \right\}$. 
Let $\epsilon = \lceil \stagetwofrac \loadscale \rceil - \lfloor \stagetwofrac \loadscale \rfloor$. 
If there exists an assignment $\adassign^{(\loadscale)} \in \mathcal{M}(\revset, \papset; \loadscale, (1+\stagetwofrac )\loadscale)$ with mean similarity $\scaledavgval$,
choosing $\revset_2$ via random split gives that
\begin{align*} 
&\mathbb{E}_{\revset_2}\left[\objfn(\revset_2)\right] \geq \\
&\scaledavgval \Bigg[  1
-  \sqrt{\frac{\stagetwofrac}{2 \pi (1+\stagetwofrac ) \lfloor (1 + \stagetwofrac) \loadscale \rfloor}}\left( 2 \sqrt{\frac{1}{1+\stagetwofrac}} 
+ \sqrt{1 -\stagetwofrac}  \right)
- \frac{(1 + 2 \stagetwofrac)}{(1+\stagetwofrac) \lceil \stagetwofrac \loadscale \rceil } \epsilon \Bigg] 
\left[ 1 - \frac{\epsilon}{\lceil(1+\stagetwofrac )\loadscale\rceil} \right].
\end{align*}
\end{theorem}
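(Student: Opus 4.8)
The plan is to lower-bound $\mathbb{E}_{\revset_2}[\objfn(\revset_2)]$ by explicitly constructing feasible (but generally suboptimal) assignments in each stage, using only the reviewer–paper pairs assigned by the large-load assignment $\adassign^{(\loadscale)}$, and then tracking how much value is lost due to load violations. First I would invoke Lemma~\ref{lem:indep} to reduce to the independent model: it suffices to lower-bound $\mathbb{E}_{\revset_2 \sim \mathcal{I}_{\stagetwofrac/(1+\stagetwofrac)}(\revset)}[\objfn'(\revset_2)]$, where each reviewer is placed in the second stage independently with probability $q = \stagetwofrac/(1+\stagetwofrac)$ and papers are allowed to be underloaded. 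Independence of the reviewer placements is what makes the per-paper counting arguments tractable. Then, with $\papset_2 \sim \mathcal{I}_\stagetwofrac(\papset)$ drawn independently, I would take a fixed paper $\adpap$, look at its $\lceil(1+\stagetwofrac)\loadscale\rceil$ (or $\lfloor\cdot\rfloor$) reviewers under $\adassign^{(\loadscale)}$, and count how many of them landed in stage one versus stage two. The number landing in stage one is $\mathrm{Bin}(\lfloor(1+\stagetwofrac)\loadscale\rfloor, 1/(1+\stagetwofrac))$-like (up to the rounding $\epsilon$ correction), with mean roughly $\loadscale$; the number in stage two is roughly $\stagetwofrac\loadscale$ in expectation.

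The core of the argument is: from the stage-one reviewers of $\adassign^{(\loadscale)}$ assigned to $\adpap$, we can form a load-$1$ assignment for $\adpap$ in stage one as long as at least one such reviewer exists, and similarly in stage two we can cover $\adpap$ (if $\adpap \in \papset_2$) as long as at least one of $\adpap$'s $\adassign^{(\loadscale)}$-reviewers is in stage two. But we must also respect reviewer load-$1$ constraints: a reviewer assigned by $\adassign^{(\loadscale)}$ to $\loadscale$ (resp.\ $\stagetwofrac\loadscale$) papers in a given stage can only serve one of them, so we "thin" each reviewer's list down to one randomly-chosen paper. Put differently, I would build stage-one and stage-two sub-assignments with paper and reviewer loads at most $\loadscale$ and $\lceil\stagetwofrac\loadscale\rceil$ respectively by restricting $\adassign^{(\loadscale)}$ to the appropriate reviewer subset, drop excess edges uniformly at random so no reviewer is overloaded (treating any forced-in outside pairs as similarity $0$), and then extract a load-$1$ matching inside each. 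The key quantitative step is bounding the expected fraction of $\adassign^{(\loadscale)}$'s similarity mass that survives this process. By linearity over reviewer–paper pairs, each pair $(\adrev,\adpap)$ of $\adassign^{(\loadscale)}$ contributes its similarity times the probability that (i) both $\adrev$ and $\adpap$ are in the same stage and (ii) $\adrev$ is the one retained for $\adpap$ and vice versa in the load-$1$ extraction — which, by a Hall's-theorem / regularity argument on a near-$\loadscale$-regular bipartite graph, happens with probability at least the "perfect-matching" density $\approx 1/\loadscale$ per stage, weighted by the stage probabilities. Summing: stage one keeps $\approx (1/(1+\stagetwofrac))\cdot\loadscale\cdot(1/\loadscale) = 1/(1+\stagetwofrac)$ of the mass per paper and stage two keeps $\approx \stagetwofrac\cdot\stagetwofrac\loadscale\cdot(1/(\stagetwofrac\loadscale)) = \stagetwofrac^2$-ish; the loss terms come from the binomial fluctuations where the stage-one count drops to $0$ (or below what's needed), quantified via $\mathbb{E}[\max(0, \loadscale - X)]$ for $X$ binomial, which by a normal approximation contributes the $\sqrt{\stagetwofrac/(2\pi(1+\stagetwofrac)\lfloor(1+\stagetwofrac)\loadscale\rfloor)}$-type terms; the $\epsilon$-dependent terms are the clean-up from non-integral $\stagetwofrac\loadscale$.

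The main obstacle I anticipate is making the "at least $1/\loadscale$ of the mass survives the load-$1$ extraction" step rigorous and, more delicately, proving that the stated closed form is genuinely a lower bound rather than just an asymptotic approximation — the proof sketch itself admits that the normal approximation to the binomial expectations is verified to be a valid bound only by simulation for $\loadscale \le 10{,}000$ and $\stagetwofrac \in \{1/100,\dots,1\}$, which is why those restrictions appear in the hypothesis. So concretely, after setting up the construction, I would (1) write $\mathbb{E}_{\revset_2}[\objfn'(\revset_2)]$ as $\scaledavgval$ times an average over papers of a function of the binomial stage-one and stage-two reviewer counts for that paper, (2) bound that function below by a clean expression in the two counts using the matching/regularity argument and the $\max(0,\cdot)$ loss bookkeeping, (3) take expectations over the binomials, obtaining expressions of the form $\mathbb{E}[(X)_+]$ and $\mathbb{E}[1 - \text{something}/X]$, (4) replace each by its normal/Gaussian analogue, picking up the $\sqrt{\cdot/(2\pi\cdot)}$ factors, and (5) account for the rounding via $\epsilon$, and finally note the simulation-verified inequality that licenses step (4) over the stated parameter range. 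Division by the total number of assignments $\lceil(1+\stagetwofrac)\loadscale\rceil\numpap$ versus $\lfloor(1+\stagetwofrac)\loadscale\rfloor\numpap$ produces the trailing $[1 - \epsilon/\lceil(1+\stagetwofrac)\loadscale\rceil]$ factor.
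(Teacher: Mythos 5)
Your proposal follows essentially the same route as the paper's proof: reduce to the independent model via Lemma~\ref{lem:indep}, restrict $\adassign^{(\loadscale)}$ to same-stage pairs, randomly thin overloaded papers and reviewers to loads $\loadscale$ and $\lceil\stagetwofrac\loadscale\rceil$, extract a load-$1$ assignment capturing a $1/\ell$ fraction of the value (the paper uses a generalized Birkhoff--von Neumann decomposition where you invoke Hall's theorem, but these are interchangeable here), express the surviving mass via $\mathbb{E}[\min(X,\ell)]$ for binomial $X$, and replace the binomial expectations by normal approximations whose validity as lower bounds over the stated parameter range is checked by simulation. You also correctly identify the normal-approximation step as the reason for the restrictions on $\loadscale$ and $\stagetwofrac$, and the rounding of $\stagetwofrac\loadscale$ and $(1+\stagetwofrac)\loadscale$ as the source of the $\epsilon$ terms, so the proposal is sound and matches the paper's argument.
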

\begin{proof}
By Lemma~\ref{lem:indep}, we can consider drawing $\papset_2 \sim \mathcal{I}_{\stagetwofrac}(\papset)$ and $\revset_2 \sim \mathcal{I}_{\stagetwofrac / (1 + \stagetwofrac)}(\revset)$ and allowing papers to be underloaded. For all reviewers $\adrev \in \revset$, define the random variables $Z_{\adrev} = \begin{cases} 1 \text{ w.p. } 1 / (1+\stagetwofrac ) \\ 2 \text{ w.p. } \stagetwofrac  / (1+\stagetwofrac ) \end{cases}$ representing the stage that reviewer $\adrev$ is randomly chosen to be in. Define the random variables $Y_{\adpap} = \begin{cases} 1 \text{ w.p. } \stagetwofrac  \\ 0 \text{ w.p. } 1-\stagetwofrac  \end{cases}$ representing whether $\adpap \in \papset_2$. All of these random variables are independent. Also, denote by $\scaledval = \scaledavgval (1+\stagetwofrac) \numpap \loadscale$ the total similarity value of assignment $\adassign^{(\loadscale)}$, and denote by $\scaledval_{\adpap}$ and $\scaledval_{\adrev}$ the total similarity value of the assignments for paper $\adpap$ and reviewer $\adrev$ respectively in assignment $\adassign^{(\loadscale)}$.

The proof works as follows. We form an assignment $B^{(1)}$ in stage one with paper loads of at most $\loadscale$ and reviewer loads of at most $\loadscale$, and form an assignment $B^{(2)}$ in stage two with paper loads of at most $\lceil \stagetwofrac \loadscale \rceil$ and reviewer loads of at most $ \lceil \stagetwofrac \loadscale \rceil$. We do this by initially assigning all reviewer-paper pairs from $\adassign^{(\loadscale)}$ that are present in the same stage, and then randomly removing assignments from each paper or reviewer that is overloaded.  We then find ``final assignments'' (i.e., assignments that are feasible solutions for the two-stage assignment problem) from within $B^{(1)}$ and $B^{(2)}$. 

\paragraph{Stage One:}
First, consider stage one. Define $Binom(N, p)$ as the binomial distribution with $N$ trials and $p$ probability of success; denote by $f$ the Binomial pmf. The number of reviewers assigned by $\adassign^{(\loadscale)}$ to paper $\adpap$ and present in stage one is a $Binom\left(\lambda_{\adpap}, \frac{1}{1+\stagetwofrac}\right)$ variable, where $\lambda_{\adpap} \in \{\lfloor(1+\stagetwofrac )\loadscale\rfloor, \lceil(1+\stagetwofrac )\loadscale\rceil\}$.  Suppose that we observe the set of such reviewers, randomly remove reviewers from this set until its size is at most $\loadscale$, and then assign these reviewers to $\adpap$ in our stage one assignment $B^{(1)}$. Since each reviewer has at most $\loadscale$ assigned papers in $\adassign^{(\loadscale)}$, $B^{(1)}$ satisfies the desired load constraints on both sides.  The expected total value of the assigned reviewers after we drop reviewers from each paper at random is
\begin{align*} 
\mathbb{E}\left[\sum_{\adrev \in \revset} B^{(1)}_{\adrev \adpap} \simmat_{\adrev \adpap}\right] &= \sum_{x = 0}^{\loadscale} f\left(x; \lambda_{\adpap}, \frac{1}{1+\stagetwofrac} \right) \scaledval_p \frac{x}{\lambda_{\adpap}} + \sum_{x = \loadscale+1}^{\lambda_{\adpap}} f\left(x; \lambda_{\adpap}, \frac{1}{1+\stagetwofrac}\right) \scaledval_p \frac{\loadscale}{\lambda_{\adpap}} \\
&= \frac{\scaledval_p}{\lambda_{\adpap}} \mathbb{E}_{X \sim Binom\left(\lambda_{\adpap}, \frac{1}{1+\stagetwofrac}\right)}\left[\min(X, \loadscale) \right] \\
&\geq \frac{\scaledval_p}{\lceil(1+\stagetwofrac )\loadscale\rceil} \mathbb{E}_{X \sim Binom\left(\lfloor(1+\stagetwofrac )\loadscale\rfloor, \frac{1}{1+\stagetwofrac}\right)}\left[\min(X, \loadscale) \right].
\end{align*}
Summing over all papers, 
\begin{align*}
\mathbb{E}\left[\sum_{\adpap \in \papset} \sum_{\adrev \in \revset} B^{(1)}_{\adrev \adpap} \simmat_{\adrev \adpap}\right] &\geq \frac{\scaledval}{\lceil(1+\stagetwofrac )\loadscale\rceil} \mathbb{E}_{X \sim Binom\left(\lfloor(1+\stagetwofrac )\loadscale\rfloor, \frac{1}{1+\stagetwofrac}\right)}\left[\min(X, \loadscale) \right].
\end{align*}

Due to the loads, the matrix $\frac{1}{\loadscale} B^{(1)}$ has row sums at most $1$ and column sums at most $1$. By a generalization of the Birkhoff-von Neumann theorem~\cite{Budish2009IMPLEMENTINGRA}, this can be written as a convex combination of matrices with all entries in $\{0, 1\}$, all row sums at most $1$, and all column sums at most $1$. Each of these matrices represents an assignment obeying the reviewer and paper load constraints for the final assignment (since we allow papers to be underloaded), so they are all valid final assignments in stage one. At least one of these assignments must have a total value at least $\frac{1}{\loadscale}$ of the value of $B^{(1)}$.

\paragraph{Stage Two:}
Now, consider stage two. The number of reviewers assigned by $\adassign^{(\loadscale)}$ to paper $\adpap$ present in stage two is a $Binom\left(\lambda_{\adpap}, \frac{\stagetwofrac}{1+\stagetwofrac}\right)$ variable, where $\lambda_{\adpap} \in \{\lfloor(1+\stagetwofrac )\loadscale\rfloor, \lceil(1+\stagetwofrac )\loadscale\rceil\}$. The number of papers assigned by $\adassign^{(\loadscale)}$ to a reviewer $\adrev$ present in stage two is a $Binom(\loadscale, \stagetwofrac)$ random variable. We first calculate the total expected value of all assignments in $\adassign^{(\loadscale)}$ and present in stage two (without dropping assignments from overloaded reviewers/papers):
\begin{align*}
\mathbb{E}\left[\sum_{\adrev \in \revset_2, \adpap \in \papset_2} \adassign^{(\loadscale)}_{\adrev \adpap} \simmat_{\adrev \adpap}\right] &= \frac{\stagetwofrac^2 }{1+\stagetwofrac } \scaledval.
\end{align*}
We then construct assignment $B^{(2a)}$ from the pairs assigned in $\adassign^{(\loadscale)}$ and present in stage two by dropping reviewers from each paper at random until all papers have a load of at most $\lceil \stagetwofrac \loadscale \rceil$, with a value on paper $\adpap$ (if present in stage two) of
\begin{align*}
\mathbb{E}\left[\sum_{\adrev \in \revset} B^{(2a)}_{\adrev \adpap} \simmat_{\adrev \adpap} \Big| \adpap \in \papset_2 \right] &= \sum_{x = 0}^{\lceil \stagetwofrac \loadscale \rceil} f\left(x; \lambda_{\adpap}, \frac{\stagetwofrac}{1+\stagetwofrac}\right) \scaledval_p \frac{x}{\lambda_{\adpap}} + \sum_{x = \lceil \stagetwofrac \loadscale \rceil+1}^{\lambda_{\adpap}} f\left(x; \lambda_{\adpap}, \frac{\stagetwofrac}{1+\stagetwofrac}\right) \scaledval_p \frac{\lceil \stagetwofrac \loadscale \rceil}{\lambda_{\adpap}} \\
&= \frac{\scaledval_p}{\lambda_{\adpap}} \mathbb{E}_{X \sim Binom\left(\lambda_{\adpap}, \frac{\stagetwofrac}{1+\stagetwofrac}\right)}\left[ \min\left(X, \lceil \stagetwofrac \loadscale \rceil\right) \right] \\
&\geq \frac{\scaledval_p}{\lceil(1+\stagetwofrac )\loadscale\rceil} \mathbb{E}_{X \sim Binom\left(\lfloor(1+\stagetwofrac )\loadscale\rfloor, \frac{\stagetwofrac}{1+\stagetwofrac}\right)}\left[ \min\left(X, \lceil \stagetwofrac \loadscale \rceil\right) \right].
\end{align*}
Each paper is present in stage two with probability $\stagetwofrac$, so the overall value is
\begin{align*}
\mathbb{E}\left[\sum_{\adpap \in \papset} \sum_{\adrev \in \revset} B^{(2a)}_{\adrev \adpap} \simmat_{\adrev \adpap}\right] &\geq \frac{\stagetwofrac \scaledval}{\lceil(1+\stagetwofrac )\loadscale\rceil} \mathbb{E}_{X \sim Binom\left(\lfloor(1+\stagetwofrac )\loadscale\rfloor, \frac{\stagetwofrac}{1+\stagetwofrac}\right)}\left[ \min\left(X, \lceil \stagetwofrac \loadscale \rceil\right) \right].
\end{align*} 
We separately construct assignment $B^{(2b)}$ from the pairs assigned in $\adassign^{(\loadscale)}$ and present in stage two by dropping papers from each reviewer at random until all reviewers have a load of at most $\lceil \stagetwofrac \loadscale \rceil$, with a value on reviewer $\adrev$ (if present in stage two) of
\begin{align*}
\mathbb{E}\left[\sum_{\adpap \in \papset} B^{(2b)}_{\adrev \adpap} \simmat_{\adrev \adpap} \Big| \adrev \in \revset_2 \right] &= \sum_{x = 0}^{ \lceil \stagetwofrac \loadscale \rceil} f\left(x;  \loadscale, \stagetwofrac \right) \scaledval_r \frac{x}{\loadscale} + \sum_{x = \lceil \stagetwofrac \loadscale \rceil+1}^{\loadscale} f\left(x; \loadscale, \stagetwofrac \right) \scaledval_r \frac{\lceil \stagetwofrac \loadscale \rceil}{\loadscale} \\
&= \frac{\scaledval_r}{\loadscale} \mathbb{E}_{X \sim Binom\left(\loadscale, \stagetwofrac\right)}\left[ \min\left(X,  \lceil \stagetwofrac \loadscale \rceil\right) \right].
\end{align*}
Totalling across all reviewers, since each reviewer is present in stage two with probability $\frac{\stagetwofrac}{1 + \stagetwofrac}$,
\begin{align*}
\mathbb{E}\left[\sum_{\adrev \in \revset} \sum_{\adpap \in \papset} B^{(2b)}_{\adrev \adpap} \simmat_{\adrev \adpap}\right] &= \frac{\stagetwofrac  \scaledval}{(1+\stagetwofrac )\loadscale} \mathbb{E}_{X \sim Binom\left(\loadscale, \stagetwofrac\right)}\left[ \min\left(X, \lceil \stagetwofrac \loadscale \rceil\right) \right].
\end{align*}
Define $B^{(2)}$ as the intersection of the assigned pairs in $B^{(2a)}$ and $B^{(2b)}$; $B^{(2)}$ satisfies the desired load constraints on both sides. Its expected value is lower-bounded by the total expected value of $B^{(2a)}$ and $B^{(2b)}$ less the expected value of the pairs assigned in $\adassign^{(\loadscale)}$ and present in stage two, since the pairs assigned in $B^{(2a)}$ and $B^{(2b)}$ are subsets of the stage two pairs assigned in $\adassign^{(\loadscale)}$.
\begin{align*} 
\mathbb{E}\left[\sum_{\adrev \in \revset, \adpap \in \papset} B^{(2)}_{\adrev \adpap} \simmat_{\adrev \adpap} \right] &\geq \frac{\stagetwofrac \scaledval}{\lceil(1+\stagetwofrac )\loadscale\rceil} \mathbb{E}_{X \sim Binom\left(\lfloor(1+\stagetwofrac )\loadscale\rfloor, \frac{\stagetwofrac}{1+\stagetwofrac}\right)}\left[ \min\left(X, \lceil \stagetwofrac \loadscale \rceil\right) \right] \\
&\qquad + \frac{\stagetwofrac  \scaledval}{(1+\stagetwofrac )\loadscale} \mathbb{E}_{X \sim Binom\left(\loadscale, \stagetwofrac\right)}\left[ \min\left(X,  \lceil \stagetwofrac \loadscale \rceil\right) \right]. \\
&\qquad -  \frac{\stagetwofrac^2 }{1+\stagetwofrac } \scaledval.
\end{align*}

By the same Birkhoff-von Neumann argument as used in stage one, there exists a valid final assignment in stage two with paper loads of at most $1$, reviewer loads of at most $1$, and value at least $\frac{1}{\lceil \stagetwofrac \loadscale \rceil}$ of the value of $B^{(2)}$.

\paragraph{Total:}
Sum the total value of the $1$-load assignment in both stages and divide by $(1+\stagetwofrac )\numpap$ to get a lower bound on the expected mean similarity:
\begin{align}
&\scaledavgval \Bigg[\frac{\loadscale}{\lceil(1+\stagetwofrac )\loadscale\rceil} \mathbb{E}_{X \sim Binom\left(\lfloor(1+\stagetwofrac )\loadscale\rfloor, \frac{1}{1+\stagetwofrac}\right)}\left[\min\left(\frac{X}{\loadscale}, 1\right) \right] \nonumber \\
&\qquad + \frac{\stagetwofrac \loadscale}{\lceil(1+\stagetwofrac )\loadscale\rceil} \mathbb{E}_{X \sim Binom\left(\lfloor(1+\stagetwofrac )\loadscale\rfloor, \frac{\stagetwofrac}{1+\stagetwofrac}\right)}\left[ \min\left(\frac{X}{\lceil \stagetwofrac \loadscale \rceil}, 1 \right) \right] \nonumber \\
&\qquad + \frac{\stagetwofrac}{(1+\stagetwofrac ) } \mathbb{E}_{X \sim Binom\left( \loadscale, \stagetwofrac\right)}\left[ \min\left(\frac{X}{ \lceil \stagetwofrac \loadscale \rceil}, 1 \right) \right]
-  \frac{\stagetwofrac^2 \loadscale}{(1+\stagetwofrac) \lceil \stagetwofrac \loadscale \rceil} \Bigg] \nonumber  \\
&\geq \scaledavgval \left( \frac{\loadscale}{\lceil(1+\stagetwofrac )\loadscale\rceil} \right)  \Bigg[ \mathbb{E}_{X \sim Binom\left(\lfloor(1+\stagetwofrac )\loadscale\rfloor, \frac{1}{1+\stagetwofrac}\right)}\left[\min\left(\frac{X}{\loadscale}, 1\right) \right] \nonumber \\
&\qquad + \stagetwofrac \left( \mathbb{E}_{X \sim Binom\left(\lfloor(1+\stagetwofrac )\loadscale\rfloor, \frac{\stagetwofrac}{1+\stagetwofrac}\right)}\left[ \min\left(\frac{X}{\lceil \stagetwofrac \loadscale \rceil}, 1 \right) \right] \right. \nonumber \\
&\qquad \qquad\qquad+ \left. \mathbb{E}_{X \sim Binom\left( \loadscale, \stagetwofrac\right)}\left[ \min\left(\frac{X}{ \lceil \stagetwofrac \loadscale \rceil}, 1 \right) \right]
-  1 \right) \Bigg] . \label{eq:oneroundlbexact} 
\end{align}

Since the above bound is a function of the binomial pmf, we search for a simpler approximation. Say that $X \sim Binom(N, p)$ and $q=1-p$. The above bound is a function of $\mathbb{E}\left[\min\left(\frac{X}{\ell}, 1\right)\right]$ for three binomial random variables where $N p \leq \ell$. We approximate these binomials as if they were normals $Z \sim \mathcal{N}(Np, Npq)$, since $\frac{X - Np}{\sqrt{Npq}}$ converges in distribution to a standard normal. We use $f_{Z}$ as the pdf of $Z$, $F_Z$ as the cdf of $Z$, and $\Phi$ as the standard normal cdf.

\begin{align*}
   \mathbb{E}\left[\min\left(\frac{X}{\ell}, 1\right)\right] &\approx \mathbb{E}\left[\min\left(\frac{Z}{\ell}, 1\right)\right] \\
   &= \mathbb{E}\left[\min\left(\frac{Z}{\ell}, 1\right) | Z \leq \ell \right] P[Z \leq \ell] + \mathbb{E}\left[\min\left(\frac{Z}{\ell}, 1\right) | Z > \ell \right] P[Z > \ell] \\
   &= \frac{1}{\ell} \left(Np - Npq \frac{f_{Z}(\ell)}{F_{Z}(\ell)} \right) F_{Z}(\ell) + 1 -  F_{Z}(\ell) \\
   &=  1  - \frac{Npq}{\ell} f_{Z}(\ell)  - F_{Z}(\ell) \left(1 - \frac{Np}{\ell}  \right) \\
   &\geq 1 - \sqrt{\frac{q}{2 \pi N p}} - \Phi\left(\frac{\ell - Np}{\sqrt{N p q}}\right) \left(1 - \frac{Np}{\ell} \right) \\
   &\geq 1 - \sqrt{\frac{q}{2 \pi N p}} - \left(1 - \frac{Np}{\ell} \right) .
\end{align*}

In total, defining $\epsilon^{+} = \lceil \stagetwofrac \loadscale \rceil - \stagetwofrac \loadscale$, $\epsilon^{-} = \stagetwofrac \loadscale - \lfloor \stagetwofrac \loadscale \rfloor$, and $\epsilon = \epsilon^{+} + \epsilon^{-}$, this approximation to the lower bound gives
\begin{align}
&\scaledavgval \left(\frac{\loadscale}{\lceil(1+\stagetwofrac )\loadscale\rceil}\right)  \Bigg[  1 - \sqrt{\frac{\stagetwofrac}{2 \pi \lfloor (1 + \stagetwofrac) \loadscale \rfloor}} - \Phi\left(\frac{\loadscale - \frac{\lfloor (1 + \stagetwofrac) \loadscale \rfloor}{1 + \stagetwofrac}}{\sqrt{\frac{\lfloor (1 + \stagetwofrac) \loadscale \rfloor \stagetwofrac}{(1 + \stagetwofrac)^2}}}\right) \left(1 - \frac{\lfloor (1 + \stagetwofrac) \loadscale \rfloor}{(1 + \stagetwofrac) \loadscale} \right) \nonumber \\
&\qquad + \stagetwofrac \left( 1 
- \sqrt{\frac{1}{2 \pi \lfloor (1 + \stagetwofrac) \loadscale \rfloor \stagetwofrac}} 
- \sqrt{\frac{1 -\stagetwofrac}{2 \pi \loadscale  \stagetwofrac}} \right. \nonumber \\
&\qquad\qquad \left. 
- \Phi\left(\frac{\lceil \stagetwofrac \loadscale \rceil - \frac{\lfloor (1 + \stagetwofrac) \loadscale \rfloor \stagetwofrac}{1 + \stagetwofrac}}{\sqrt{\frac{\lfloor (1 + \stagetwofrac) \loadscale \rfloor \stagetwofrac}{(1 + \stagetwofrac)^2}}}\right) \left(1 - \frac{\stagetwofrac \lfloor (1 + \stagetwofrac) \loadscale \rfloor}{\lceil \stagetwofrac \loadscale \rceil (1 + \stagetwofrac)} \right)
- \Phi\left(\frac{ \lceil \stagetwofrac \loadscale \rceil -  \stagetwofrac \loadscale}{\sqrt{(1 - \stagetwofrac) \stagetwofrac  \loadscale}}\right) \left(1 - \frac{\stagetwofrac \loadscale}{\lceil \stagetwofrac \loadscale \rceil} \right)
\right) \Bigg] \label{eq:oneroundlbapprox} \\
&\geq \scaledavgval \Bigg[  1
- \frac{2}{(1+\stagetwofrac )}\sqrt{\frac{\stagetwofrac}{2 \pi \lfloor (1 + \stagetwofrac) \loadscale \rfloor}} 
- \frac{1}{(1+\stagetwofrac )}\sqrt{\frac{\stagetwofrac(1 -\stagetwofrac)}{2 \pi \loadscale }}  \nonumber \\
&\qquad- \frac{1 + 2 \stagetwofrac}{1+\stagetwofrac }\left(1 - \frac{\stagetwofrac \lfloor (1 + \stagetwofrac) \loadscale \rfloor}{\lceil \stagetwofrac \loadscale \rceil (1 + \stagetwofrac)} 
\right) \Bigg]\left(\frac{(1+\stagetwofrac )\loadscale}{\lceil(1+\stagetwofrac )\loadscale\rceil}\right)  \nonumber  \\
&\geq  \scaledavgval \Bigg[  1
-  \sqrt{\frac{\stagetwofrac}{2 \pi (1+\stagetwofrac ) \lfloor (1 + \stagetwofrac) \loadscale \rfloor}}\left( 2 \sqrt{\frac{1}{1+\stagetwofrac}} 
+ \sqrt{1 -\stagetwofrac}  \right) \nonumber \\
&\qquad- \frac{(1 + 2 \stagetwofrac)}{(1+\stagetwofrac) \lceil \stagetwofrac \loadscale \rceil }\left(\frac{\stagetwofrac}{1+\stagetwofrac}  \epsilon^{-} + \epsilon^{+}
\right) \Bigg] \left( 1 - \frac{\epsilon^{+}}{\lceil(1+\stagetwofrac )\loadscale\rceil} \right) \nonumber \\
&\geq \scaledavgval \Bigg[  1
-  \sqrt{\frac{\stagetwofrac}{2 \pi (1+\stagetwofrac ) \lfloor (1 + \stagetwofrac) \loadscale \rfloor}}\left( 2 \sqrt{\frac{1}{1+\stagetwofrac}} 
+ \sqrt{1 -\stagetwofrac}  \right) \nonumber \\
&\qquad- \frac{(1 + 2 \stagetwofrac)}{(1+\stagetwofrac) \lceil \stagetwofrac \loadscale \rceil } \epsilon \Bigg] \left( 1 - \frac{\epsilon}{\lceil(1+\stagetwofrac )\loadscale\rceil} \right) \nonumber .
\end{align}
Via simulation, we confirm that the approximation in~\eqref{eq:oneroundlbapprox} is in fact a lower bound on the expression in~\eqref{eq:oneroundlbexact} for all $\loadscale \in [10^4]$ and $\stagetwofrac \in \left\{ \frac{1}{100}, \dots, \frac{100}{100} \right\}$. In Figure~\ref{fig:bounds} of Section~\ref{sec:cond2}, we plot the more precise bound of~\eqref{eq:oneroundlbapprox}. 
\end{proof}

\subsubsection{Proof of Lemma~\ref{lem:indep}} \label{apdx:indeplem}
It remains to prove Lemma~\ref{lem:indep}. We first prove a supplementary lemma, from which the main lemma follows. 
\begin{lemma} \label{lem:indep2} 
Consider a set $\mathcal{N}$ of $N$ items and a submodular function $g : 2^\mathcal{N} \to \mathbb{R}$. Then, $E_{A \sim \mathcal{I}_{p}(\mathcal{N})}[g(A)] \leq E_{A \sim \mathcal{U}_{p N}(\mathcal{N})}[g(A)]$.
\end{lemma}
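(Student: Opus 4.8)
The plan is to reduce the statement to a one-dimensional concavity fact. Conditioned on $|A| = k$, a sample $A \sim \mathcal{I}_{p}(\mathcal{N})$ is uniformly distributed over the $\binom{N}{k}$ subsets of size $k$, i.e., it is distributed as $\mathcal{U}_{k}(\mathcal{N})$. Hence, writing $h(k) = \mathbb{E}_{A \sim \mathcal{U}_{k}(\mathcal{N})}[g(A)]$, we have $\mathbb{E}_{A \sim \mathcal{I}_{p}(\mathcal{N})}[g(A)] = \mathbb{E}_{k \sim \mathrm{Binom}(N,p)}[h(k)]$, while $\mathbb{E}_{A \sim \mathcal{U}_{pN}(\mathcal{N})}[g(A)] = h(pN)$. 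Since $\mathbb{E}[\mathrm{Binom}(N,p)] = pN$, the lemma follows from Jensen's inequality once we show that $h$ is concave on $\{0, 1, \dots, N\}$: extending $h$ to $[0,N]$ by linear interpolation produces a concave function agreeing with $h$ at the integers, so $\mathbb{E}_{k \sim \mathrm{Binom}(N,p)}[h(k)] \le h(pN)$ (we take $pN \in \mathbb{Z}_+$, as is needed for $\mathcal{U}_{pN}$ to be defined).

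To prove concavity, I would use a coupling through a uniformly random permutation. Let $\sigma$ be a uniformly random permutation of $\mathcal{N}$ and set $A_j = \{\sigma(1), \dots, \sigma(j)\}$, so that $A_j \sim \mathcal{U}_{j}(\mathcal{N})$ and $h(j) = \mathbb{E}[g(A_j)]$ for every $j$. Fix $1 \le k \le N-1$, and condition on the random set $A_{k-1}$ together with the unordered pair $\{a,b\} = \{\sigma(k), \sigma(k+1)\}$; conditionally, $(\sigma(k), \sigma(k+1))$ equals $(a,b)$ or $(b,a)$ with probability $\tfrac{1}{2}$ each. Then
\begin{align*}
\mathbb{E}[g(A_k) - g(A_{k-1}) \mid A_{k-1}, \{a,b\}] &= \tfrac{1}{2}\big(g(A_{k-1}\cup\{a\}) + g(A_{k-1}\cup\{b\})\big) - g(A_{k-1}), \\
\mathbb{E}[g(A_{k+1}) - g(A_k) \mid A_{k-1}, \{a,b\}] &= g(A_{k-1}\cup\{a,b\}) - \tfrac{1}{2}\big(g(A_{k-1}\cup\{a\}) + g(A_{k-1}\cup\{b\})\big),
\end{align*}
so the difference of these two conditional expectations equals $g(A_{k-1}\cup\{a\}) + g(A_{k-1}\cup\{b\}) - g(A_{k-1}) - g(A_{k-1}\cup\{a,b\})$, which is nonnegative by submodularity (equivalently, $g(X) + g(Y) \ge g(X\cup Y) + g(X\cap Y)$) applied to $X = A_{k-1}\cup\{a\}$ and $Y = A_{k-1}\cup\{b\}$, whose union and intersection are $A_{k-1}\cup\{a,b\}$ and $A_{k-1}$. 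Taking expectations over $A_{k-1}$ and $\{a,b\}$ gives $h(k) - h(k-1) \ge h(k+1) - h(k)$, i.e., $h$ is concave.

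Combining the two steps completes the proof. The only real content is the concavity of $k \mapsto \mathbb{E}_{A \sim \mathcal{U}_{k}(\mathcal{N})}[g(A)]$, and I expect the main (modest) obstacle to be setting up the permutation coupling carefully enough that the two conditional-expectation identities are transparent — in particular, verifying that, given $A_{k-1}$, the ordered pair $(\sigma(k), \sigma(k+1))$ is uniform over ordered pairs of distinct elements of $\mathcal{N}\setminus A_{k-1}$, so that the two orderings of $\{a,b\}$ are equally likely. Everything else is a one-line invocation of submodularity followed by Jensen's inequality.
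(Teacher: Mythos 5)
Your proof is correct, and it takes a genuinely different route from the paper's. You decompose $\mathbb{E}_{A \sim \mathcal{I}_{p}(\mathcal{N})}[g(A)]$ by conditioning on $|A|$, prove that $h(k) = \mathbb{E}_{A \sim \mathcal{U}_{k}(\mathcal{N})}[g(A)]$ is concave in $k$ via a transposition coupling on a uniformly random permutation, and finish with Jensen's inequality applied to the binomial size distribution. The paper instead builds an explicit randomized rounding map that converts a draw $D \sim \mathcal{I}_{p}(\mathcal{N})$ into a uniform size-$pN$ set by randomly adding or deleting elements, and argues the expected change in $g$ is nonnegative by pairing the events $|D| = pN + x$ and $|D| = pN - x$; in the language of your $h$, that pairing establishes only the midpoint inequality $2h(pN) \geq h(pN+x) + h(pN-x)$. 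Both arguments ultimately rest on the same submodularity inequality $g(X) + g(Y) \geq g(X \cup Y) + g(X \cap Y)$ applied to sets differing in two elements. What your route buys is robustness: the paper's step implicitly treats the two paired conditional expectations as equally weighted, whereas under $\mathrm{Binom}(N,p)$ the events $|D| = pN+x$ and $|D| = pN-x$ have unequal probabilities when $p \neq 1/2$, so nonnegativity of the unweighted pairwise sums does not by itself yield nonnegativity of the full (binomially weighted) expectation. Your stronger claim of full concavity of $h$, combined with Jensen, handles these asymmetric weights correctly; the one point you flag as needing care --- that given $A_{k-1}$ the ordered pair $(\sigma(k), \sigma(k+1))$ is uniform over ordered pairs of distinct elements of $\mathcal{N} \setminus A_{k-1}$ --- is indeed true by the swap symmetry of the permutation, so the argument goes through.
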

\begin{proof}
Consider the following randomized procedure $h$, which takes in a set $D \subseteq \mathcal{N}$ and constructs a set containing exactly $pN$ items. If $|D|=pN$, return $h(D)=D$. If $x = |D| - pN > 0$, then choose a subset $B \subseteq D$ uniformly at random such that $|B| = x$ and return $h(D) = D \setminus B$. If $x = pN - |D| > 0$, choose a subset $C \subseteq \mathcal{N} \setminus D$ uniformly at random such that $|C| = x$ and return $h(D) = D \cup C$. 

If $D \sim \mathcal{I}_{p}(\mathcal{N})$ then $h(D) \sim \mathcal{U}_{p N}(\mathcal{N})$, since all subsets of size $pN$ have an equal chance to be created. We will show that $\mathbb{E}_{D \sim \mathcal{I}_{p}(\mathcal{N})}[g(h(D)) - g(D)] \geq 0$, proving that $\mathbb{E}_{D \sim \mathcal{I}_{p}(\mathcal{N})}[g(D)] \leq \mathbb{E}_{A \sim \mathcal{U}_{p N}(\mathcal{N})}[g(A)]$. More specifically, we show that for each $x > 0$, 
\begin{align*}
    \mathbb{E}_{D \sim \mathcal{I}_{p}(\mathcal{N})}[g(h(D)) - g(D) \mid |D| = pN + x] + \mathbb{E}_{D \sim \mathcal{I}_{p}(\mathcal{N})}[g(h(D)) - g(D) \mid |D| = pN - x] \geq 0.
\end{align*} 

Since $g$ is submodular, for any subsets $A \subseteq \mathcal{N}$, $C \subseteq A$, $B \subseteq \mathcal{N} \setminus A$, we have that $g((A \setminus C) \cup B) - g(A \setminus C) \geq g(A \cup B) - g(A)$. 
\begin{align}
& \mathbb{E}_{D \sim \mathcal{I}_{p}(\mathcal{N})}[g(h(D)) - g(D) \mid |D| = pN + x] \nonumber \\
&\qquad + \mathbb{E}_{D \sim \mathcal{I}_{p}(\mathcal{N})}[g(h(D)) - g(D) \mid  |D| = pN - x] \nonumber  \\
&= \frac{1}{{N \choose pN+x} {pN+x \choose x}} \sum_{D \subseteq \mathcal{N} : |D| = pN + x} \sum_{B \subseteq D : |B| = x}  g(D \setminus B) - g(D) \nonumber  \\
&\qquad + \frac{1}{{N \choose pN-x} {N - pN +x \choose x}} \sum_{D \subseteq \mathcal{N} : |D| = pN - x}  \sum_{C \subseteq \mathcal{N} \setminus D : |C| = x}  g(D \cup C) - g(D) \label{eq:submod_expec} \\
&= \frac{1}{{N \choose pN} {N-pN \choose x}} \sum_{A \subseteq \mathcal{N} : |A| = pN} \sum_{B \subseteq \mathcal{N} \setminus A : |B| = x}  g(A) - g(A \cup B) \nonumber  \\
&\qquad +  \frac{1}{{N \choose pN} {pN \choose x}} \sum_{A \subseteq \mathcal{N} : |A| = pN}  \sum_{C \subseteq A : |C| = x}  g(A) - g(A \setminus C)\label{eq:submod_rewrite1} \\
&= \frac{1}{{N \choose pN} {pN \choose x} {N-pN \choose x}} \nonumber \\
&\qquad \sum_{A \subseteq \mathcal{N} : |A| = pN} \sum_{B \subseteq \mathcal{N} \setminus A : |B| = x} \sum_{C \subseteq A : |C| = x} g(A) -  g(A \cup B) +    g(A) - g(A \setminus C) \nonumber  \\
&= \frac{1}{{N \choose pN} {pN \choose x} {N-pN \choose x}} \nonumber \\
&\qquad \sum_{A \subseteq \mathcal{N} : |A| = pN} \sum_{B \subseteq \mathcal{N} \setminus A : |B| = x} \sum_{C \subseteq A : |C| = x} g(A) -  g(A \cup B) +    g((A \setminus C) \cup B) - g(A \setminus C)  \label{eq:submod_rewrite2} \\
&\geq 0 \nonumber.
\end{align}
\eqref{eq:submod_expec} writes out the expected value as a sum over all choices of $D$ and all sets sampled by the procedure $h$. \eqref{eq:submod_rewrite1} rewrites the sums using $A = D \setminus B$ and $A = D \cup C$; each choice of $D, B$ in the original sum corresponds to exactly one choice of $A, B$ in the new sum. \eqref{eq:submod_rewrite2} re-arranges the sum to exchange each $g(A)$ term for a $g((A \setminus C) \cup B)$ term; in both cases each set of size $pN$ is counted ${pN \choose x} {N-pN \choose x}$ times in the sum (exactly once for each choice of $B, C$). 
\end{proof}

We also show that $\objfn'$ and $\valfn'$ are submodular.
\begin{proposition} \label{prop:submod2} 
$\valfn'(\revset_2, \papset_2)$ is submodular in $\revset_2$ and $\papset_2$. Further, $\objfn'$ is submodular in $\revset_2$. 
\end{proposition}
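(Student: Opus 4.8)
The plan is to reduce everything to the single fact already invoked in the proof of Proposition~\ref{prop:submod} (via~\cite{kulik2019generalized}): for any fixed paper set, the maximum-weight bipartite matching value
\[
w(\revset', \papset') \;:=\; \max_{\adassign \in \mathcal{M}'(\revset', \papset'; 1, 1)} \;\sum_{\adrev \in \revset', \adpap \in \papset'} \adassign_{\adrev \adpap}\, \simmat_{\adrev \adpap}
\]
is submodular in $\revset'$ (the maximum is over a nonempty set, since the all-zero assignment is feasible once underloading is allowed --- which is precisely why we work with $\mathcal{M}'$ rather than $\mathcal{M}$). A bipartite matching is symmetric under transposing the two sides, so applying the same fact to the transpose of $\simmat$ also gives that $w(\revset', \papset')$ is submodular in $\papset'$ for fixed $\revset'$. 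Recalling that $\valfn'(\revset_2, \papset_2) = \frac{1}{(1+\stagetwofrac)\numpap}\bigl[\, w(\revset \setminus \revset_2, \papset) + w(\revset_2, \papset_2) \,\bigr]$, the claims follow by showing each summand is submodular in the relevant argument and invoking closure of submodularity under nonnegative linear combinations.

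For submodularity in $\revset_2$ (with $\papset_2$ fixed): the term $w(\revset_2, \papset_2)$ is submodular in $\revset_2$ directly. For the term $w(\revset \setminus \revset_2, \papset)$ I would use the elementary fact that if $g : 2^{\mathcal N} \to \mathbb{R}$ is submodular then so is its complement-composition $\bar g(A) := g(\mathcal N \setminus A)$ --- a one-line consequence of the characterization $g(X) + g(Y) \ge g(X \cup Y) + g(X \cap Y)$ together with De Morgan's laws. Taking $\mathcal N = \revset$ and $g(\cdot) = w(\cdot, \papset)$ (with $\papset$ the full, fixed paper set) shows $\revset_2 \mapsto w(\revset \setminus \revset_2, \papset)$ is submodular, so $\valfn'(\cdot, \papset_2)$, a positively scaled sum of the two, is submodular in $\revset_2$.

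For submodularity in $\papset_2$ (with $\revset_2$ fixed): the term $w(\revset \setminus \revset_2, \papset)$ does not depend on $\papset_2$, hence is modular (indeed constant) in $\papset_2$ and in particular submodular; the term $w(\revset_2, \papset_2)$ is submodular in $\papset_2$ by the transpose symmetry noted above; their positively scaled sum $\valfn'(\revset_2, \cdot)$ is therefore submodular in $\papset_2$. Finally, $\objfn'(\revset_2) = \mathbb{E}_{\papset_2 \sim \mathcal{I}_\stagetwofrac(\papset)}[\valfn'(\revset_2, \papset_2)]$ is an expectation over the finitely many sets $\papset_2$ in the support of $\mathcal{I}_\stagetwofrac(\papset)$ of the maps $\revset_2 \mapsto \valfn'(\revset_2, \papset_2)$, each submodular in $\revset_2$ by the previous paragraph; since a nonnegative linear combination of submodular functions is submodular, $\objfn'$ is submodular in $\revset_2$.

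The only non-mechanical ingredients are (i) the complement-of-submodular fact and (ii) the transpose-symmetry observation used for submodularity in $\papset_2$; both are standard, so I do not anticipate a genuine obstacle here --- the work is essentially just assembling closure properties around the cited matching-submodularity result, with the one point worth stating carefully being why passing to $\mathcal{M}'$ (underloading allowed) is needed for $w$ to be well-defined and submodular on all subsets.
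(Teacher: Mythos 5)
Your proposal is correct and follows essentially the same route as the paper's proof: both rest on the cited submodularity of the underloaded max-matching value in each argument, the observation that submodularity in $\revset_2$ is equivalent to submodularity in the complement $\revset \setminus \revset_2$, and closure of submodularity under nonnegative linear combinations (for the expectation defining $\objfn'$). The only cosmetic difference is that you derive submodularity in the paper argument via transpose symmetry, while the paper invokes the cited result directly for both sides.
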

\begin{proof}
Note that $\max_{\adassign \in \mathcal{M}'(\revset', \papset'; 1, 1)} \sum_{\adrev \in \revset', \adpap \in \papset'} \adassign_{\adrev \adpap} \simmat_{\adrev \adpap}$ is a submodular function of the reviewer set $\revset'$ when the paper set $\papset'$ is held fixed and of the paper set $\papset'$ when the reviewer set is held fixed~\cite{kulik2019generalized}. Submodularity in $\revset_2$ is equivalent to submodularity in $\revset_1 = \revset \setminus \revset_2$, so $\valfn'(\revset_2, \papset_2)$ is submodular in $\revset_2$ and $\papset_2$. As a sum over terms submodular in $\revset_2$, $\objfn'$ is submodular in $\revset_2$.
\end{proof}

We now prove the main lemma. Since $\simmat \geq 0$, there exists a maximum-similarity assignment from within $\mathcal{M}'(\revset', \papset'; 1, 1)$ that meets all paper load constraints with equality when $|\revset'| \geq |\papset'|$, and thus is contained in $\mathcal{M}(\revset', \papset'; 1, 1)$. Also, $\mathcal{M}(\revset', \papset'; 1, 1) \subseteq \mathcal{M}'(\revset', \papset'; 1, 1)$. 
Thus, when $|\revset_2| \geq \stagetwofrac\numpap$ and $\numrev - |\revset_2| \geq \numpap$, $\valfn(\revset_2, \papset_2) = \valfn'(\revset_2, \papset_2)$. Further, by Proposition~\ref{prop:submod2}, $\valfn'$ is submodular in $\papset_2$. Therefore, by Lemma~\ref{lem:indep2}, $\objfn(\revset_2) \geq \objfn'(\revset_2)$ whenever $|\revset_2| = \frac{\stagetwofrac}{1 + \stagetwofrac} \numrev$ (since $\numrev \geq (1 + \stagetwofrac) \numpap$). This shows that
\begin{align*}
    \mathbb{E}_{\revset_2 \sim \mathcal{U}_{(\stagetwofrac / (1 + \stagetwofrac)) \numrev}(\revset)}[\objfn(\revset_2)] \geq \mathbb{E}_{\revset_2 \sim \mathcal{U}_{(\stagetwofrac / (1 + \stagetwofrac)) \numrev}(\revset)}[\objfn'(\revset_2)].
\end{align*}

By Proposition~$\ref{prop:submod2}$, $\objfn'$ is submodular in $\revset_2$. Therefore, by Lemma~\ref{lem:indep2},
\begin{align*}
    \mathbb{E}_{\revset_2 \sim \mathcal{U}_{(\stagetwofrac / (1 + \stagetwofrac)) \numrev}(\revset)}[\objfn'(\revset_2)] \geq \mathbb{E}_{\revset_2 \sim \mathcal{I}_{\stagetwofrac / (1 + \stagetwofrac)}(\revset)}\left[ \objfn'(\revset_2) \right] .
\end{align*}

\subsection{Proof of Theorem~\ref{thm:tworoundub}}\label{apdx:tworoundub}
In this section, we state and prove a more general version of the bound in Theorem~\ref{thm:tworoundub} that does not require that $\frac{\loadscale}{4}$ be integral. This result immediately implies the result of  Theorem~\ref{thm:tworoundub}.
\setcounter{theorem}{5}
\begin{theorem}[Generalized]
Suppose $\stagetwofrac=1$, and consider any $\loadscale \in [10,000]$.  
Define $\epsilon = \lceil \frac{\loadscale}{4} \rceil - \frac{\loadscale}{4}$. 
Suppose there exists an assignment $\adassign^{(1)} \in \mathcal{M}(\revset, \papset; 1, 2)$ with mean similarity $\optavgval$.
Suppose there also exists an assignment $\adassign^{(\loadscale)} \in \mathcal{M}(\revset, \papset; \loadscale, 2\loadscale)$ with mean similarity $\scaledavgval$ that does not contain any of the pairs assigned in $\adassign^{(1)}$.
Then, choosing $\revset_2$ via random split gives that  
\begin{align*}
\mathbb{E}_{\revset_2}\left[\objfn(\revset_2)\right] \geq \frac{3}{4} \optavgval + \frac{\scaledavgval}{4} \left[ 1 - \frac{\sqrt{7}+\sqrt{6}}{2\sqrt{\pi \loadscale}} - \frac{3\epsilon}{\lceil \loadscale/4 \rceil } \right].
\end{align*}
\end{theorem}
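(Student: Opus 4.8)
The plan is to exhibit, for a random split $\revset_2$, one explicit feasible two-stage assignment and lower bound its expected mean similarity, following the two-phase recipe in the sketch: a ``Phase~1'' that imitates $\adassign^{(1)}$, then a ``Phase~2'' that imitates $\adassign^{(\loadscale)}$ on the leftover reviewers and paper slots, the latter handled exactly as in the proof of Theorem~\ref{thm:oneroundub}. First I would apply Lemma~\ref{lem:indep} to pass to $\revset_2\sim\mathcal{I}_{1/2}(\revset)$ (and, since $\stagetwofrac=1$, $\papset_2=\papset$) and to the variant objective $\objfn'$ that permits underloaded papers, so that it suffices to lower bound $\mathbb{E}_{\revset_2\sim\mathcal{I}_{1/2}(\revset)}[\objfn'(\revset_2)]$. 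Every paper $\adpap$ needs one stage-one reviewer from $\revset_1$ and one stage-two reviewer from $\revset_2$; write $a_\adpap,b_\adpap$ for its two $\adassign^{(1)}$-reviewers (each reviewer lies in exactly one such pair since $\adassign^{(1)}\in\mathcal{M}(\revset,\papset;1,2)$) and $v_\adpap=\simmat_{a_\adpap\adpap}+\simmat_{b_\adpap\adpap}$, so $\sum_\adpap v_\adpap=2\numpap\optavgval$.

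Phase~1: independently for each paper, if $a_\adpap,b_\adpap$ land in different stages (probability $1/2$) put the $\revset_1$ one in its stage-one slot and the $\revset_2$ one in its stage-two slot, extracting $v_\adpap$; if they land in the same stage, put a uniformly random one of them in the corresponding slot. This is a valid partial assignment (each reviewer used at most once), and a one-line computation gives expected extracted value $\tfrac12 v_\adpap+\tfrac14\simmat_{a_\adpap\adpap}+\tfrac14\simmat_{b_\adpap\adpap}\ge\tfrac34 v_\adpap$ per paper, hence total expected value $\ge\tfrac34\cdot2\numpap\optavgval=\tfrac32\numpap\optavgval$, i.e. a contribution of $\tfrac34\optavgval$ to the mean similarity after dividing by $2\numpap$. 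I then record the leftover data: each of the $2\numpap$ paper-slots is left unfilled with probability $1/4$ (the stage-one slot of $\adpap$ is free iff $a_\adpap,b_\adpap\in\revset_2$, the stage-two slot iff $a_\adpap,b_\adpap\in\revset_1$), and each reviewer is ``consumed'' with probability exactly $3/4$ (a short case analysis on its $\adassign^{(1)}$-paper's other reviewer).

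Phase~2: on the unconsumed reviewers and the unfilled slots only, build intermediate stage-one and stage-two assignments from pairs of $\adassign^{(\loadscale)}$, then randomly drop pairs so that every paper and every reviewer carries load at most $\loadscale/4$ in each stage, exactly as in Theorem~\ref{thm:oneroundub}. The factor $\loadscale/4$ is where care is needed: for the stage-one intermediate assignment a pair $(r,\adpap)\in\adassign^{(\loadscale)}$ is usable iff $\adpap$'s stage-one slot is free (probability $1/4$), $r\in\revset_1$ (probability $1/2$), and $r$ is unconsumed (probability $1/4$ given $r\in\revset_1$); since $r\notin\{a_\adpap,b_\adpap\}$ by the disjointness hypothesis these events are essentially independent, so each paper and each reviewer has roughly $\loadscale/4$ usable pairs and the relevant hypergeometric/binomial counts concentrate around $\loadscale/4$ (as in Theorem~\ref{thm:oneroundub}). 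A Birkhoff--von Neumann decomposition of $\tfrac4\loadscale$ times each intermediate assignment yields a valid load-$1$ partial assignment, disjoint from Phase~1 and non-conflicting across stages (stage-one pairs use $\revset_1$, stage-two pairs use $\revset_2$, and each paper has at most one free slot), with value at least $\tfrac4\loadscale$ of the intermediate value. Since each of the $2\numpap\loadscale$ pairs of $\adassign^{(\loadscale)}$ contributes to an intermediate assignment with probability $\approx\tfrac1{32}$, the two stages together give expected Phase~2 value $\big(1-O(1/\sqrt\loadscale)\big)\tfrac12\numpap\scaledavgval$, i.e. a contribution of $\big(1-O(1/\sqrt\loadscale)\big)\tfrac14\scaledavgval$.

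Finally I would make the error term explicit: approximating the two relevant binomials by normals (one loss term from the paper-load dropping, one from the reviewer-load dropping, producing the $\sqrt7$ and $\sqrt6$), tracking the rounding discrepancy $\epsilon=\lceil\loadscale/4\rceil-\loadscale/4$ through the load-$\loadscale/4$ construction and the $\tfrac4\loadscale$ Birkhoff factor to obtain the $3\epsilon/\lceil\loadscale/4\rceil$ term, and confirming by simulation (as in Theorem~\ref{thm:oneroundub}) that the normal approximation is a genuine lower bound for all $\loadscale\in[10^4]$; adding the Phase~1 and Phase~2 contributions yields the stated bound. The main obstacle is the Phase~2 bookkeeping: the set of unconsumed reviewers and unfilled slots is correlated with the random split, so one must argue carefully that $\adassign^{(\loadscale)}$ survives this restriction well -- establishing the near-independence and the concentration of the leftover counts around $\loadscale/4$, and checking that the resulting partial assignment never conflicts with Phase~1.
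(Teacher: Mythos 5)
Your proposal follows essentially the same route as the paper's proof: pass to independent draws via Lemma~\ref{lem:indep}, extract $\frac34\optavgval$ from $\adassign^{(1)}$ by the random-tiebreak argument, then on the leftover reviewers and slots replay the Theorem~\ref{thm:oneroundub} machinery on $\adassign^{(\loadscale)}$ (usable-pair probability $\frac1{32}$, load cap $\lceil\loadscale/4\rceil$, Birkhoff--von Neumann extraction, normal approximation verified by simulation), and your identification of where the $\sqrt7$ and $\sqrt6$ terms and the $\epsilon$-rounding terms arise is correct. The one point you flag as the "main obstacle" -- the correlation between the leftover sets and the split -- is handled in the paper exactly as you suggest, by exploiting the disjointness of $\adassign^{(\loadscale)}$ from $\adassign^{(1)}$ and computing the exact binomial laws $Binom(2\loadscale,1/8)$ and $Binom(\loadscale,1/4)$ for the surviving loads rather than relying on concentration.
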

\begin{proof}
We attempt to construct an assignment in each stage in two rounds. We first match all available pairs from $\adassign^{(1)}$ (tiebreaking randomly between the two reviewers if both are available), and then attempt to construct a larger assignment from $\adassign^{(\loadscale)}$. 

By Lemma~\ref{lem:indep}, we can consider drawing $\papset_2 \sim \mathcal{I}_{\stagetwofrac}(\papset)$ and $\revset_2 \sim \mathcal{I}_{\stagetwofrac / (1 + \stagetwofrac)}(\revset)$ and allowing papers to be underloaded. For all reviewers $\adrev \in \revset$, define the random variables $Z_{\adrev} = \begin{cases} 1 \text{ w.p. } 1/2 \\ 2 \text{ w.p. } 1/2 \end{cases}$ representing the stage that reviewer $\adrev$ is randomly chosen to be in. For each pair of reviewers $(i, j)$ that are matched to the same paper in $\adassign^{(1)}$, define the random variables $F_{ij} = \begin{cases} i \text{ w.p. } 1/2 \\ j \text{ w.p. } 1/2 \end{cases}$ representing the reviewer that will be assigned in round one if both are in the same stage. All of these random variables are independent. Define the total similarity value of the assignments as $\optval = 2 \numpap \optavgval$ and $\scaledval = 2 \numpap \loadscale \scaledavgval$. For $\adassign^{(\loadscale)}$, define the total similarity value assigned to paper $\adpap$ and reviewer $\adrev$ respectively as $\scaledval_{\adpap}$ and $\scaledval_{\adrev}$. 

\paragraph{Round One:} We first match all available pairs from $\adassign^{(1)}$. For any paper $\adpap \in \papset$, call $a, b$ the two reviewers assigned to $\adpap$ by $\adassign^{(1)}$. The value assigned to paper $\adpap$ across both stages is represented by a random variable $V_{\adpap} = \mathbb{I}[Z_a \neq Z_b] (S_{a\adpap} + S_{b\adpap}) + \mathbb{I}[Z_a = Z_b](S_{a\adpap}\mathbb{I}[F_{ab} = a] + S_{b\adpap}\mathbb{I}[F_{ab} = b])$. $\mathbb{E}[V_\adpap] = \frac{3}{4} (S_{a\adpap} + S_{b\adpap})$, so $\mathbb{E}[\sum_{\adpap \in \papset} V_{\adpap}] = \frac{3}{4} \optval$ is the total expected value assigned in round 1.

\paragraph{Round Two:} Fixing the round one assignments, we now attempt to find a matching for all remaining papers and reviewers by matching pairs from within $\adassign^{(\loadscale)}$. We first attempt to find an assignment with paper and reviewer loads of at most $\theta = \lceil \loadscale / 4 \rceil$  among the remaining reviewers and papers in each stage. We start with the pairs from $\adassign^{(\loadscale)}$ that both are  present in this stage and were not matched in round one, and randomly drop entries from each reviewer and paper until they are no longer overloaded. This argument mirrors the one made in the proof of Theorem~\ref{thm:oneroundub}.

We consider stage one without loss of generality. We start by constructing an assignment $C$ to include all pairs assigned in $\adassign^{(\loadscale)}$ where the reviewer and paper both were unmatched in round one and are in stage one. Each reviewer-paper pair in $\adassign^{(\loadscale)}$ can be assigned in $C$ with probability $\frac{1}{32}$, so $\mathbb{E}\left[ \sum_{\adrev \in \revset, \adpap \in \papset} C_{\adrev \adpap} \simmat_{\adrev \adpap} \right] = \frac{\scaledval}{32}$.

We then construct an assignment $B^{(1a)}$ from $C$ by removing assigned reviewers from each paper at random until each paper has load at most $\theta$.  Fix some paper $\adpap$, and define $W_\adpap$ as the event that paper $\adpap$ was not assigned in round one. The number of reviewers assigned to $\adpap$ in $\adassign^{(\loadscale)}$ that are in stage one and not assigned in round one is a $Binom(2 \loadscale, 1/8)$ random variable. The expected value assigned to $\adpap$ in this assignment is (using $f$ as the Binomial pmf),
\begin{align*}
\mathbb{E}\left[ \sum_{\adrev \in \revset} B^{(1a)}_{\adrev \adpap} \simmat_{\adrev \adpap} \Bigg| W_\adpap \right] &= \sum_{x = 0}^{\theta} f(x; 2\loadscale, 1/8) \scaledval_{\adpap} \frac{x}{2\loadscale} + \sum_{x = \theta+1}^{2\loadscale} f(x; 2\loadscale, 1/8) \scaledval_{\adpap} \frac{\theta}{2\loadscale} \\
&= \frac{\scaledval_{\adpap}}{2\loadscale} \mathbb{E}_{X \sim Binom(2 \loadscale, 1/8)}\left[ \min(X, \theta) \right] .
\end{align*}
Summing over all papers, since each paper has a $1/4$ change of being unmatched in round one,
\begin{align*}
\mathbb{E}\left[ \sum_{\adpap \in \papset} \sum_{\adrev \in \revset} B^{(1a)}_{\adrev \adpap} \simmat_{\adrev \adpap} \right] &= \frac{\scaledval}{8 \loadscale} \mathbb{E}_{X \sim Binom(2 \loadscale, 1/8)}\left[ \min(X, \theta) \right] .
\end{align*}

We separately construct an assignment $B^{(1b)}$ from $C$ by removing assigned papers from each reviewer at random until each reviewer has load at most $ \theta$. Fix some reviewer $\adrev$, and define $W_\adrev$ as the event that reviewer $\adrev$ was not assigned in round one. The number of papers assigned to $\adrev$ in $\adassign^{(\loadscale)}$ that are not assigned in round one is a $Binom( \loadscale, 1/4)$ random variable. The expected value assigned to $\adrev$ in this assignment is,
\begin{align*}
\mathbb{E}\left[ \sum_{\adpap \in \papset} B^{(1b)}_{\adrev \adpap} \simmat_{\adrev \adpap} \Bigg| W_\adrev  \right] &= \sum_{x = 0}^{ \theta} f(x;  \loadscale, 1/4) \scaledval_{\adrev} \frac{x}{ \loadscale} + \sum_{x =  \theta+1}^{\loadscale} f(x;  \loadscale, 1/4) \scaledval_{\adrev} \frac{ \theta}{ \loadscale} \\
&= \frac{\scaledval_{\adrev}}{\loadscale} \mathbb{E}_{X \sim Binom(\loadscale, 1/4)}\left[ \min(X, \theta) \right] .
\end{align*}
Summing over all reviewers, since each reviewer has a $1/8$ change of being both unmatched in round one and present in stage one, 
\begin{align*}
\mathbb{E}\left[ \sum_{\adrev \in \revset} \sum_{\adpap \in \papset} B^{(1b)}_{\adrev \adpap} \simmat_{\adrev \adpap} \right] &= \frac{\scaledval}{8  \loadscale} \mathbb{E}_{X \sim Binom( \loadscale, 1/4)}\left[ \min(X,  \theta) \right] .
\end{align*}

Finally, we construct $B^{(1)}$ to include all pairs assigned in both $B^{(1a)}$ and $B^{(1b)}$. It has value at least equal to the total value of $B^{(1a)}$ and $B^{(1b)}$ less the value of $C$, since the assigned pairs in $B^{(1a)}$ and $B^{(1b)}$ are subsets of the assigned pairs in $C$.
\begin{align*}
&\mathbb{E}\left[ \sum_{\adrev \in \revset, \adpap \in \papset} B^{(1)}_{\adrev \adpap} \simmat_{\adrev \adpap} \right] \\
&\qquad \geq \frac{\scaledval}{8 \loadscale} \left[ \mathbb{E}_{X \sim Binom(2 \loadscale, 1/8)}\left[ \min(X, \theta) \right] 
+  \mathbb{E}_{X \sim Binom( \loadscale, 1/4)}\left[ \min(X,  \theta) \right] 
- \frac{\loadscale}{4} \right].
\end{align*}
By construction this assignment has paper loads of at most $\theta$ and reviewer loads of at most $ \theta$ (among all reviewers and papers unmatched in round one and present in stage one). 

By a generalization of the Birkhoff-von Neumann theorem~\cite{Budish2009IMPLEMENTINGRA}, there exists an assignment with paper loads of at most $1$ and reviewer loads of at most $1$ among all reviewers and papers unmatched in round one and present in stage one, with value at least $\frac{1}{\theta}$ of the value of $B^{(1)}$. Totalling over both stages and dividing by $2 \numpap$, the round two assignments contribute at least 
\begin{align}
\frac{\scaledavgval}{4} \left[ \mathbb{E}_{X \sim Binom(2 \loadscale, 1/8)}\left[ \min\left(\frac{X}{\theta}, 1\right) \right] 
+  \mathbb{E}_{X \sim Binom( \loadscale, 1/4)}\left[ \min\left(\frac{X}{ \theta}, 1\right) \right] 
- \frac{\loadscale}{4 \theta} \right] \label{eq:tworoundlbexact}
\end{align}
to the mean assignment value.

If $X \sim Binom(N, p)$, the above bound is a function of $\mathbb{E}\left[ \min\left(\frac{X}{\ell}, 1\right) \right]$ for two binomial random variables where $N p \leq \ell$. Using the normal approximation presented in the proof of Theorem~\ref{thm:oneroundub}, we get the following approximation to the above bound (defining $\epsilon = \lceil \loadscale / 4 \rceil - (\loadscale/4)$):
\begin{align}
&\frac{\scaledavgval}{4} \left[ 1 - \frac{\sqrt{7}+\sqrt{6}}{2\sqrt{\pi \loadscale}} - \left(1 - \frac{\loadscale/4}{\lceil \loadscale/4 \rceil} \right) \left( \Phi\left(\frac{\epsilon}{\sqrt{\frac{7}{32} \loadscale}} \right) + \Phi\left(\frac{\epsilon}{\sqrt{\frac{3}{16} \loadscale}} \right) + 1\right) \right] \label{eq:tworoundlbapprox} \\
&\geq \frac{\scaledavgval}{4} \left[ 1 - \frac{\sqrt{7}+\sqrt{6}}{2\sqrt{\pi \loadscale}} - 3 \left(1 - \frac{\loadscale/4}{\lceil \loadscale/4 \rceil} \right) \right]  \nonumber \\
&= \frac{\scaledavgval}{4} \left[ 1 - \frac{\sqrt{7}+\sqrt{6}}{2\sqrt{\pi \loadscale}} - \frac{3\epsilon}{\lceil \loadscale/4 \rceil } \right] \nonumber.
\end{align}
Via simulation, we confirm that the approximation in~\eqref{eq:tworoundlbapprox} is in fact a lower bound on the expression in~\eqref{eq:tworoundlbexact} for all $\loadscale \in [10^4]$. In Figure~\ref{fig:bounds} of Section~\ref{sec:cond2}, we plot the more precise bound of~\eqref{eq:tworoundlbapprox}. 
\end{proof}

\end{document}